\theoremstyle{plain}
\newtheorem{theorem}{Theorem}[section]
\theoremstyle{definition}
\newtheorem{assumption}[theorem]{Assumption}
\theoremstyle{remark}
\newtheorem{remark}[theorem]{Remark}
\DeclareMathOperator*{\argmax}{argmax}
\DeclareMathOperator*{\esssup}{ess\,sup}
\newcommand{\Sp}{\mathcal{S}}
\newcommand{\E}{\mathbb{E}}
\newcommand{\Prob}{\mathbb{P}}
\newcommand{\R}{\mathbb{R}}
\newcommand{\N}{\mathbb{N}}
\newcommand{\supp}{\mathrm{supp}}
\newcommand{\Ind}{\mathbbm{1}}
\newcommand{\ECE}{\mathrm{ECE}}
\newcommand\numberthis{\addtocounter{equation}{1}\tag{\theequation}}
\numberwithin{equation}{section}
\title{Reassessing How to Compare and Improve the Calibration of Machine Learning Models}
\author{Muthu Chidambaram \& Rong Ge \\
Department of Computer Science, Duke University 
}
\begin{document}

\maketitle

\begin{abstract}
A machine learning model is calibrated if its predicted probability for an outcome matches the observed frequency for that outcome conditional on the model prediction. This property has become increasingly important as the impact of machine learning models has continued to spread to various domains. As a result, there are now a dizzying number of recent papers on measuring and improving the calibration of (specifically deep learning) models. In this work, we reassess the reporting of calibration metrics in the recent literature. We show that there exist trivial recalibration approaches that can appear seemingly state-of-the-art unless calibration and prediction metrics (i.e. test accuracy) are accompanied by additional generalization metrics such as negative log-likelihood. We then use a calibration-based decomposition of Bregman divergences to develop a new extension to reliability diagrams that jointly visualizes calibration and generalization error, and show how our visualization can be used to detect trade-offs between calibration and generalization. Along the way, we prove novel results regarding the relationship between full calibration error and confidence calibration error for Bregman divergences. We also establish the consistency of the kernel regression estimator for calibration error used in our visualization approach, which generalizes existing consistency results in the literature.
\end{abstract}

\section{Introduction}
Standard machine learning models are trained to predict probability distributions over a set of possible actions or outcomes. Model-based decision-making is then typically done by using the action or outcome associated with the highest probability, and ideally one would like to interpret the model-predicted probability as a notion of confidence in the predicted action/outcome. 

In order for this confidence interpretation to be valid, it is crucial that the predicted probabilities are \textit{calibrated} \citep{Lichtenstein_Fischhoff_Phillips_1982, dawid1982well, degroot1983comparison}, or accurately reflect the true frequencies of the outcome conditional on the prediction. As an informal (classic) example, a calibrated weather prediction model would satisfy the property that we observe rain 80\% of the time on days for which our model predicted a $0.8$ probability of rain. 

As the applications of machine learning models - particularly deep learning models - continue to expand to include high-stakes areas such as medical image diagnoses \citep{Mehrtash2019ConfidenceCA, elmarakeby2021biologically, nogales2021} and self-driving cars \citep{hu2023planningoriented}, so too does the importance of having calibrated model probabilities. Unfortunately, the seminal empirical investigation of \citet{guo2017calibration} demonstrated that deep learning models can be poorly calibrated, largely due to overconfidence.

This observation has led to a number of follow-up works intended to improve model calibration using both training-time \citep{thulasidasan2019mixup, muller2020does, focalloss2021} and post-training methods \citep{joy2022sampledependent, gupta2022toplabel}. Comparing these proposed improvements, however, is non-trivial due to the fact that the measurement of calibration in practice is itself an active area of research \citep{nixon2019measuring, kumar2019verified, blasiok2023unifying}, and improvements with respect to one calibration measure do not necessarily indicate improvements with respect to another.

Even if we fix a choice of calibration measure, the matter is further complicated by the existence of trivially calibrated models, such as models whose confidence is always their test accuracy (see Section~\ref{sec:pitfalls}). Most works on improving calibration have approached these issues by choosing to report a number of different calibration metrics along with generalization metrics (e.g. negative log-likelihood or mean-squared error), but these choices vary greatly across works and are always non-exhaustive.


Motivated by this variance in calibration reporting, our work aims to answer the following questions:
\begin{itemize}
    \item Do there exist any systematic issues in the reporting of calibration in the recent literature?
    \item Is there a theoretically motivated choice of which calibration and generalization measures to report, as well as an efficient means of estimating and visualizing them jointly?
\end{itemize}

\subsection{Summary of Main Contributions and Takeaways}
In answering these questions, we make the following contributions.
\begin{enumerate}
    \item We identify multiple issues (identified in Sections \ref{sec:pitfalls}, \ref{sec:confcal}, and \ref{sec:calsharp}) in how calibration performance is reported in the context of multi-class classification, with the core issue being that many (even very recent) papers report only some notion(s) of calibration error along with test accuracy. With respect to these metrics, the trivial recalibration strategy of always predicting with the mean confidence obtained from a calibration set outperforms the most popular post-training recalibration methods.
    \item Based on this observation, we revisit the use of proper scoring rules for measuring calibration. We apply decomposition results for these scoring rules to show how we can motivate a calibration metric based on our choice of generalization metric, and in special cases how we can do the converse as well. Furthermore, we focus on proper scoring rules that are Bregman divergences and leverage the structure of these divergences to also prove new results regarding the relationship between full calibration and confidence calibration, with our main result being Lemma \ref{confbreglb} (confidence calibration error is a lower bound for full calibration error).
    \item Finally, we propose an extension to reliability diagrams for jointly visualizing calibration and generalization. We also prove the consistency of the estimation approach we use in our new visualization in Theorem \ref{consistencyprop}, thereby extending existing estimation results in the literature. In Section \ref{sec:experiments}, we show how our calibration-sharpness diagrams can provide a more granular comparison of recalibration methods when compared to traditional reliability diagrams (Figure \ref{relplots}). Code for our visualizations is available as the Python package \href{https://pypi.org/project/sharpcal/}{sharpcal}.
\end{enumerate}
The main takeaway of our results is that \textbf{the choice of calibration metric should be motivated by the choice of generalization metric or loss function} and that \textbf{both should be reported together} in order to ensure that improvements in model calibration are not at the cost of model sharpness.

\subsection{Related Work}\label{sec:relwork}
\textbf{Measuring calibration.} Expected calibration error (ECE) is almost certainly the most used calibration metric in the literature, and it is typically estimated using histogram binning  \citep{naeini2014binary}. Although originally estimated using uniformly spaced bins \citep{guo2017calibration}, different binning schemes (such as equal mass/quantile-based binning) and debiasing procedures have been proposed to improve estimation of ECE \citep{nixon2019measuring, kumar2019verified, roelofs2022mitigating}. As ECE is known to have a number of undesirable properties (e.g. discontinuous in the space of predictors), recent work has focused on modifying ECE to fix these issues \citep{blasiok2023smooth, chidambaram2024flawed}.

Alternatives to ECE include proper scoring rules \citep{gneiting2007strictly}, smooth calibration error \citep{kakadeF08}, maximum mean calibration error \citep{kumar2018trainable}, cumulative plots comparing labels and predicted probabilities \citep{arrieta2022metrics}, and hypothesis tests for miscalibration \citep{lee2022t}. Several of these approaches are analyzed under the recent theoretical framework of \citet{blasiok2023unifying}, which puts forth a general desiderata for calibration measures.

In practice, most true calibration measures are only efficient for binary classification, and measuring calibration in the multi-class classification setting requires making concessions on the definition of calibration considered. Towards this end, a number of different calibration notions intended to efficiently work in the multi-class setting have been proposed in the literature \citep{vaicenavicius2019evaluating, kull2019beyond, widmann2020calibration, zhao2021calibrating, gupta2022toplabel, gopalan2024computationally}. We elaborate more on binary vs. multi-class calibration in Section \ref{sec:prelim}. 

\textbf{Improving calibration.} Approaches for improving calibration can largely be categorized into post-training and training-time modifications. In the former category, standard approaches include histogram binning \citep{mincer1969, zadrozny2001obtaining}, isotonic regression \citep{zadrozny2002transforming}, Platt scaling \citep{Platt99probabilisticoutputs}, temperature scaling \citep{guo2017calibration}, and adaptive variants of temperature scaling \citep{joy2022sampledependent}. For the latter, data augmentation \citep{thulasidasan2019mixup, muller2020does} and modified versions of the cross-entropy loss \citep{focalloss2021} have been shown to lead to better calibrated models.

\textbf{Calibration and proper scoring rules.} Divergences obtained from strictly proper scores such as negative log-likelihood (NLL) and mean-squared error (MSE) often serve as metrics for quantifying generalization. A predictor that is optimal with respect to a strictly proper scoring rule has recovered the ground-truth conditional distribution, and is therefore necessarily calibrated. However, requiring optimality with respect to a proper scoring rule is a stronger condition than requiring calibration, and several works have thus explored how to decompose proper scoring rules into a calibration term and a sharpness/resolution term \citep{murphy1973new, gneiting2007probabilistic, brocker2008some, kull2015novel, pohle2020murphy}. The most related such work to ours is that of \citet{perezlebel2023calibrationestimatinggroupingloss}, which also puts forth the perspective that reporting calibration and accuracy is insufficient; however, they focus on a decomposition involving a grouping loss term whereas we focus on a sharpness term which leads to different visualizations and different takeaways. We provide a more detailed comparison to this work in Appendix \ref{sec:groupingloss}.

\section{Preliminaries}\label{sec:prelim}
\textbf{Notation.} We use $\overline{\R}$ to denote the extended real line. We use $e_i$ to denote the standard basis vector in $\R^d$ with non-zero $i$ coordinate. Given $n \in \N$, we use $[n]$ to denote the set $\{1, 2, ..., n\}$. For a function $g: \R^n \to \R^m$ we use $g^i(x)$ to denote the $i^{\text{th}}$ coordinate function of $g$. We use $\Delta^{k - 1}$ to denote the $(k-1)$-dimensional probability simplex in $\R^k$. For a probability distribution $\pi$ we use $\supp(\pi)$ to denote its support. Additionally, if $\pi$ corresponds to the joint distribution of two random variables $X$ and $Y$ (e.g. data and label), we use $\pi_X$ and $\pi_Y$ to denote the respective marginals and $\pi_{Y \mid X}$ to denote the regular conditional distribution of $Y$ given $X$. When $Y$ takes values over a finite set, we will identify $\pi_Y$ and $\pi_{Y \mid X = x}$ with vectors in $\Delta^{k - 1}$, and similarly identify $\pi_{Y \mid X}$ with the corresponding function from $\R^d$ to $\Delta^{k - 1}$. For a random variable $X$ that has a density with respect to the Lebesgue measure, we use $p_X(x)$ to denote its density. We use $\sigma(X)$ to denote the sigma-algebra generated by the random variable $X$.

\textbf{Calibration.} We restrict our attention to the classification setting, in which there exists a ground-truth distribution $\pi$ on $\R^d \times [k]$ and our goal is to recover the conditional distribution $\pi_{Y \mid X}$. We say a predictor $g: \R^d \to \Delta^{k - 1}$ is \textit{fully calibrated} if it satisfies the condition
\begin{align*}
    \Prob(Y \mid g(X) = v) = v, \numberthis \label{calcondition}
\end{align*}
where $\Prob(Y \mid g(X) = v)$ is the regular conditional distribution of $Y$ given $g(X)$ at $g(X) = v$. Informally, the conditional distribution of the label given our prediction matches our prediction.

In the case where $k = 2$ (binary classification), we can identify $g(X)$ with a single scalar and the condition \eqref{calcondition} is easy to (approximately) verify as it amounts to estimating a 1-D conditional expectation $\E[Y \mid g(X)]$. On the other hand, when $k$ is large and $g(X)$ has a density with respect to the Lebesgue measure on $\R^k$, the estimation problem becomes significantly more difficult due to the curse of dimensionality. 

For such cases, the typical approach taken in practice is to weaken \eqref{calcondition} to a lower dimensional condition that is easier to work with. To describe such approaches, we adopt the general framework of \citet{gupta2022toplabel}. Namely, given a predictor $g$ as before, we consider a classification function $c: \Delta^{k - 1} \to [k]$ and a confidence function $h: \Delta^{k - 1} \to [0, 1]$ that are used to post-process $g$. For such $c$ and $h$, we can define the weaker notion of \textit{confidence calibration} \citep{kull2019beyond} (also sometimes referred to as top-class or top-label calibration):
\begin{align*}
    \Prob(Y = c(g(X)) \mid h(g(X)) = p) = p. \numberthis \label{confcal}
\end{align*}
We refer to the specific case of $c(z) = \argmax_i z_i$ (assuming arbitrary tie-breaking) and $h(z) = \max_i z_i$ as \textit{standard confidence calibration}, since this is usually how confidence calibration appears in the literature. Additionally, to keep notation uncluttered, we will simply use $c(X)$ and $h(X)$ to denote $c(g(X))$ and $h(g(X))$ when $g$ is understood from context in the remainder of the paper. Confidence calibration is by far the most used notion of calibration for multi-class classification in the literature, and we will thus mostly focus our attention on the condition \eqref{confcal} moving forward. Alternative notions are discussed in detail in \citet{kull2019beyond, gupta2022toplabel, gopalan2024computationally}.

To evaluate how close a tuple $(c, h, g)$ is to satisfying \eqref{confcal}, we need a notion of calibration error. As mentioned earlier, the most common choice of calibration error is the Expected Calibration Error (ECE), which is simply the absolute deviation between the left and right-hand sides of \eqref{confcal}:
\begin{align*}
    \ECE_{\pi}(c, h, g) = \E\left[\abs{\Prob(Y = c(X) \mid h(X) = p) - p}\right]. \numberthis \label{ecedef}
\end{align*}
This version of ECE is also referred to as the $L^1$ ECE; raising the term inside the expectation to $q > 1$ yields the $L^q$ ECE raised to the power $q$. Intuitively, ECE measures how much our classification accuracy conditioned on our confidence differs from our confidence. In order to estimate the ECE, it is necessary to estimate the conditional probability $\Prob(Y = c(X) \mid h(X) = p)$. This is done using binning estimators \citep{naeini2014binary, nixon2019measuring} and more recently using kernel regression estimators \citep{popordanoska2022consistent, blasiok2023smooth, chidambaram2024flawed}. 

\textbf{Proper scoring rules.} Alternatives to ECE for measuring calibration error include divergences obtained from proper scoring rules, which are described in full detail in \citet{gneiting2007strictly}. For our purposes, it suffices to consider only the case of scoring rules $\Sp: \Delta^{k - 1} \times \Delta^{k - 1} \to \overline{\R}$, in which case $\Sp$ is called proper if $\Sp(u, u) \ge \Sp(u, v)$ for $u \neq v$ and \textit{strictly proper} if equality is only achieved when $v = u$. The divergence $d_{\Sp}$ associated with the scoring rule is then defined to be $d_{\Sp}(u, v) = \Sp(u, u) - \Sp(u, v)$ and is guaranteed to be nonnegative so long as $\Sp$ is proper.

Perhaps the most popular divergences obtained from (strictly) proper scoring rules in machine learning are the mean squared error (MSE, also referred to as Brier score) and the KL divergence (or just NLL); minimizing either one empirically ensures that $g$ recovers the ground-truth conditional distribution $\pi_{Y \mid X}$ given sufficiently many samples and appropriate regularity conditions. As mentioned, this is stronger than necessary for achieving \eqref{confcal}, which is partly why calibration errors such as ECE are appealing as they directly target the condition \eqref{confcal}.

\section{Pitfalls in Calibration Reporting}\label{sec:pitfalls}
We begin by first pointing out the well-known fact that even the full calibration condition of \eqref{confcal} can be achieved by trivial choices of $g$. Indeed, the choice $g(X) = \pi_Y$ satisfies \eqref{confcal} despite the fact that $g$ is a constant predictor.

A similar construction is possible for confidence calibration. We can define the confidence function $h$ to be such that $h(z) = \Prob(Y = \argmax_i g^i(X))$, i.e. $h$ is a constant equal to the test accuracy of $g$. Taking $c(z) = \argmax_i z_i$ as in the standard case, we once again have that $\ECE_{\pi}(c, h, g) = 0$, since:
\begin{align*}
    \Prob(Y = c(X) \mid h(X)) = \Prob(Y = c(X)) = h(X) \numberthis \label{trivialcal}
\end{align*}
by construction. Of course in practice, we do not actually know $\Prob(Y = \argmax_i g^i(X))$, but we can estimate it on a held-out calibration set and set $h$ to be this estimate. We refer to this approach as \textbf{mean replacement recalibarion (MRR)}, and it can be viewed as a special case of the modified histogram binning procedure of \citet{gupta2022toplabel}. Since MRR is defined purely through a modified $h$, there is no clear way for getting a vector in $\Delta^{k-1}$ for computing metrics such as NLL or MSE; for such cases, we simply set the non-predicted class probabilities to be $(1 - h(x))/(k - 1)$.

The ramifications of the MRR strategy seem to be under-appreciated in the literature - we have minimized calibration error \textit{without affecting the prediction} of $g$ by using a constant confidence function $h$. By separating the confidence function from the classification function, we have made it so that the test accuracy remains unchanged, so that this kind of trivial confidence behavior is undetectable by just reporting calibration error and accuracy. Even though MRR may seem extreme, one can imagine modifications where we effectively perform the procedure of replacing the predicted confidence with the calibration accuracy while doing something slightly more clever with the other predicted probabilities.

\textbf{To what extent is this a problem in the literature?} Although many papers do report multiple calibration and generalization metrics, ECE and accuracy are often the main (or even only) reported metrics in the main body of these papers. For example, even the influential empirical analysis of \citet{guo2017calibration} focuses entirely on ECE results in the main paper (NLL results are, however, available in the appendix). This is also largely true of the follow-up re-assessment of deep learning model calibration of \citet{minderer2021revisiting}, which focuses on the relationships between test accuracy and ECE for different model families. 

For works regarding the improvement of calibration, the work of \citet{thulasidasan2019mixup} (which motivated the study of Mixup for calibration) reports \textit{only} test accuracy, ECE, and a related overconfidence measure \textit{throughout the whole paper}. Similarly, the works of \citet{zhang2020mix} and \citet{wen2021combining} analyzing how ensembling strategies can affect calibration also only report test accuracy, ECE, and some ECE variants throughout their whole papers. Calibration has also been an important topic in the context of large language models (LLMs), and initial investigations into the calibration of LLMs for question-answering \citep{Jiang2020HowCW, desai2020calibration} also only reported test accuracy and ECE.

The above listing of papers is of course non-exhaustive; we have focused on a handful of influential papers (all have at least 100 citations) to merely point out that reporting (or at least focusing on) only ECE/similar and test accuracy is common practice. That being said, we are not at all calling into question the advances made by the above papers - many of their findings have been repeatedly corroborated by follow-up work. On the other hand, for new work on calibration we strongly advocate for always reporting errors derived from proper scoring rules (i.e. NLL or MSE) alongside pure calibration metrics such as ECE (we put forth a methodology for doing so in Sections \ref{sec:confcal} and \ref{sec:calsharp}). 

\textbf{Experiments.} To drive home the point, we revisit the experiments of \citet{guo2017calibration}, but focus on one of the more modern architectural choices of \citet{minderer2021revisiting}. Namely, we evaluate the most downloaded pretrained vision transformer (ViT) model\footnote{The model card is available at: \url{https://huggingface.co/timm/vit_base_patch16_224.augreg2_in21k_ft_in1k}.} \citep{dosovitskiy2020vit, steiner2021augreg} available through the \texttt{timm} library \citep{rw2019timm} with respect to binned ECE, binned ACE \citep{nixon2019measuring}, SmoothECE \citep{blasiok2023smooth}, negative log-likelihood, and MSE on ImageNet-1K-Val \citep{imagenet15russakovsky}. We split the data into 10,000 calibration samples (20\% split) and 40,000 test samples, and compare the unmodified test performance to temperature scaling (TS), histogram binning (HB), isotonic regression (IR), and our proposed MRR. We follow the same TS implementation as \citet{guo2017calibration} and also use 15 bins for the binning estimators (ECE, ACE, HB) to be comparable to the results in their work. The results shown in Table \ref{tab:mrr} are not sensitive to the choice of model; we show similar results for several other popular \texttt{timm} models in Appendix \ref{sec:addmrr}.
\begin{table}[h!]
    \centering
    \begin{tabular}{|c|c|c|c|c|c|c|}
         \hline
         Model & Test Accuracy & ECE & ACE & SmoothECE & NLL & MSE/Brier Score \\
         \hline
         ViT (Baseline) & \textbf{85.14} & 9.42 & 9.44 & 9.44 & 65.35 & 22.73 \\
         \hline
         ViT + TS & \textbf{85.14} & 2.59 & 5.46 & 2.60 & \textbf{56.83} & \textbf{21.95} \\
         \hline
         ViT + HB & 82.03 & 5.06 & 11.97 & 4.03 & $\infty$ & 27.17 \\
         \hline
         ViT + IR & 83.82 & 4.53 & 8.13 & 4.48 & $\infty$ & 24.13 \\
         \hline
         ViT + MRR & \textbf{85.14} & \textbf{0.18} & \textbf{0.18} & \textbf{0.18} & 144.66 & 27.51 \\
         \hline
    \end{tabular}
    \vspace{1mm}
    \caption{Comparison of different recalibration methods applied to pretrained ViT model on ImageNet-1K-Val. All metrics are scaled by a factor of 100 for readability. Histogram binning and isotonic regression have unbounded NLL as they can predict zero probability values for some classes.}
    \label{tab:mrr}
\end{table} 

\begin{remark}
    It is possible to detect the specific instances of trivial calibration discussed above using alternatives to proper scoring rules, such as ROC-AUC applied to the confidence calibration problem. However, the confidence calibration problem can behave very differently than the overall multi-class classification problem (illustrated in Section \ref{sec:confcal}), and typically it is very efficient to compute the proper scoring rules applied to the multi-class problem directly.
\end{remark}

\section{Main Theoretical Results}\label{sec:maintheory}
As can be seen in Table \ref{tab:mrr}, the MRR strategy dominates in terms of all of the pure calibration metrics, but leads to large jumps in NLL and MSE. Clearly either NLL or MSE alone would have been sufficient to detect the trivial behavior of MRR. We now address the more general question: which generalization metrics should we report to prevent misleading calibration results? Our answer relies on the fact that such metrics are usually \textit{Bregman divergences}. Our focus on Bregman divergences is due to their convenient theoretical properties; in particular, they are defined in terms of convex functions and we can thus bring to bear the machinery of convex analysis.

We first review the definitions. Let us use $\phi: \Delta^{k-1} \to \overline{\R}$ to denote a continuously differentiable ($C^1$), strictly convex function. We recall that the Bregman divergence $d_{\phi}$ associated with $\phi$ is defined to be:
\begin{align*}
    d_{\phi}(x, y) = \phi(x) - \phi(y) - \nabla \phi(y)^{\top} (x - y). \numberthis \label{bregmandef}
\end{align*}
In many machine learning tasks the goal is to minimize the expectation of a conditional Bregman divergence; for example, in multi-class classification we take $\phi(x) = \sum_i x_i \log x_i$ to obtain the KL divergence and then attempt to minimize $\E[d_{\phi}(\pi_{Y \mid X}, g(X))]$ from samples. Bregman divergences are closely related with strictly proper scoring rules, and we can leverage the fact that such rules can be decomposed into a calibration error and a ``sharpness'' term to obtain a similar decomposition for Bregman divergences.

\begin{restatable}{lemma}{bregmandecomp}[Bregman Divergence Decomposition]\label{bregmandecomp}
    For $d_{\phi}$ defined as in \eqref{bregmandef} and two integrable random variables $Z$ and $X$ defined on the same probability space, it follows that:
    \begin{align*}
        \E[d_{\phi}(Z, X)] = \underbrace{\E[\phi(Z)] - \E[\phi(\E[Z \mid X])]}_{\text{Sharpness Gap}} + \underbrace{\E[d_{\phi}(\E[Z \mid X], X)]}_{\text{Calibration Error}}. \numberthis \label{decompdef}
    \end{align*}
\end{restatable}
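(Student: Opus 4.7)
The plan is a direct verification: expand the definition of $d_\phi$ on both sides and show that the difference reduces to a single identity which is just the pull-out property of conditional expectation.

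First I would rewrite the left-hand side using the definition \eqref{bregmandef}:
\begin{align*}
    \E[d_{\phi}(Z, X)] = \E[\phi(Z)] - \E[\phi(X)] - \E\!\left[\nabla\phi(X)^{\top}(Z - X)\right].
\end{align*}
Next, I would expand the calibration-error term on the right-hand side in the same way:
\begin{align*}
    \E[d_{\phi}(\E[Z\mid X], X)] = \E[\phi(\E[Z\mid X])] - \E[\phi(X)] - \E\!\left[\nabla\phi(X)^{\top}(\E[Z\mid X] - X)\right].
\end{align*}
Adding the sharpness gap $\E[\phi(Z)] - \E[\phi(\E[Z\mid X])]$ to this expression cancels the $\E[\phi(\E[Z\mid X])]$ terms, so matching the two sides reduces to the single identity
\begin{align*}
    \E\!\left[\nabla\phi(X)^{\top} Z\right] = \E\!\left[\nabla\phi(X)^{\top} \E[Z \mid X]\right].
\end{align*}

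This identity is immediate from the tower property: since $\nabla\phi(X)$ is $\sigma(X)$-measurable, I can condition on $X$ and pull it outside, giving $\E[\nabla\phi(X)^{\top} Z \mid X] = \nabla\phi(X)^{\top} \E[Z\mid X]$, and then take outer expectations. No convexity beyond what defines $d_\phi$ is actually needed for the algebraic decomposition to go through.

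The only real obstacle is integrability of the cross term $\nabla\phi(X)^{\top}Z$, which is what lets me invoke the pull-out property and rearrange expectations. Since $Z$ and $X$ are integrable and (implicitly) take values in $\Delta^{k-1}$, this is automatic whenever $\nabla\phi$ is bounded on the simplex; for $\phi$'s whose gradient blows up at the boundary (such as the negative entropy yielding KL), one should either impose that $X$ is bounded away from the boundary or, equivalently, restrict to the interior of $\Delta^{k-1}$ so that $\E[\|\nabla\phi(X)\|\,\|Z\|]<\infty$. I would state this integrability requirement explicitly in the proof and then the rest is mechanical.
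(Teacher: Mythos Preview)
Your proposal is correct and essentially identical to the paper's proof: both expand the Bregman divergence definition and reduce the claim to the identity $\E[\nabla\phi(X)^{\top}Z]=\E[\nabla\phi(X)^{\top}\E[Z\mid X]]$, which is the pull-out/tower property for the $\sigma(X)$-measurable factor $\nabla\phi(X)$. Your discussion of the integrability caveat for gradients that blow up at the simplex boundary is a nice addition that the paper's proof omits.
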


More general versions of Lemma \ref{bregmandecomp} have appeared in \citet{Brocker_2009, kull2015novel, pohle2020murphy, gruber2024betteruncertaintycalibrationproper}, but we present a version for Bregman divergences as we will leverage the structure of \eqref{bregmandef} to prove our new results in the subsequent sections. We provide a self-contained proof of Lemma \ref{bregmandecomp} in Appendix \ref{sec:backgroundproofs}. 

Some intuition for the sharpness gap term is provided in the next two propositions, the first of which appeared in \citet{pohle2020murphy} in a different form. The second follows easily from the first and Lemma \ref{bregmandecomp}, but has not appeared elsewhere to our knowledge. Intuitively, the propositions show that a smaller sharpness gap corresponds to a predictor that is more ``fine-grained'', and in that sense the constant calibrated predictor that we introduced in the previous subsection is the worst possible fully calibrated predictor. The proofs of both propositions can once again be found in Appendix \ref{sec:backgroundproofs}.

\begin{restatable}{proposition}{resolution}\label{resolution}
    For $g_1, g_2: \R^d \to \Delta^{k-1}$ and $\pi$ the joint distribution of data and label $(X, Y)$ on $\R^d \times [k]$, if $\sigma(g_1(X)) \subseteq \sigma(g_2(X))$, then:
    \begin{align*}
        \E[\phi(\E[\pi_{Y \mid X} \mid g_1(X)])] \le \E[\phi(\E[\pi_{Y \mid X} \mid g_2(X)])]. \numberthis \label{resbound}
    \end{align*}
\end{restatable}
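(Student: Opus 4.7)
The plan is to view this as a straightforward application of the tower property of conditional expectation together with conditional Jensen's inequality, since $\phi$ is convex and the inclusion of sigma-algebras gives us exactly the nested conditioning we need.

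First, I would introduce the shorthand $M_i = \E[\pi_{Y \mid X} \mid g_i(X)]$ for $i \in \{1,2\}$, so the goal is to show $\E[\phi(M_1)] \le \E[\phi(M_2)]$. Note that $M_1$ and $M_2$ are well-defined integrable random variables because $\pi_{Y \mid X}$ takes values in the bounded set $\Delta^{k-1}$. The key structural observation is that under the hypothesis $\sigma(g_1(X)) \subseteq \sigma(g_2(X))$, the tower property yields
\begin{equation*}
\E[M_2 \mid g_1(X)] = \E\bigl[\E[\pi_{Y \mid X} \mid g_2(X)] \,\big|\, g_1(X)\bigr] = \E[\pi_{Y \mid X} \mid g_1(X)] = M_1.
\end{equation*}
This is the only place where the hypothesis on the sigma-algebras is used.

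Next, I would invoke the conditional Jensen's inequality. Since $\phi$ is convex on $\Delta^{k-1}$ and $M_2$ takes values in $\Delta^{k-1}$, applying conditional Jensen with respect to $\sigma(g_1(X))$ gives
\begin{equation*}
\phi(M_1) = \phi\bigl(\E[M_2 \mid g_1(X)]\bigr) \le \E[\phi(M_2) \mid g_1(X)]
\end{equation*}
almost surely. Taking total expectations on both sides and using the tower property once more yields $\E[\phi(M_1)] \le \E[\phi(M_2)]$, which is exactly \eqref{resbound}.

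The main technical subtlety, rather than a real obstacle, is just ensuring that conditional Jensen's inequality applies cleanly: $\phi$ is $C^1$ and strictly convex on the compact simplex $\Delta^{k-1}$ where $M_2$ lives, so $\phi(M_2)$ is integrable and the inequality holds in its standard form. If one wanted to allow $\phi$ to take the value $+\infty$ on the boundary (as happens for the KL generator $\sum_i x_i \log x_i$ when extended by continuity), the same argument still goes through: $\phi$ is convex and lower semicontinuous on $\Delta^{k-1}$, and conditional Jensen's inequality extends to this setting, with both sides potentially $+\infty$ but the inequality always valid. No convex-analytic machinery beyond Jensen is needed, which is consistent with the author's remark that this proposition appeared earlier in a different form and is included mainly to motivate the sharpness-gap interpretation.
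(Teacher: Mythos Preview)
Your proof is correct and is essentially identical to the paper's: both use the tower property (via $\sigma(g_1(X)) \subseteq \sigma(g_2(X))$) to write $\E[\pi_{Y\mid X}\mid g_1(X)] = \E[\E[\pi_{Y\mid X}\mid g_2(X)]\mid g_1(X)]$, then apply conditional Jensen and take expectations. The paper compresses this into a single three-line display, whereas you introduce the shorthand $M_i$ and spell out each step, but the argument is the same.
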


\begin{restatable}{proposition}{lossub}\label{lossub}
    Let $\pi$ correspond to the joint distribution of data and label $(X, Y)$ on $\R^d \times [k]$. Then every predictor $g$ that is calibrated in the sense of \eqref{calcondition} satisfies:
    \begin{align*}
        \E[d_{\phi}(\pi_{Y \mid X}, g(X))] \le \E[d_{\phi}(\pi_{Y \mid X}, \pi_Y)]. \numberthis \label{lossbound}
    \end{align*}
\end{restatable}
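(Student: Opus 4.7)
The plan is to apply Lemma \ref{bregmandecomp} to both sides of \eqref{lossbound} and then reduce the inequality to an application of Proposition \ref{resolution}. First, I would apply the decomposition with $Z = \pi_{Y \mid X}$ and conditioning variable $g(X)$ to rewrite the left-hand side of \eqref{lossbound} as the sharpness gap $\E[\phi(\pi_{Y \mid X})] - \E[\phi(\E[\pi_{Y \mid X} \mid g(X)])]$ plus the calibration error $\E[d_\phi(\E[\pi_{Y \mid X} \mid g(X)], g(X))]$. The full-calibration condition \eqref{calcondition}, combined with the tower property (using that $\sigma(g(X)) \subseteq \sigma(X)$), yields $\E[\pi_{Y \mid X} \mid g(X)] = \E[Y \mid g(X)] = g(X)$ almost surely (identifying $Y$ with its one-hot encoding), so $d_\phi(\E[\pi_{Y \mid X} \mid g(X)], g(X)) = d_\phi(g(X), g(X)) = 0$ and the calibration-error term vanishes.

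Next, I would apply the decomposition to the right-hand side of \eqref{lossbound} with the constant predictor $\pi_Y$ in place of $g(X)$. Since $\E[\pi_{Y \mid X}] = \pi_Y$, the conditional-expectation term collapses to the unconditional mean and the calibration error term again vanishes, leaving $\E[\phi(\pi_{Y \mid X})] - \phi(\pi_Y)$. Cancelling the common $\E[\phi(\pi_{Y \mid X})]$ on both sides then reduces \eqref{lossbound} to the single inequality
\[
    \phi(\pi_Y) \le \E[\phi(\E[\pi_{Y \mid X} \mid g(X)])].
\]

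Finally, I would derive this remaining inequality directly from Proposition \ref{resolution} by taking $g_1(X)$ to be any constant predictor (whose induced sigma-algebra is trivial and hence contained in $\sigma(g(X))$) and $g_2 = g$; the left-hand side of \eqref{resbound} then equals $\phi(\E[\pi_{Y \mid X}]) = \phi(\pi_Y)$ while the right-hand side is exactly the quantity we need to lower-bound.

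The only subtlety I anticipate is the measure-theoretic identification that lets calibration in the sense of \eqref{calcondition} imply $\E[\pi_{Y \mid X} \mid g(X)] = g(X)$ almost surely; once that one-line tower-property argument is in hand the rest is purely algebraic cancellation followed by a direct invocation of Proposition \ref{resolution}.
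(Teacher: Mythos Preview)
Your proposal is correct and follows essentially the same approach as the paper: apply the decomposition of Lemma \ref{bregmandecomp} to both sides, observe that the calibration-error term vanishes for a calibrated $g$ (and trivially for the constant predictor $\pi_Y$), and then invoke Proposition \ref{resolution} with $g_1$ constant and $g_2 = g$ to compare the remaining sharpness terms. Your write-up is in fact more explicit than the paper's terse proof, spelling out the tower-property identification $\E[\pi_{Y\mid X}\mid g(X)] = g(X)$ and the cancellation step, but the logical structure is identical.
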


The main takeaway from Lemma \ref{bregmandecomp} is that when we choose to optimize a Bregman divergence, we implicitly also have a notion of calibration error that we are minimizing via \eqref{decompdef}. This provides a \textbf{practical workflow} to answer our motivating question of which metrics to report together: we first fix a Bregman divergence for our notion of generalization, and then \textit{report the same divergence} but applied to the expected label conditioned on the predicted probabilities. We can also ask about the other direction -- if we choose a calibration error, is there a natural Bregman divergence associated with it?

The answer is: not always. In fact, in considering this question, we can motivate a choice of $L^q$ ECE. The next result shows that in the framework of Bregman divergences the ``right'' choice of ECE is the $L^2$ ECE, since it is the conditional expectation of the Brier score.

\begin{restatable}{proposition}{lpece}\label{lpece}
    Let $\pi$ correspond to the joint distribution of data and label $(X, Y)$ on $\R^d \times [k]$, and define the $L^q$ (for integers $q \ge 1$) ECE of a function $g: \R^d \to \Delta^{k-1}$ to be $\E\left[\norm{\E[Y \mid g(X)] - g(X)}_q^q\right]$. Then only the $L^2$ ECE is induced by a Bregman divergence in the sense of Lemma \ref{bregmandecomp}.
\end{restatable}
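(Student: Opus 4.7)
The plan is to reduce the proposition to a pointwise question about Bregman divergences on the simplex and then rule out every integer $q \ne 2$. Starting from the hypothesized identity $\E[d_\phi(\E[Y\mid g(X)], g(X))] = \E[\norm{\E[Y\mid g(X)] - g(X)}_q^q]$ for every joint distribution $\pi$ and predictor $g$, I would extract the pointwise identity $d_\phi(u,v) = \norm{u-v}_q^q$ on $\mathrm{int}(\Delta^{k-1}) \times \mathrm{int}(\Delta^{k-1})$ by specializing to $X$ a point mass with $\pi_{Y\mid X} = u$ and $g \equiv v$: both expectations then collapse to a single value. The task becomes classifying the integers $q \ge 1$ for which $(u,v) \mapsto \norm{u-v}_q^q$ is a Bregman divergence of some $C^1$ strictly convex $\phi : \Delta^{k-1} \to \overline{\R}$.

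The boundary cases are immediate. For $q = 1$, the map $u \mapsto \norm{u-v}_1$ fails to be differentiable at $u = v$, contradicting the fact that any Bregman divergence of a $C^1$ generator is $C^1$ in its first argument, with gradient $\nabla\phi(u) - \nabla\phi(v)$. For $q = 2$, the choice $\phi(u) = \norm{u}_2^2$ directly gives $d_\phi(u,v) = \norm{u-v}_2^2$. For the remaining integers $q \ge 3$, I would pass to a 1D slice: fix any nonzero $w \in \R^k$ with $\sum_i w_i = 0$ and an interior base point $v_0$, and set $\psi(t) := \phi(v_0 + tw)$ on a small open interval $I$ around $0$. Then $\psi$ is $C^1$ strictly convex, and the identity becomes
\begin{align*}
    \psi(t) - \psi(s) - \psi'(s)(t-s) = c\,|t-s|^q, \qquad c := \norm{w}_q^q > 0,
\end{align*}
for $t, s \in I$. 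Setting $s = 0$ yields the explicit form $\psi(t) = \psi(0) + \psi'(0) t + c|t|^q$, and hence $\psi'(t) = \psi'(0) + c q\, t^{q-1}$ for $t > 0$.

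The decisive step is to plug these closed forms back into the functional equation at $(t,s) = (2s_0, s_0)$ for sufficiently small $s_0 > 0$: every $\psi(0)$ and $\psi'(0)$ contribution cancels, and one is left with the scalar equation $2^q = q + 2$, whose only positive-integer solution is $q = 2$ (check $q = 1$ by hand, and a one-step induction gives $2^q > q + 2$ for $q \ge 3$). I expect the main obstacle to be setting up this 1D reduction rigorously — ensuring that the functional equation truly holds on a full open interval so that the explicit form of $\psi$ derived at $s = 0$ can legitimately be substituted back at other $(t, s)$ pairs, and that $w$ can be chosen so $c > 0$. Once the slice is chosen carefully, the remaining algebra is elementary.
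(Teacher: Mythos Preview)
Your argument is correct, and it takes a genuinely different route from the paper's. The paper dispatches $q=1$ exactly as you do, but for $q\ge 2$ it invokes a structural characterization (Lemma~2 of \citet{Boissonnat_2010}): since $\norm{u-v}_q^q$ is symmetric in $(u,v)$, any Bregman divergence equal to it must come from a $\phi$ with constant Hessian, and the Hessian of $u\mapsto\norm{u-v}_q^q$ is $q(q-1)\,\mathrm{diag}(|u_i-v_i|^{q-2})$, which is constant only when $q=2$. Your approach instead restricts to a one-dimensional affine slice through the simplex, reads off the explicit form $\psi(t)=\psi(0)+\psi'(0)t+c|t|^q$ from the Bregman identity at $s=0$, and substitutes back to obtain the scalar obstruction $2^q=q+2$. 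This is more elementary and fully self-contained---no external citation, no need to promote $\phi$ to $C^2$---at the cost of a bit more computation. The paper's route is shorter and highlights the conceptual reason (symmetry of the divergence forces a quadratic generator), which your slice argument establishes indirectly. Your worry about the 1D reduction is unfounded: any nonzero $w$ with $\sum_i w_i=0$ keeps $v_0+tw$ in the affine hull of $\Delta^{k-1}$, and since $v_0$ is interior there is an open interval $I\ni 0$ on which the pointwise identity holds, so the substitution at $(2s_0,s_0)$ is legitimate for small $s_0>0$.
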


The proof is a simple consequence of characterizations provided in \citet{Boissonnat_2010} (which we recall in Appendix \ref{sec:backgroundproofs}).

\subsection{Confidence Calibration With Bregman Divergences}\label{sec:confcal}
So far, we have leveraged existing ideas in the literature on proper scoring rules to motivate a joint choice of generalization and calibration error. However, as we alluded to earlier, when considering high-dimensional classification problems it is not feasible to efficiently estimate the conditional expectation required for calibration error. The main result of this section is to show that we can replace the full calibration error in \eqref{decompdef} with confidence calibration error and still obtain a valid lower bound, which provides justification for the reporting of confidence calibration at least in the context of Bregman divergences.

In order to discuss confidence calibration, we need a 1-D counterpart $\varphi: [0, 1] \to \overline{\R}$ to $\phi$. For our cases of interest, the choice of $\varphi$ is obvious; for example, when $\phi(x) = \norm{x}^2$ (i.e. $d_{\phi}$ is Brier score), $\varphi(x) = x^2$. Recalling that in standard confidence calibration we have $c(g(X)) = \argmax_i g^i(X)$ and $h(g(X)) = \max_i g^i(X)$, our goal is then to investigate the relationship between $\E[d_{\varphi}(\E[\Ind_{Y = c(X)} \mid h(X)], h(X))]$ and $\E[d_{\phi}(\E[\pi_{Y \mid X} \mid g(X)], g(X))]$.

Prior to doing so, let us pause to stress the fact that the confidence calibration framework should be applied only to the calibration error and not to the entire loss. In other words, we should report the full Bregman divergence $\E[d_{\phi}(\pi_{Y \mid X}, g(X))]$ in addition to the calibration error, not the 1-D Bregman divergence $\E[d_{\varphi}(\Ind_{Y = c(X)}, h(X))]$. Reporting of the latter is \textbf{another pitfall} and has appeared in the recent literature on surveying confidence elicitation strategies for LLMs \citep{tian2023just}. We show in the next proposition that the 1-D loss can have very different behavior when compared to the full loss for the specific case of MSE/Brier score.

\begin{restatable}{proposition}{counterex}\label{counterex}
    Suppose $\pi$ satisfies the property that there exists $y^*: \Delta^{k - 1} \to [k]$ such that $\pi_{Y \mid X = x}(y^*(x)) = 1$ for $\pi_X$-a.e. $x$. Now consider any function $\tilde{y}: \Delta^{k - 1} \to [k]$ satisfying $\tilde{y}(x) \neq y^*(x)$ for $\pi_X$-a.e. $x$. Then the ``always wrong'' predictor defined as $g^{\tilde{y}(x)}(x) = 1/k + \epsilon$ and $g^y(x) = 1/k - \epsilon/(k - 1)$ for all $y \neq \tilde{y}(x)$ with $\epsilon = O(1/k)$ satisfies
    \begin{align*}
        \E\left[\left(\Ind_{Y = c(X)} - h(X)\right)^2\right] = O(1/k) \quad \text{but} \quad \E\left[\norm{\pi_{Y \mid X} - g(X)}^2\right] = \Omega(1) \numberthis \label{msegap}
    \end{align*}
    for the standard confidence calibration choices $c(z) = \argmax_i z_i$ and $h(z) = \max_i z_i$.
\end{restatable}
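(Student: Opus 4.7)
The plan is a direct computation that exploits the deterministic structure of $\pi$. First I would observe that the hypothesis $\pi_{Y \mid X = x}(y^*(x)) = 1$ forces $Y = y^*(X)$ almost surely, so $\pi_{Y \mid X} = e_{y^*(X)}$ and the only randomness is the marginal over $X$. Since $g^{\tilde{y}(x)}(x) = 1/k + \epsilon$ strictly exceeds $g^{y}(x) = 1/k - \epsilon/(k-1)$ for every $y \neq \tilde{y}(x)$, we conclude that $c(X) = \tilde{y}(X)$ and $h(X) = 1/k + \epsilon$. One should also briefly verify that $g(x) \in \Delta^{k-1}$: the coordinates manifestly sum to $1$, and the smallest coordinate $1/k - \epsilon/(k-1)$ is nonnegative whenever $\epsilon \le (k-1)/k$, which is implied by $\epsilon = O(1/k)$ with a small enough constant.

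For the 1-D Brier/MSE term on the left of \eqref{msegap}, the assumption $\tilde{y}(x) \neq y^*(x)$ immediately gives $\Ind_{Y = c(X)} = \Ind_{y^*(X) = \tilde{y}(X)} = 0$ almost surely, so the expression inside the expectation is deterministically $(1/k + \epsilon)^2$. Since $\epsilon = O(1/k)$, this is $O(1/k^2)$ and in particular $O(1/k)$, as claimed.

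For the full MSE on the right, I would expand $\norm{e_{y^*(X)} - g(X)}^2$ coordinate-by-coordinate into three contributions: the $y^*(X)$ coordinate gives $(1 - 1/k + \epsilon/(k-1))^2$, the $\tilde{y}(X)$ coordinate gives $(1/k + \epsilon)^2$, and each of the remaining $k - 2$ coordinates contributes $(1/k - \epsilon/(k-1))^2$. The first of these terms is bounded below by $(1 - 1/k)^2 \ge 1/4$ for all $k \ge 2$, which already yields the $\Omega(1)$ lower bound regardless of the smaller $O(1/k)$ contributions from the remaining coordinates. The main (and essentially only) obstacle here is bookkeeping: keeping the $\tilde{y}(X)$ and $y^*(X)$ coordinates distinct so that the argmax structure is preserved and the "one-hot coordinate vs. suppressed coordinate" distinction lines up correctly. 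Once that is handled, the argument reduces to arithmetic and the desired gap between the confidence-only MSE and the full MSE is immediate.
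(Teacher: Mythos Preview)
Your proposal is correct and follows essentially the same direct computation as the paper: identify $\Ind_{Y = c(X)} = 0$ and $h(X) = 1/k + \epsilon$ to get the $O(1/k)$ bound, and isolate the leading term $(1 - 1/k + \epsilon/(k-1))^2$ of the full MSE to get the $\Omega(1)$ bound. Your write-up is in fact more careful than the paper's terse proof, as you explicitly verify $g(x) \in \Delta^{k-1}$ and separate out all three coordinate types in the expansion.
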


The proof follows from directly evaluating \eqref{msegap}, and can be found in Appendix \ref{sec:confcalproofs}. Proposition \ref{counterex} shows that the loss $d_{\varphi}$ incurred on the confidence calibration problem can be arbitrarily small for a \textit{predictor that is always wrong} despite such a predictor having true loss on the multi-class classification problem bounded below by a constant. We now return to our original goal, which was showing that $\E[d_{\varphi}(\E[\Ind_{Y = c(X)} \mid h(X)], h(X))]$ is a lower bound for $\E[d_{\phi}(\E[\pi_{Y \mid X} \mid g(X)], g(X))]$ under the condition that $\phi$ dominates $\varphi$ applied to any individual coordinate pair (for example, this is easily seen to be the case for Brier score).

\begin{restatable}{lemma}{confbreglb} \label{confbreglb}
    For any $C^1$, convex functions $\phi: \Delta^{k-1} \to \overline{\R}$ and $\varphi: [0, 1] \to \overline{\R}$ such that their Bregman divergences satisfy $d_{\phi}(x, y) \ge d_{\varphi}(x_i, y_i)$ for all $i \in [k]$, it follows (with probability 1) that:
    \begin{align*}
        \E[d_{\phi}(\E[\pi_{Y \mid X} \mid g(X)], g(X)) \mid h(X)] \ge d_{\varphi}(\E[\Ind_{Y = c(X)} \mid h(X)], h(X)). \numberthis \label{conflb}
    \end{align*}
\end{restatable}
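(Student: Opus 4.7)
My plan is to chain together the coordinate-wise hypothesis, the tower property, and conditional Jensen's inequality, in that order.

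The first step is to instantiate the hypothesis $d_{\phi}(x,y) \ge d_{\varphi}(x_i, y_i)$ at the random index $i = c(X)$. Because $c$ is a deterministic function of $g(X)$, the index $c(X)$ is $\sigma(g(X))$-measurable, so applying the pointwise inequality at this index yields
\begin{align*}
    d_{\phi}(\E[\pi_{Y\mid X}\mid g(X)], g(X)) \ge d_{\varphi}\bigl((\E[\pi_{Y\mid X}\mid g(X)])^{c(X)},\, g^{c(X)}(X)\bigr).
\end{align*}
On the right-hand side, $g^{c(X)}(X) = h(X)$ by definition of standard confidence calibration. For the first argument, since $c(X)$ is $\sigma(g(X))$-measurable we can pull the coordinate selection inside the conditional expectation:
\begin{align*}
    (\E[\pi_{Y\mid X}\mid g(X)])^{c(X)} = \E[\pi_{Y\mid X}^{c(X)}\mid g(X)] = \E[\Ind_{Y=c(X)}\mid g(X)],
\end{align*}
where the last equality uses $\pi_{Y\mid X}^{c(X)} = \E[\Ind_{Y=c(X)}\mid X]$ together with the tower property (since $\sigma(g(X)) \subseteq \sigma(X)$).

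The second step is to take conditional expectation of both sides given $h(X)$ and then apply Jensen's inequality. Because $\varphi$ is $C^1$ and convex, the map $u \mapsto d_{\varphi}(u, v) = \varphi(u) - \varphi(v) - \varphi'(v)(u - v)$ is convex in $u$ for each fixed $v$; moreover $h(X)$ is $\sigma(h(X))$-measurable so it behaves as a constant under $\E[\,\cdot \mid h(X)]$. Hence the conditional Jensen inequality gives
\begin{align*}
    \E\bigl[d_{\varphi}(\E[\Ind_{Y=c(X)}\mid g(X)],\, h(X)) \,\big|\, h(X)\bigr] \ge d_{\varphi}\bigl(\E[\E[\Ind_{Y=c(X)}\mid g(X)] \mid h(X)],\, h(X)\bigr).
\end{align*}

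The final step is to collapse the nested conditional expectation. Since $h$ is a function of $g(X)$, we have $\sigma(h(X)) \subseteq \sigma(g(X))$, and the tower property gives $\E[\E[\Ind_{Y=c(X)}\mid g(X)] \mid h(X)] = \E[\Ind_{Y=c(X)}\mid h(X)]$. Combining the three inequalities yields \eqref{conflb}.

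I expect the only subtlety to be bookkeeping around the random index $c(X)$ and the two applications of the tower property, which run in opposite directions (first conditioning down from $\sigma(X)$ to $\sigma(g(X))$ to extract $\Ind_{Y=c(X)}$, and later conditioning further down from $\sigma(g(X))$ to $\sigma(h(X))$); care is needed to verify that $c(X)$ is $\sigma(g(X))$-measurable so that it can be pulled through the inner conditional expectation. Everything else is routine: the coordinate-wise Bregman hypothesis is applied verbatim at a measurable index, and Jensen is applied to the convex function $d_{\varphi}(\cdot, h(X))$ with the second argument held fixed by conditioning on $h(X)$.
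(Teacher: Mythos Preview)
Your proposal is correct and follows essentially the same approach as the paper's proof: instantiate the coordinate-wise hypothesis at the $\sigma(g(X))$-measurable index $c(X)$, identify $(\E[\pi_{Y\mid X}\mid g(X)])^{c(X)}$ with $\E[\Ind_{Y=c(X)}\mid g(X)]$ via the tower property, then condition on $h(X)$ and apply Jensen to the convex map $u\mapsto d_{\varphi}(u,h(X))$. The only cosmetic difference is that the paper expands $d_{\varphi}$ term-by-term before applying Jensen, whereas you apply Jensen directly to $d_{\varphi}(\cdot,h(X))$; the content is identical.
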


\begin{restatable}{corollary}{klbound}\label{klbound}
    The inequality \eqref{conflb} holds for the choices $\phi(x) = \sum_i x_i \log x_i$ (KL divergence) and $\varphi(x) = x\log x + (1 - x) \log (1 - x)$ (logistic loss).
\end{restatable}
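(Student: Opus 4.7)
The plan is to verify the hypothesis of Lemma \ref{confbreglb}, i.e. to show that for the stated $\phi$ and $\varphi$ we have $d_\phi(x,y) \ge d_\varphi(x_i,y_i)$ for every $i \in [k]$ and every $x,y \in \Delta^{k-1}$. Once this coordinate-wise domination is established, the corollary follows immediately by plugging it into Lemma \ref{confbreglb}.

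First I would unpack the two Bregman divergences by a direct calculation from \eqref{bregmandef}. For $\phi(x) = \sum_i x_i \log x_i$ one has $\nabla \phi(y)_i = \log y_i + 1$, and using $\sum_i x_i = \sum_i y_i = 1$ the linear and $\phi(y)$ terms in \eqref{bregmandef} telescope to give $d_\phi(x,y) = \sum_i x_i \log(x_i/y_i)$, i.e. the KL divergence. For $\varphi(x) = x\log x + (1-x)\log(1-x)$ one has $\varphi'(y) = \log(y/(1-y))$, and an analogous simplification yields $d_\varphi(x_i,y_i) = x_i \log(x_i/y_i) + (1-x_i)\log((1-x_i)/(1-y_i))$, the binary KL divergence.

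Next I would reduce $d_\phi \ge d_\varphi$ coordinate-wise to an application of the log-sum inequality (equivalently, the data-processing inequality for KL under the two-element partition $\{i\},[k]\setminus\{i\}$). Writing
\begin{align*}
d_\phi(x,y) = x_i\log\frac{x_i}{y_i} + \sum_{j\ne i} x_j \log\frac{x_j}{y_j},
\end{align*}
the log-sum inequality applied to the second sum gives
\begin{align*}
\sum_{j \ne i} x_j \log\frac{x_j}{y_j} \ge \Big(\sum_{j\ne i} x_j\Big)\log\frac{\sum_{j\ne i} x_j}{\sum_{j\ne i} y_j} = (1-x_i)\log\frac{1-x_i}{1-y_i},
\end{align*}
so combining the two displays gives $d_\phi(x,y) \ge d_\varphi(x_i,y_i)$, as required.

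There is no real obstacle here: the whole content is recognizing the coordinate inequality as an instance of the log-sum inequality (the standard fact that coarsening a partition can only decrease KL), plus a short Bregman computation to identify $d_\phi$ and $d_\varphi$ with the full and binary KL divergences. The only subtlety worth a brief remark is handling $0\log 0 = 0$ and the possibility of $y_i \in \{0,1\}$, both of which are covered by the usual conventions under which the log-sum inequality is stated; on the boundary, both sides can be $+\infty$ and the inequality still holds vacuously.
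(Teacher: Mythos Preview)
Your proposal is correct and takes essentially the same approach as the paper: both reduce the corollary to verifying the coordinate-wise domination hypothesis of Lemma \ref{confbreglb}, and both establish that domination via the data-processing inequality for KL (the paper phrases it as the Markov kernel $x \mapsto \mathrm{Bernoulli}(x_i)$, while you phrase it as the log-sum inequality applied to the partition $\{i\},[k]\setminus\{i\}$, which is the same thing). Your explicit computation of $d_\phi$ and $d_\varphi$ from \eqref{bregmandef} is a bit more detailed than the paper's version but not substantively different.
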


The proof of Lemma \ref{confbreglb} relies on a careful analysis of the conditional expectation terms, and can also be found in Appendix \ref{sec:confcalproofs}. Corollary \ref{klbound} follows from the data processing inequality.

\subsection{Calibration-Sharpness Diagrams}\label{sec:calsharp}
To summarize our observations thus far, we have recommended that calibration should be compared using the calibration metric associated with the Bregman divergence for which the models were optimized, and that both should be reported together. It remains to address how to similarly modify the visualization of (confidence) calibration, which is typically done via reliability diagrams that plot $\E[\Ind_{Y = c(X)} \mid h(X)]$ for different values of $h(X)$. Our goals are three-fold: we want a visualization that is efficient to compute, makes it obvious whether one model is better calibrated and/or has better overall divergence than another model, and allows one to pinpoint the regions of the predicted probability space where model performance is weak. While there are a few recent works that try to make progress on these goals, we are not aware of a solution presented as a single diagram satisfying our desiderata \citep{dimitriadis2023evaluating, gneiting23}.

The approach we take is to augment the plot of $\E[\Ind_{Y = c(X)} \mid h(X)]$ with a surrounding band whose length $\rho(h(X))$ corresponds to the sharpness gap from Lemma \ref{bregmandecomp} at $h(X)$, which we refer to as \textbf{calibration-sharpness diagrams}. For convenience, we will use $\rho(X)$ in place of $\rho(h(X))$ when $h$ is understood. The function $\rho: [0, 1] \to \overline{\R}$ is the \textbf{pointwise sharpness gap}, and is formally defined as:
\begin{align*}
    \rho(p) = \E\left[d_{\phi}(\pi_{Y \mid X}, g(X)) \mid h(X) = p\right] - d_{\varphi}\left(\E[\Ind_{Y = c(X)} \mid h(X) = p], p\right), \numberthis \label{sharpgap}
\end{align*}
where as in the previous subsection $d_{\phi}$ is the full Bregman divergence and $d_{\varphi}$ is its 1-D counterpart. Note that in \eqref{sharpgap} we are conditioning $d_{\phi}(\pi_{Y \mid X}, g(X))$ on $h(X)$, since we want a value for the divergence at different confidence levels $h(X)$. The key property of $\rho$ which allows it to be used as a visualization aid is the following.

\begin{restatable}{proposition}{rhononneg}\label{rhononneg}
    Suppose that $\phi$ and $\varphi$ satisfy the conditions of Lemma \ref{confbreglb}. Then the function $\rho$ as defined in \eqref{sharpgap} is nonnegative.
\end{restatable}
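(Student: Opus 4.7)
The plan is to bound $\E[d_{\phi}(\pi_{Y \mid X}, g(X)) \mid h(X) = p]$ below by $d_{\varphi}(\E[\Ind_{Y = c(X)} \mid h(X) = p], p)$ via two successive inequalities: first going from $\pi_{Y \mid X}$ to its conditional expectation given $g(X)$ (using convexity of Bregman divergences in the first argument), and then invoking Lemma \ref{confbreglb} to drop from $g(X)$ down to $h(X)$.

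The first step is to note that for any fixed $y$, the map $x \mapsto d_{\phi}(x, y) = \phi(x) - \phi(y) - \nabla \phi(y)^{\top}(x - y)$ is convex in $x$, since the first term is convex by assumption on $\phi$ and the remaining terms are affine in $x$. Applying conditional Jensen's inequality to this convex function with the random vector $\pi_{Y \mid X}$ and conditioning on $\sigma(g(X))$ yields
\begin{align*}
    \E[d_{\phi}(\pi_{Y \mid X}, g(X)) \mid g(X)] \ge d_{\phi}(\E[\pi_{Y \mid X} \mid g(X)], g(X)).
\end{align*}
Since $h$ is a function of $g$, we have $\sigma(h(X)) \subseteq \sigma(g(X))$, so the tower property gives
\begin{align*}
    \E[d_{\phi}(\pi_{Y \mid X}, g(X)) \mid h(X)] \ge \E[d_{\phi}(\E[\pi_{Y \mid X} \mid g(X)], g(X)) \mid h(X)]
\end{align*}
almost surely.

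The second step is to apply Lemma \ref{confbreglb}, whose hypotheses on $\phi$ and $\varphi$ are exactly the ones assumed here. The lemma gives
\begin{align*}
    \E[d_{\phi}(\E[\pi_{Y \mid X} \mid g(X)], g(X)) \mid h(X)] \ge d_{\varphi}(\E[\Ind_{Y = c(X)} \mid h(X)], h(X))
\end{align*}
almost surely. Chaining the two inequalities and rearranging according to the definition \eqref{sharpgap} of $\rho$ shows that $\rho(h(X)) \ge 0$ almost surely, which is the claimed nonnegativity for $\pi_{h(X)}$-a.e.\ $p$.

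The only substantive step is the Jensen reduction from $\pi_{Y \mid X}$ to $\E[\pi_{Y \mid X} \mid g(X)]$; once that is in place, Lemma \ref{confbreglb} does all the work of collapsing from the full simplex to the confidence coordinate. I do not anticipate any real obstacles: convexity of $d_{\phi}(\cdot, y)$ is immediate from the definition, and the only mild subtlety is being careful about the filtration inclusion $\sigma(h(X)) \subseteq \sigma(g(X))$ so that the tower property applies cleanly.
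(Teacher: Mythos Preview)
Your proposal is correct and follows essentially the same two-step route as the paper: first reduce $\pi_{Y\mid X}$ to $\E[\pi_{Y\mid X}\mid g(X)]$, then invoke Lemma~\ref{confbreglb}. The only cosmetic difference is that the paper cites Lemma~\ref{bregmandecomp} for the first inequality (the sharpness gap in that decomposition is nonnegative by Jensen on $\phi$), whereas you apply conditional Jensen directly to the convex map $x\mapsto d_\phi(x,y)$; these are the same observation.
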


The proof of Proposition \ref{rhononneg} (which can be found in Appendix \ref{sec:calsharp}) follows from applying Lemmas \ref{bregmandecomp} and Lemma \ref{confbreglb}. The function $\rho$ shows how we trade off calibration with generalization at any confidence level $p$, and we can compute $\rho$ efficiently as it only requires estimating 1-D conditional expectations.

Our estimation of $\rho$ relies on Nadaraya-Watson kernel regression \citep{nadaraya1964regression, watson1964regression} with a kernel $K$ and bandwidth hyperparameter $\sigma$ following the estimation approaches of \citet{blasiok2023smooth} and \citet{chidambaram2024flawed}. We recall that given $n$ data points $(x_i, y_i) \in \R \times \R$ the kernel regression estimator of $\E[Y \mid X = x]$ is defined as:
\begin{align*}
    \hat{m}_{\sigma}(x) = \frac{\sum_{i = 1}^n \frac{1}{\sigma}K\left(\frac{x - x_i}{\sigma}\right)}{\sum_{i = 1}^n \frac{1}{\sigma}K\left(\frac{x - x_i}{\sigma}\right)} = \frac{\sum_{i = 1}^n K_{\sigma}(x - x_i)}{\sum_{i = 1}^n K_{\sigma}(x - x_i)}. \numberthis \label{kernelreg}
\end{align*}

Previous work \citep{zhang2020mix, popordanoska2022consistent, chidambaram2024flawed} has shown that we can obtain consistent estimators of $L^p$ ECE using \eqref{kernelreg}. We extend these results to calibration errors derived from Bregman divergences. Due to the technical nature of some of the assumptions necessary on $K_{\sigma}$ for this result, we defer a proper presentation to Section \ref{sec:calsharp} and present only an informal version of the result below. The proof relies on uniform convergence results for kernel regression applied to the individual terms in \eqref{bregmandef}.
\begin{theorem}\label{consistencyprop}
    Under suitable conditions on $K$ and the sequence of bandwidths $\sigma_n$, the plugin estimate using $\hat{m}_{\sigma_n}(h(x))$ of the confidence calibration error with respect to $d_{\varphi}$ is consistent so long as $h(X)$ has a density bounded away from 0 and $\E[\Ind_{Y = c(X)} \mid h(X)]$ is continuous.
\end{theorem}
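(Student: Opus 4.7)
The plan is to reduce the theorem to a standard uniform-convergence result for Nadaraya-Watson kernel regression. First I would unpack the Bregman divergence via its definition: writing $m(p) = \E[\Ind_{Y = c(X)} \mid h(X) = p]$, the target quantity is
$$\E[d_{\varphi}(m(h(X)), h(X))] = \E\!\left[\varphi(m(h(X))) - \varphi(h(X)) - \varphi'(h(X))(m(h(X)) - h(X))\right],$$
and the natural plugin estimator $\frac{1}{n} \sum_{i=1}^n d_{\varphi}(\hat{m}_{\sigma_n}(h(x_i)), h(x_i))$ decomposes correspondingly into three pieces. Splitting the total error into (i) a Monte-Carlo term comparing the empirical average of $d_{\varphi}(m(h(x_i)), h(x_i))$ to its expectation, and (ii) a plugin term capturing the replacement of $m$ by $\hat{m}_{\sigma_n}$, term (i) goes to $0$ by the law of large numbers once we observe that the integrand is bounded (since $\varphi, \varphi'$ are continuous on the compact interval $[0,1]$ and $m, h$ take values in $[0,1]$).

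For term (ii), uniform continuity of $\varphi$ and $\varphi'$ on $[0,1]$ lets me bound its contribution by an expression of the form $\omega\!\left(\sup_{p \in I}|\hat{m}_{\sigma_n}(p) - m(p)|\right)$, where $\omega$ is a joint modulus of continuity for $\varphi$ and $\varphi'$ and $I$ is (essentially) the support of $h(X)$. Consistency therefore reduces entirely to establishing uniform convergence in probability of $\hat{m}_{\sigma_n}$ to $m$ on a set where the density of $h(X)$ is bounded away from $0$. This is the classical uniform consistency result for Nadaraya-Watson kernel regression: under mild conditions on $K$ (e.g.\ compactly supported, Lipschitz, integrating to one), a bandwidth schedule satisfying $\sigma_n \to 0$ together with $n \sigma_n / \log n \to \infty$, continuity of $m$, and a positive lower bound on the density $p_{h(X)}$, one obtains $\sup_{p \in I} |\hat{m}_{\sigma_n}(p) - m(p)| \to 0$ in probability. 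The underlying argument splits $\hat{m}_{\sigma_n}(p) - m(p)$ into a bias term controlled by the continuity (or local smoothness) of $m$ and the shrinking bandwidth, and a variance term handled by a covering-number plus Bernstein/Hoeffding argument over a grid of $p$-values of spacing $\sigma_n$.

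The main obstacle will be the boundary behavior of the kernel estimator on $[0,1]$: Nadaraya-Watson suffers boundary bias near the endpoints, and since $h(X) \in [0,1]$ this region is not negligible. I expect to address this by working on the boundary-corrected kernel framework already adopted by \citet{blasiok2023smooth} and \citet{chidambaram2024flawed}, and by leveraging the density-lower-bound hypothesis so that every point (including those near the boundary) is ``well-covered'' by samples at scale $\sigma_n$. A secondary subtlety is that the same sample serves both to build $\hat{m}_{\sigma_n}$ and to average $d_{\varphi}(\hat{m}_{\sigma_n}(h(x_i)), h(x_i))$; however, since the plugin error is controlled \emph{pointwise uniformly} in $p$ by a deterministic event on the regressor, this coupling is harmless once the uniform-convergence event holds with high probability, and the Monte-Carlo step can then be run conditionally on the kernel estimate.
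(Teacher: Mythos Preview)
Your proposal is correct and rests on the same three ingredients the paper uses: expand $d_{\varphi}$, invoke uniform convergence of the Nadaraya--Watson estimator (the paper cites Devroye~1978 directly rather than sketching the covering\,+\,concentration argument), and pass through the uniform continuity of $\varphi$ on a compact set to control the plugin error, with the remaining pieces handled by the law of large numbers. The one structural difference is that the paper's formal estimator (equation~\eqref{plugin}) uses the raw indicator $\Ind_{y_i = c(x_i)}$ in the linear term rather than $\hat m_{\sigma_n}(h(x_i))$: by the tower property (``covariance matching''), $\E[\varphi'(h(X))(m(h(X)) - h(X))] = \E[\varphi'(h(X))(\Ind_{Y=c(X)} - h(X))]$, so the second and third Bregman terms become pure sample averages of observed quantities and only the first term $\varphi(\hat m_{\sigma_n}(h(x_i)))$ needs the regression plugin. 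Your version instead adds and subtracts $d_{\varphi}(m(h(x_i)), h(x_i))$ and controls the full difference $d_{\varphi}(\hat m,\cdot) - d_{\varphi}(m,\cdot)$ via uniform convergence of $\hat m$, which requires bounding both $\varphi$ and $\varphi'$ uniformly; this is slightly more work but equally valid. Your discussion of boundary bias and the same-sample coupling is more careful than the paper's treatment, which simply appeals to Devroye's theorem under the compact-support\,/\,density-bounded-away-from-zero hypothesis.
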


\textbf{Constructing and reading calibration-sharpness diagrams.} Using the above, we construct a diagram with three components: the {\color{magenta} calibration curve}, the {\color{red} pointwise sharpness gap $\rho(h(X))$}, and the {\color{blue} density plot $\hat{p}_{h(X)}$}. Example diagrams are shown in Figures \ref{sharpcalplots} and \ref{relplots} and interpreted in detail in Section \ref{sec:experiments}. The calibration curve corresponds to $\E[\Ind_{Y = c(x)} \mid h(x)]$ plotted at uniformly spaced values of $h(x) \in [0, 1]$, and shows the conditional model accuracy vs. its confidence. A perfectly confidence-calibrated model satisfying $\E[\Ind_{Y = c(x)} \mid h(x)] = h(x)$ is visualized as a dashed line, and any deviations from this dashed line can be interpreted as weaknesses in model calibration.

The sharpness gap is plotted as a band around the calibration curve, and the size of the band corresponds to \eqref{sharpgap} evaluated at each $h(x)$. \textit{A larger band is worse}, as it means a bigger sharpness gap at a predicted confidence $h(x)$, which implies a worse generalization error at that particular confidence level. We also plot a kernel density estimate $\hat{p}_{h(X)}$ of the predicted confidences -- this is intended to show where the model confidences concentrate. A density peak at $h(x) \approx 0.8$, for example, indicates that the model typically predicts around 80\% confidence for its predicted class. 

We point out that another \textbf{common pitfall} in several prior works is to omit this kind of density plot when discussing calibration, since it does not matter if a model is perfectly calibrated in a region of its confidence space that it rarely predicts. Lastly, we also overlay the estimated values of the confidence calibration error ($d_{\varphi, \mathrm{CAL}}$) and the full divergence ($d_{\phi, \mathrm{TOT}}$) on all plots, so as to give a sense of the full generalization error of the model as well as the part attributable to calibration. More precise estimation details for each component of the diagram are provided in Appendix \ref{sec:diagdetails}.

\section{Experiments}\label{sec:experiments}
To illustrate the benefits of calibration-sharpness diagrams, we first use them to revisit the experiments of Section \ref{sec:pitfalls} and then provide a direct comparison to the standard approach of reliability diagrams. For all of the experiments in this section, we focus on the ViT model of Section \ref{sec:pitfalls} for consistency and use Brier score/MSE since we will be comparing to ECE-based visualization methods in Section \ref{sec:comparison}. We provide visualizations for other models and datasets in Appendix \ref{sec:addmrr}. The kernel regressions estimates for the visualized components are computed using a Gaussian kernel with bandwidth $\sigma = 0.05$; we discuss different choices of kernel and bandwidth in Appendix \ref{sec:kernelparams}. All of our experiments were done in PyTorch \citep{pytorch} on a single A5000 GPU.

\subsection{Revisiting MRR Experiments}\label{sec:revisiting}
\begin{figure*}[htp]
\centering
    \subfigure[ViT (Baseline)]{\includegraphics[scale=0.25]{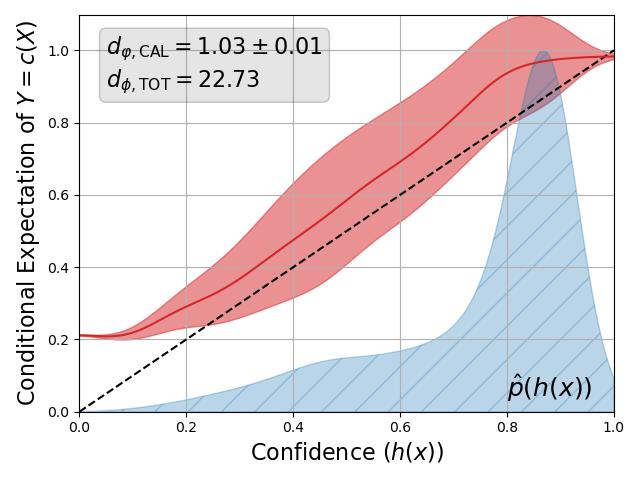}} \quad
    \subfigure[ViT + TS]{\includegraphics[scale=0.25]{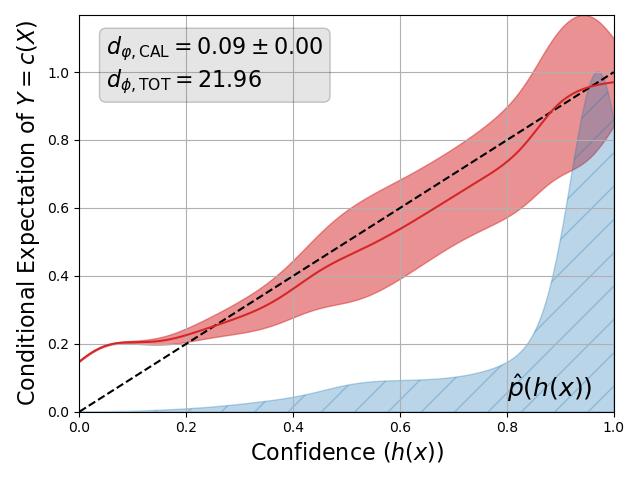}} \quad
    \subfigure[ViT + HB]{\includegraphics[scale=0.25]{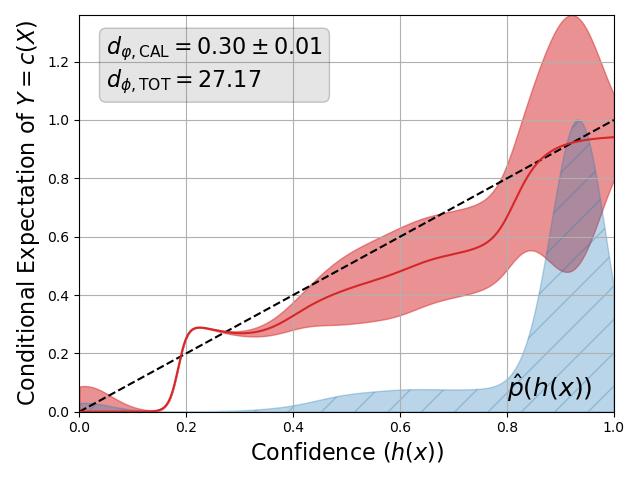}} \quad
    \subfigure[ViT + IR]{\includegraphics[scale=0.25]{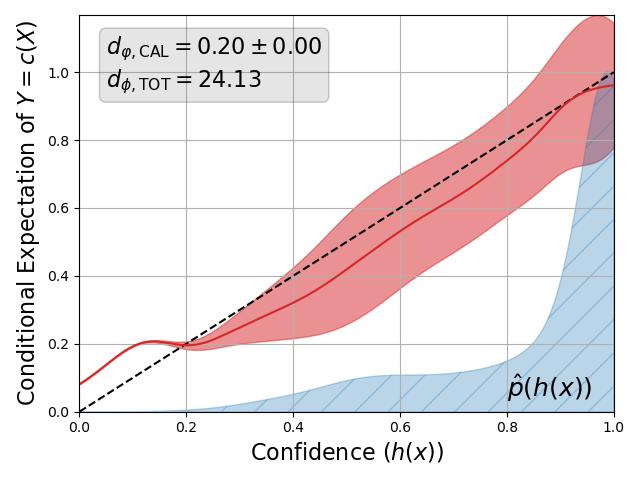}} \quad
    \subfigure[ViT + MRR]{\includegraphics[scale=0.25]{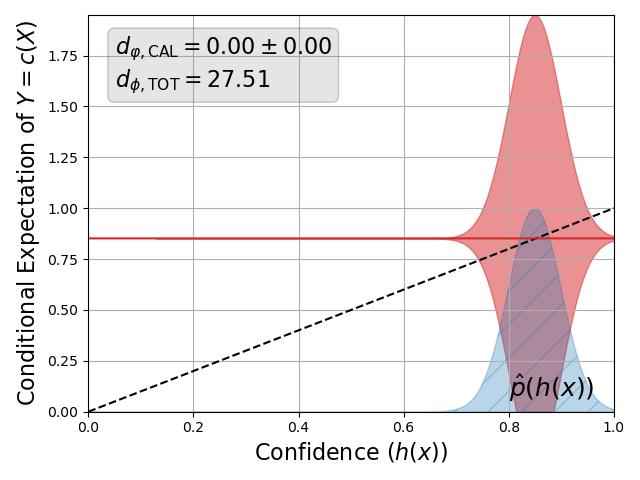}}
\caption{Proposed calibration-sharpness diagrams using MSE/Brier score for the experiments of Section \ref{sec:pitfalls}. Y-axes are not aligned here due to the large sharpness gap of MRR.}
\label{sharpcalplots}
\end{figure*}
Figure \ref{sharpcalplots} shows the calibration-sharpness diagrams (using $d_{\phi}(x, y) = \norm{x - y}^2$) for the ViT recalibration approaches in Table \ref{tab:mrr}. The first thing to notice is that the calibration-sharpness diagram immediately reveals the weaknesses of both MRR and histogram binning - these methods using a small fixed set of predicted confidences (a single confidence in the case of MRR), and as a result incur large sharpness gaps as shown by the red shaded regions.

Additionally, when comparing the better recalibration approaches of temperature scaling and isotonic regression to the baseline model, we see that these algorithms have actually (perhaps counterintuitively) skewed the predicted confidence distributions towards \textit{even higher probabilities} (as shown by the overlayed density plots of $\hat{p}_{h(X)}(h(x))$). Moreover, we see from the pointwise sharpness gap visualizations that the calibration improvements from isotonic regression (and to a lesser extent, temperature scaling) have come at the cost of some increased sharpness gaps. This can be expected from the fact that both methods led to larger predicted confidences, so mistakes incur larger errors. 

\subsection{Comparison to Reliability Diagrams}\label{sec:comparison}
\begin{figure*}[htp]
\centering
    \subfigure[HB, Calibration-Sharpness]{\includegraphics[scale=0.3]{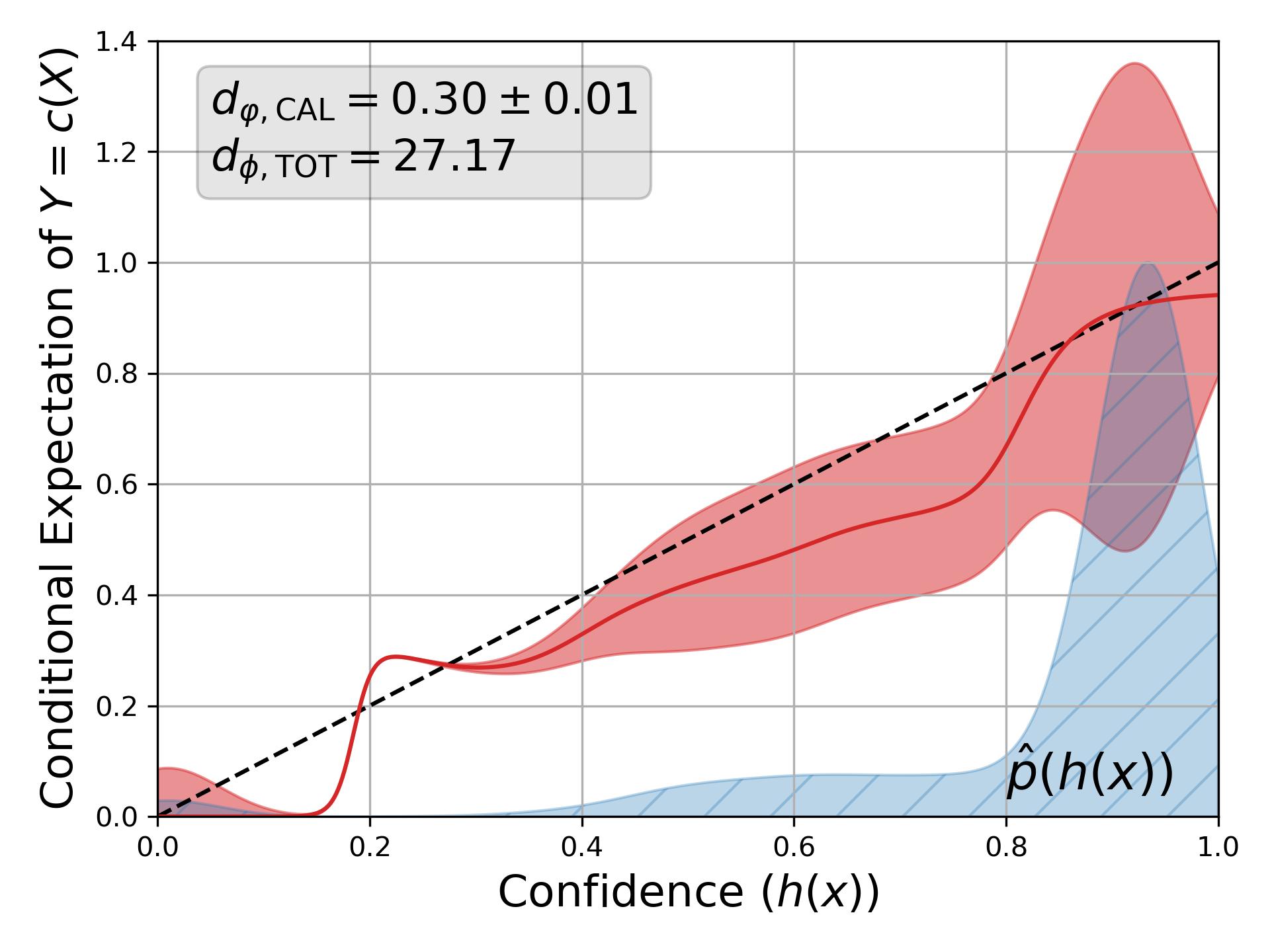}} \quad
    \subfigure[IR, Calibration-Sharpness]{\includegraphics[scale=0.3]{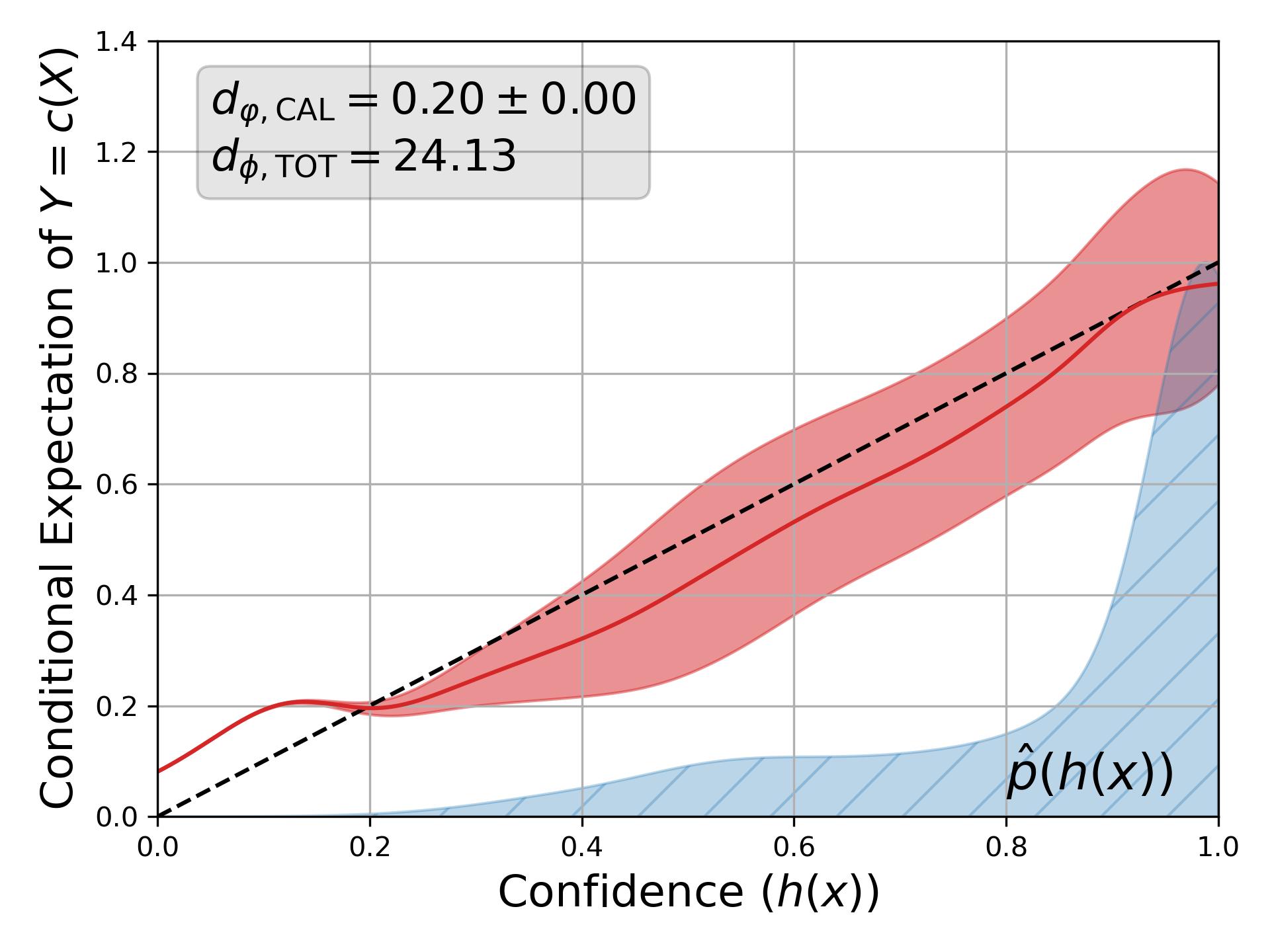}} \quad
    \subfigure[HB, Reliability]{\includegraphics[scale=0.3]{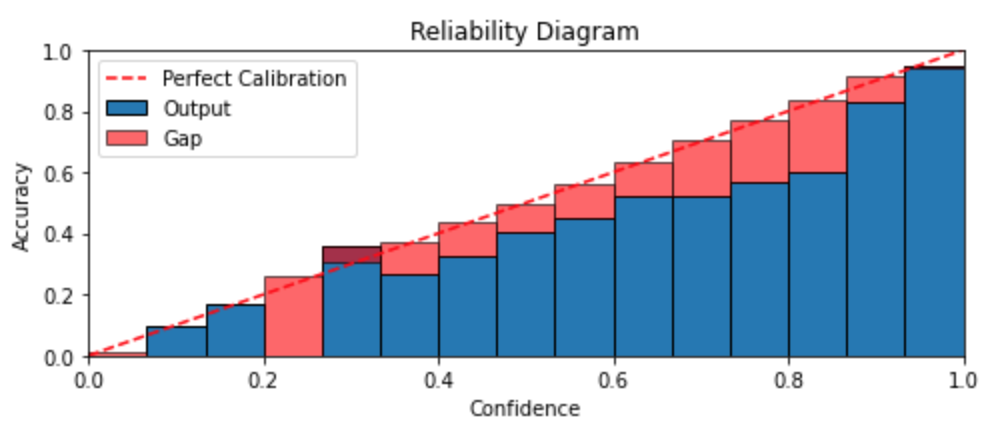}} \quad
    \subfigure[IR, Reliability]{\includegraphics[scale=0.3]{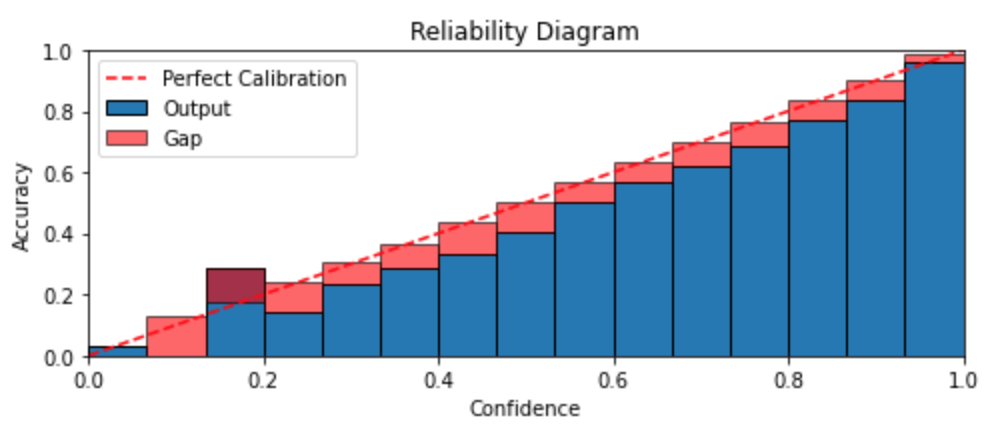}}
\caption{Comparing calibration-sharpness diagrams to reliability diagrams using histogram binning and isotonic regression. Y-axes of HB and IR are aligned for more direct comparison.}
\label{relplots}
\end{figure*}
The standard approach taken by many empirical papers analyzing confidence calibration is to report \textit{reliability diagrams}. These diagrams correspond to histogram binning estimators of the calibration error -- namely, we sort the predicted confidences $h(x)$ into uniformly spaced buckets in the interval $[0, 1]$ and plot a bar corresponding to the average accuracy in each bucket. Similar to the dashed line in our calibration-sharpness diagrams, perfectly confidence calibrated models would have bars that line up with the average confidence of each bucket (i.e. the height of the bar for the $[0, 0.1]$ bucket should be somewhere in $[0, 0.1]$).

Figure \ref{relplots} shows reliability diagrams (using 15 bins, as was done in \citet{guo2017calibration}) for the histogram binning (HB) and isotonic regression (IR) recalibration approaches juxtaposed with axis-aligned calibration-sharpness diagrams from Figure \ref{sharpcalplots}. 
The reliability diagrams suggest that IR leads to better calibration than HB in the confidence range around $0.2$ and in $[0.5, 0.9]$. However, these diagrams cannot be used to compare generalization and also do not give a sense of where predicted confidences concentrate.

On the other hand, the calibration-sharpness diagrams allow us to compare generalization while providing a more granular view of calibration performance as well. They show that (1) while the calibration of HB is worse than IR, the confidence region around $0.2$ is not a major problem for HB since there is no confidence density there and (2) that IR has better generalization performance (with respect to Brier score) due to smaller sharpness gaps. 

Our work is not the first to point out these drawbacks of reliability diagrams for comparing calibration; the work of \citet{blasiok2023smooth} on SmoothECE also discusses such issues and puts forth a visualization that includes density plots combined with a kernel regression estimate of the calibration error as opposed to a binning estimate. However, their approach does not include a sharpness component, which in this instance is crucial for comparing HB and IR since this is where HB really loses out. We compare to the SmoothECE visualization approach in Appendix \ref{sec:smoothece}.

\section{Conclusion}
In this work, we have reviewed the reporting of calibration metrics in the recent literature and identified several pitfalls that appear even in well-known works. To address the pointed out issues, we have proposed a reporting methodology based on proper scoring rules that are Bregman divergences, and shown in both theory and practice how our approach fits alongside the popular notion of confidence calibration in the literature. In developing our methodology, we have also introduced a new extension of reliability diagrams for visualizing calibration.

The core tenets we hope to emphasize through both our theory and experiments are: report test accuracy and calibration error alongside the appropriate generalization metric, use a visualization of calibration that shows the density of predicted confidences, and be aware of whether improvements in calibration in some regions of predicted confidence come at the cost of sharpness/generalization. We believe there is still much work to be done on addressing how to compare and visualize calibration/generalization metrics, and we hope future work continues to push in this direction.

\section*{Ethics Statement}
While calibration has become an increasingly important topic given the use of machine learning models for decision-making, our work concerns only the theory and practice of reporting calibration. As a result, we do not anticipate any negative societal impacts or significant ethical considerations as consequences of this work.

\section*{Reproducibility Statement}
Full proofs of all results in the main paper can be found in the appendices following the references. The code necessary to recreate the experiments in the main paper as well as the experiments in the appendices can be found in the supplementary material, as well as in the Python package \href{https://pypi.org/project/sharpcal/}{sharpcal}.

\section*{Acknowledgments}
Rong Ge and Muthu Chidambaram were supported by NSF Award DMS-2031849 and CCF-1845171 (CAREER) during the completion of this work.

\bibliography{iclr2025_conference}

\begin{thebibliography}{65}
\providecommand{\natexlab}[1]{#1}
\providecommand{\url}[1]{\texttt{#1}}
\expandafter\ifx\csname urlstyle\endcsname\relax
  \providecommand{\doi}[1]{doi: #1}\else
  \providecommand{\doi}{doi: \begingroup \urlstyle{rm}\Url}\fi

\bibitem[Arrieta-Ibarra et~al.(2022)Arrieta-Ibarra, Gujral, Tannen, Tygert, and Xu]{arrieta2022metrics}
Imanol Arrieta-Ibarra, Paman Gujral, Jonathan Tannen, Mark Tygert, and Cherie Xu.
\newblock Metrics of calibration for probabilistic predictions.
\newblock \emph{arXiv preprint arXiv:2205.09680}, 2022.

\bibitem[Boissonnat et~al.(2010)Boissonnat, Nielsen, and Nock]{Boissonnat_2010}
Jean-Daniel Boissonnat, Frank Nielsen, and Richard Nock.
\newblock Bregman voronoi diagrams.
\newblock \emph{Discrete and Computational Geometry}, 44\penalty0 (2), 2010.
\newblock ISSN 1432-0444.
\newblock \doi{10.1007/s00454-010-9256-1}.
\newblock URL \url{http://dx.doi.org/10.1007/s00454-010-9256-1}.

\bibitem[Br{\"o}cker(2008)]{brocker2008some}
Jochen Br{\"o}cker.
\newblock Some remarks on the reliability of categorical probability forecasts.
\newblock \emph{Monthly weather review}, 136\penalty0 (11):\penalty0 4488--4502, 2008.

\bibitem[Bröcker(2009)]{Brocker_2009}
Jochen Bröcker.
\newblock Reliability, sufficiency, and the decomposition of proper scores.
\newblock \emph{Quarterly Journal of the Royal Meteorological Society}, 135\penalty0 (643):\penalty0 1512–1519, July 2009.
\newblock ISSN 1477-870X.
\newblock \doi{10.1002/qj.456}.
\newblock URL \url{http://dx.doi.org/10.1002/qj.456}.

\bibitem[Błasiok \& Nakkiran(2023)Błasiok and Nakkiran]{blasiok2023smooth}
Jarosław Błasiok and Preetum Nakkiran.
\newblock Smooth ece: Principled reliability diagrams via kernel smoothing, 2023.

\bibitem[Błasiok et~al.(2023)Błasiok, Gopalan, Hu, and Nakkiran]{blasiok2023unifying}
Jarosław Błasiok, Parikshit Gopalan, Lunjia Hu, and Preetum Nakkiran.
\newblock A unifying theory of distance from calibration, 2023.

\bibitem[Chidambaram et~al.(2024)Chidambaram, Lee, McSwiggen, and Rezchikov]{chidambaram2024flawed}
Muthu Chidambaram, Holden Lee, Colin McSwiggen, and Semon Rezchikov.
\newblock How flawed is ece? an analysis via logit smoothing, 2024.

\bibitem[Dawid(1982)]{dawid1982well}
A~Philip Dawid.
\newblock The well-calibrated bayesian.
\newblock \emph{Journal of the American Statistical Association}, 77\penalty0 (379):\penalty0 605--610, 1982.

\bibitem[DeGroot \& Fienberg(1983)DeGroot and Fienberg]{degroot1983comparison}
Morris~H DeGroot and Stephen~E Fienberg.
\newblock The comparison and evaluation of forecasters.
\newblock \emph{Journal of the Royal Statistical Society: Series D (The Statistician)}, 32\penalty0 (1-2):\penalty0 12--22, 1983.

\bibitem[Desai \& Durrett(2020)Desai and Durrett]{desai2020calibration}
Shrey Desai and Greg Durrett.
\newblock Calibration of pre-trained transformers.
\newblock In \emph{Proceedings of the 2020 Conference on Empirical Methods in Natural Language Processing (EMNLP)}, pp.\  295--302, 2020.

\bibitem[Devroye(1978)]{devroye1978}
Luc~P. Devroye.
\newblock The uniform convergence of the nadaraya-watson regression function estimate.
\newblock \emph{The Canadian Journal of Statistics / La Revue Canadienne de Statistique}, 6\penalty0 (2):\penalty0 179--191, 1978.
\newblock ISSN 03195724.
\newblock URL \url{http://www.jstor.org/stable/3315046}.

\bibitem[Dimitriadis et~al.(2023)Dimitriadis, Gneiting, Jordan, and Vogel]{dimitriadis2023evaluating}
Timo Dimitriadis, Tilmann Gneiting, Alexander~I Jordan, and Peter Vogel.
\newblock Evaluating probabilistic classifiers: The triptych.
\newblock \emph{arXiv preprint arXiv:2301.10803}, 2023.

\bibitem[Dosovitskiy et~al.(2021)Dosovitskiy, Beyer, Kolesnikov, Weissenborn, Zhai, Unterthiner, Dehghani, Minderer, Heigold, Gelly, Uszkoreit, and Houlsby]{dosovitskiy2020vit}
Alexey Dosovitskiy, Lucas Beyer, Alexander Kolesnikov, Dirk Weissenborn, Xiaohua Zhai, Thomas Unterthiner, Mostafa Dehghani, Matthias Minderer, Georg Heigold, Sylvain Gelly, Jakob Uszkoreit, and Neil Houlsby.
\newblock An image is worth 16x16 words: Transformers for image recognition at scale.
\newblock \emph{ICLR}, 2021.

\bibitem[Elmarakeby et~al.(2021)Elmarakeby, Hwang, Arafeh, Crowdis, Gang, Liu, AlDubayan, Salari, Kregel, Richter, et~al.]{elmarakeby2021biologically}
Haitham~A Elmarakeby, Justin Hwang, Rand Arafeh, Jett Crowdis, Sydney Gang, David Liu, Saud~H AlDubayan, Keyan Salari, Steven Kregel, Camden Richter, et~al.
\newblock Biologically informed deep neural network for prostate cancer discovery.
\newblock \emph{Nature}, 598\penalty0 (7880):\penalty0 348--352, 2021.

\bibitem[Gneiting \& Raftery(2007)Gneiting and Raftery]{gneiting2007strictly}
Tilmann Gneiting and Adrian~E Raftery.
\newblock Strictly proper scoring rules, prediction, and estimation.
\newblock \emph{Journal of the American statistical Association}, 102\penalty0 (477):\penalty0 359--378, 2007.

\bibitem[Gneiting \& Resin(2023)Gneiting and Resin]{gneiting23}
Tilmann Gneiting and Johannes Resin.
\newblock {Regression diagnostics meets forecast evaluation: conditional calibration, reliability diagrams, and coefficient of determination}.
\newblock \emph{Electronic Journal of Statistics}, 17\penalty0 (2):\penalty0 3226 -- 3286, 2023.
\newblock \doi{10.1214/23-EJS2180}.
\newblock URL \url{https://doi.org/10.1214/23-EJS2180}.

\bibitem[Gneiting et~al.(2007)Gneiting, Balabdaoui, and Raftery]{gneiting2007probabilistic}
Tilmann Gneiting, Fadoua Balabdaoui, and Adrian~E Raftery.
\newblock Probabilistic forecasts, calibration and sharpness.
\newblock \emph{Journal of the Royal Statistical Society: Series B (Statistical Methodology)}, 69\penalty0 (2):\penalty0 243--268, 2007.

\bibitem[Gopalan et~al.(2024)Gopalan, Hu, and Rothblum]{gopalan2024computationally}
Parikshit Gopalan, Lunjia Hu, and Guy~N. Rothblum.
\newblock On computationally efficient multi-class calibration, 2024.

\bibitem[Gruber \& Buettner(2024)Gruber and Buettner]{gruber2024betteruncertaintycalibrationproper}
Sebastian~G. Gruber and Florian Buettner.
\newblock Better uncertainty calibration via proper scores for classification and beyond, 2024.
\newblock URL \url{https://arxiv.org/abs/2203.07835}.

\bibitem[Guo et~al.(2017)Guo, Pleiss, Sun, and Weinberger]{guo2017calibration}
Chuan Guo, Geoff Pleiss, Yu~Sun, and Kilian~Q Weinberger.
\newblock On calibration of modern neural networks.
\newblock In \emph{International Conference on Machine Learning}, pp.\  1321--1330. PMLR, 2017.

\bibitem[Gupta \& Ramdas(2022)Gupta and Ramdas]{gupta2022toplabel}
Chirag Gupta and Aaditya Ramdas.
\newblock Top-label calibration and multiclass-to-binary reductions, 2022.

\bibitem[Gy{\"o}rfi et~al.(2002)Gy{\"o}rfi, Kohler, Krzyżak, and Walk]{Gyrfi2002ADT}
L{\'a}szl{\'o} Gy{\"o}rfi, Michael Kohler, Adam Krzyżak, and Harro Walk.
\newblock A distribution-free theory of nonparametric regression.
\newblock In \emph{Springer Series in Statistics}, 2002.
\newblock URL \url{https://api.semanticscholar.org/CorpusID:43315484}.

\bibitem[He et~al.(2015)He, Zhang, Ren, and Sun]{resnet}
Kaiming He, Xiangyu Zhang, Shaoqing Ren, and Jian Sun.
\newblock Deep residual learning for image recognition.
\newblock \emph{CoRR}, abs/1512.03385, 2015.
\newblock URL \url{http://arxiv.org/abs/1512.03385}.

\bibitem[Hu et~al.(2023)Hu, Yang, Chen, Li, Sima, Zhu, Chai, Du, Lin, Wang, Lu, Jia, Liu, Dai, Qiao, and Li]{hu2023planningoriented}
Yihan Hu, Jiazhi Yang, Li~Chen, Keyu Li, Chonghao Sima, Xizhou Zhu, Siqi Chai, Senyao Du, Tianwei Lin, Wenhai Wang, Lewei Lu, Xiaosong Jia, Qiang Liu, Jifeng Dai, Yu~Qiao, and Hongyang Li.
\newblock Planning-oriented autonomous driving, 2023.

\bibitem[Jiang et~al.(2020)Jiang, Araki, Ding, and Neubig]{Jiang2020HowCW}
Zhengbao Jiang, J.~Araki, Haibo Ding, and Graham Neubig.
\newblock How can we know when language models know? on the calibration of language models for question answering.
\newblock \emph{Transactions of the Association for Computational Linguistics}, 9:\penalty0 962--977, 2020.
\newblock URL \url{https://api.semanticscholar.org/CorpusID:235078802}.

\bibitem[Joy et~al.(2022)Joy, Pinto, Lim, Torr, and Dokania]{joy2022sampledependent}
Tom Joy, Francesco Pinto, Ser-Nam Lim, Philip H.~S. Torr, and Puneet~K. Dokania.
\newblock Sample-dependent adaptive temperature scaling for improved calibration, 2022.

\bibitem[Kakade \& Foster(2008)Kakade and Foster]{kakadeF08}
Sham Kakade and Dean Foster.
\newblock Deterministic calibration and nash equilibrium.
\newblock \emph{Journal of Computer and System Sciences}, 74(1):\penalty0 115–130, 2008.

\bibitem[Kull \& Flach(2015)Kull and Flach]{kull2015novel}
Meelis Kull and Peter Flach.
\newblock Novel decompositions of proper scoring rules for classification: Score adjustment as precursor to calibration.
\newblock In \emph{Joint European Conference on Machine Learning and Knowledge Discovery in Databases}, pp.\  68--85. Springer, 2015.

\bibitem[Kull et~al.(2019)Kull, Perello-Nieto, K{\"a}ngsepp, Song, Flach, et~al.]{kull2019beyond}
Meelis Kull, Miquel Perello-Nieto, Markus K{\"a}ngsepp, Hao Song, Peter Flach, et~al.
\newblock Beyond temperature scaling: Obtaining well-calibrated multiclass probabilities with dirichlet calibration.
\newblock \emph{arXiv preprint arXiv:1910.12656}, 2019.

\bibitem[Kumar et~al.(2019)Kumar, Liang, and Ma]{kumar2019verified}
Ananya Kumar, Percy~S Liang, and Tengyu Ma.
\newblock Verified uncertainty calibration.
\newblock In \emph{Advances in Neural Information Processing Systems}, pp.\  3792--3803, 2019.

\bibitem[Kumar et~al.(2018)Kumar, Sarawagi, and Jain]{kumar2018trainable}
Aviral Kumar, Sunita Sarawagi, and Ujjwal Jain.
\newblock Trainable calibration measures for neural networks from kernel mean embeddings.
\newblock In \emph{International Conference on Machine Learning}, pp.\  2805--2814. PMLR, 2018.

\bibitem[Lee et~al.(2022)Lee, Huang, Hassani, and Dobriban]{lee2022t}
Donghwan Lee, Xinmeng Huang, Hamed Hassani, and Edgar Dobriban.
\newblock T-cal: An optimal test for the calibration of predictive models.
\newblock \emph{arXiv preprint arXiv:2203.01850}, 2022.

\bibitem[Lichtenstein et~al.(1982)Lichtenstein, Fischhoff, and Phillips]{Lichtenstein_Fischhoff_Phillips_1982}
Sarah Lichtenstein, Baruch Fischhoff, and Lawrence~D. Phillips.
\newblock \emph{Calibration of probabilities: The state of the art to 1980}, pp.\  306–334.
\newblock Cambridge University Press, 1982.

\bibitem[Liu et~al.(2022)Liu, Mao, Wu, Feichtenhofer, Darrell, and Xie]{liu2022convnet}
Zhuang Liu, Hanzi Mao, Chao-Yuan Wu, Christoph Feichtenhofer, Trevor Darrell, and Saining Xie.
\newblock A convnet for the 2020s.
\newblock \emph{Proceedings of the IEEE/CVF Conference on Computer Vision and Pattern Recognition (CVPR)}, 2022.

\bibitem[Mehrtash et~al.(2019)Mehrtash, Wells, Tempany, Abolmaesumi, and Kapur]{Mehrtash2019ConfidenceCA}
Alireza Mehrtash, William~M. Wells, Clare~M. Tempany, Purang Abolmaesumi, and Tina Kapur.
\newblock Confidence calibration and predictive uncertainty estimation for deep medical image segmentation.
\newblock \emph{IEEE Transactions on Medical Imaging}, 39:\penalty0 3868--3878, 2019.
\newblock URL \url{https://api.semanticscholar.org/CorpusID:208512953}.

\bibitem[Mincer \& Zarnowitz(1969)Mincer and Zarnowitz]{mincer1969}
Jacob~A. Mincer and Victor Zarnowitz.
\newblock {The Evaluation of Economic Forecasts}.
\newblock In \emph{{Economic Forecasts and Expectations: Analysis of Forecasting Behavior and Performance}}, NBER Chapters, pp.\  3--46. National Bureau of Economic Research, Inc, May 1969.
\newblock URL \url{https://ideas.repec.org/h/nbr/nberch/1214.html}.

\bibitem[Minderer et~al.(2021)Minderer, Djolonga, Romijnders, Hubis, Zhai, Houlsby, Tran, and Lucic]{minderer2021revisiting}
Matthias Minderer, Josip Djolonga, Rob Romijnders, Frances Hubis, Xiaohua Zhai, Neil Houlsby, Dustin Tran, and Mario Lucic.
\newblock Revisiting the calibration of modern neural networks.
\newblock \emph{Advances in Neural Information Processing Systems}, 34:\penalty0 15682--15694, 2021.

\bibitem[Murphy(1973)]{murphy1973new}
Allan~H Murphy.
\newblock A new vector partition of the probability score.
\newblock \emph{Journal of applied Meteorology}, 12\penalty0 (4):\penalty0 595--600, 1973.

\bibitem[Müller et~al.(2020)Müller, Kornblith, and Hinton]{muller2020does}
Rafael Müller, Simon Kornblith, and Geoffrey Hinton.
\newblock When does label smoothing help?, 2020.

\bibitem[Nadaraya(1964)]{nadaraya1964regression}
E.~A. Nadaraya.
\newblock On estimating regression.
\newblock \emph{Theory of Probability \& Its Applications}, 9\penalty0 (1):\penalty0 141--142, 1964.
\newblock \doi{10.1137/1109020}.
\newblock URL \url{https://doi.org/10.1137/1109020}.

\bibitem[Naeini et~al.(2014)Naeini, Cooper, and Hauskrecht]{naeini2014binary}
Mahdi~Pakdaman Naeini, Gregory~F Cooper, and Milos Hauskrecht.
\newblock Binary classifier calibration: Non-parametric approach.
\newblock \emph{arXiv preprint arXiv:1401.3390}, 2014.

\bibitem[Nixon et~al.(2019)Nixon, Dusenberry, Zhang, Jerfel, and Tran]{nixon2019measuring}
Jeremy Nixon, Michael~W Dusenberry, Linchuan Zhang, Ghassen Jerfel, and Dustin Tran.
\newblock Measuring calibration in deep learning.
\newblock In \emph{CVPR Workshops}, volume~2, 2019.

\bibitem[Nogales et~al.(2021)Nogales, Álvaro J.~García-Tejedor, Monge, Vara, and Antón]{nogales2021}
Alberto Nogales, Álvaro J.~García-Tejedor, Diana Monge, Juan~Serrano Vara, and Cristina Antón.
\newblock A survey of deep learning models in medical therapeutic areas.
\newblock \emph{Artificial Intelligence in Medicine}, 112:\penalty0 102020, 2021.
\newblock ISSN 0933-3657.
\newblock \doi{https://doi.org/10.1016/j.artmed.2021.102020}.
\newblock URL \url{https://www.sciencedirect.com/science/article/pii/S0933365721000130}.

\bibitem[Paszke et~al.(2019)Paszke, Gross, Massa, Lerer, Bradbury, Chanan, Killeen, Lin, Gimelshein, Antiga, Desmaison, K{\"{o}}pf, Yang, DeVito, Raison, Tejani, Chilamkurthy, Steiner, Fang, Bai, and Chintala]{pytorch}
Adam Paszke, Sam Gross, Francisco Massa, Adam Lerer, James Bradbury, Gregory Chanan, Trevor Killeen, Zeming Lin, Natalia Gimelshein, Luca Antiga, Alban Desmaison, Andreas K{\"{o}}pf, Edward~Z. Yang, Zach DeVito, Martin Raison, Alykhan Tejani, Sasank Chilamkurthy, Benoit Steiner, Lu~Fang, Junjie Bai, and Soumith Chintala.
\newblock Pytorch: An imperative style, high-performance deep learning library.
\newblock \emph{CoRR}, abs/1912.01703, 2019.
\newblock URL \url{http://arxiv.org/abs/1912.01703}.

\bibitem[Perez-Lebel et~al.(2023)Perez-Lebel, Morvan, and Varoquaux]{perezlebel2023calibrationestimatinggroupingloss}
Alexandre Perez-Lebel, Marine~Le Morvan, and Gaël Varoquaux.
\newblock Beyond calibration: estimating the grouping loss of modern neural networks, 2023.
\newblock URL \url{https://arxiv.org/abs/2210.16315}.

\bibitem[Platt(1999)]{Platt99probabilisticoutputs}
John~C. Platt.
\newblock Probabilistic outputs for support vector machines and comparisons to regularized likelihood methods.
\newblock In \emph{ADVANCES IN LARGE MARGIN CLASSIFIERS}, pp.\  61--74. MIT Press, 1999.

\bibitem[Pohle(2020)]{pohle2020murphy}
Marc-Oliver Pohle.
\newblock The murphy decomposition and the calibration-resolution principle: A new perspective on forecast evaluation, 2020.

\bibitem[Popordanoska et~al.(2022)Popordanoska, Sayer, and Blaschko]{popordanoska2022consistent}
Teodora Popordanoska, Raphael Sayer, and Matthew Blaschko.
\newblock A consistent and differentiable lp canonical calibration error estimator.
\newblock \emph{Advances in Neural Information Processing Systems}, 35:\penalty0 7933--7946, 2022.

\bibitem[Roelofs et~al.(2022)Roelofs, Cain, Shlens, and Mozer]{roelofs2022mitigating}
Rebecca Roelofs, Nicholas Cain, Jonathon Shlens, and Michael~C Mozer.
\newblock Mitigating bias in calibration error estimation.
\newblock In \emph{International Conference on Artificial Intelligence and Statistics}, pp.\  4036--4054. PMLR, 2022.

\bibitem[Russakovsky et~al.(2015)Russakovsky, Deng, Su, Krause, Satheesh, Ma, Huang, Karpathy, Khosla, Bernstein, Berg, and Fei-Fei]{imagenet15russakovsky}
Olga Russakovsky, Jia Deng, Hao Su, Jonathan Krause, Sanjeev Satheesh, Sean Ma, Zhiheng Huang, Andrej Karpathy, Aditya Khosla, Michael Bernstein, Alexander~C. Berg, and Li~Fei-Fei.
\newblock {ImageNet Large Scale Visual Recognition Challenge}.
\newblock \emph{International Journal of Computer Vision (IJCV)}, 115\penalty0 (3):\penalty0 211--252, 2015.
\newblock \doi{10.1007/s11263-015-0816-y}.

\bibitem[Steiner et~al.(2021)Steiner, Kolesnikov, , Zhai, Wightman, Uszkoreit, and Beyer]{steiner2021augreg}
Andreas Steiner, Alexander Kolesnikov, , Xiaohua Zhai, Ross Wightman, Jakob Uszkoreit, and Lucas Beyer.
\newblock How to train your vit? data, augmentation, and regularization in vision transformers.
\newblock \emph{arXiv preprint arXiv:2106.10270}, 2021.

\bibitem[Tan \& Le(2019)Tan and Le]{tan2019efficientnet}
Mingxing Tan and Quoc Le.
\newblock Efficientnet: Rethinking model scaling for convolutional neural networks.
\newblock In \emph{International conference on machine learning}, pp.\  6105--6114. PMLR, 2019.

\bibitem[Thulasidasan et~al.(2019)Thulasidasan, Chennupati, Bilmes, Bhattacharya, and Michalak]{thulasidasan2019mixup}
Sunil Thulasidasan, Gopinath Chennupati, Jeff~A Bilmes, Tanmoy Bhattacharya, and Sarah Michalak.
\newblock On mixup training: Improved calibration and predictive uncertainty for deep neural networks.
\newblock \emph{Advances in Neural Information Processing Systems}, 32:\penalty0 13888--13899, 2019.

\bibitem[Tian et~al.(2023)Tian, Mitchell, Zhou, Sharma, Rafailov, Yao, Finn, and Manning]{tian2023just}
Katherine Tian, Eric Mitchell, Allan Zhou, Archit Sharma, Rafael Rafailov, Huaxiu Yao, Chelsea Finn, and Christopher~D. Manning.
\newblock Just ask for calibration: Strategies for eliciting calibrated confidence scores from language models fine-tuned with human feedback, 2023.

\bibitem[Tsybakov(2008)]{Tsybakov2008IntroductionTN}
A.~Tsybakov.
\newblock Introduction to nonparametric estimation.
\newblock In \emph{Springer Series in Statistics}, 2008.
\newblock URL \url{https://api.semanticscholar.org/CorpusID:42933599}.

\bibitem[Vaicenavicius et~al.(2019)Vaicenavicius, Widmann, Andersson, Lindsten, Roll, and Sch{\"o}n]{vaicenavicius2019evaluating}
Juozas Vaicenavicius, David Widmann, Carl Andersson, Fredrik Lindsten, Jacob Roll, and Thomas Sch{\"o}n.
\newblock Evaluating model calibration in classification.
\newblock In \emph{The 22nd International Conference on Artificial Intelligence and Statistics}, pp.\  3459--3467. PMLR, 2019.

\bibitem[Wang et~al.(2021)Wang, Feng, and Zhang]{focalloss2021}
Deng-Bao Wang, Lei Feng, and Min-Ling Zhang.
\newblock Rethinking calibration of deep neural networks: Do not be afraid of overconfidence.
\newblock In M.~Ranzato, A.~Beygelzimer, Y.~Dauphin, P.S. Liang, and J.~Wortman Vaughan (eds.), \emph{Advances in Neural Information Processing Systems}, volume~34, pp.\  11809--11820. Curran Associates, Inc., 2021.
\newblock URL \url{https://proceedings.neurips.cc/paper_files/paper/2021/file/61f3a6dbc9120ea78ef75544826c814e-Paper.pdf}.

\bibitem[Watson(1964)]{watson1964regression}
Geoffrey~S. Watson.
\newblock Smooth regression analysis.
\newblock \emph{Sankhyā: The Indian Journal of Statistics, Series A (1961-2002)}, 26\penalty0 (4):\penalty0 359--372, 1964.
\newblock ISSN 0581572X.
\newblock URL \url{http://www.jstor.org/stable/25049340}.

\bibitem[Wen et~al.(2021)Wen, Jerfel, Muller, Dusenberry, Snoek, Lakshminarayanan, and Tran]{wen2021combining}
Yeming Wen, Ghassen Jerfel, Rafael Muller, Michael~W. Dusenberry, Jasper Snoek, Balaji Lakshminarayanan, and Dustin Tran.
\newblock Combining ensembles and data augmentation can harm your calibration, 2021.

\bibitem[Widmann et~al.(2020)Widmann, Lindsten, and Zachariah]{widmann2020calibration}
David Widmann, Fredrik Lindsten, and Dave Zachariah.
\newblock Calibration tests beyond classification.
\newblock In \emph{International Conference on Learning Representations}, 2020.

\bibitem[Wightman(2019)]{rw2019timm}
Ross Wightman.
\newblock Pytorch image models.
\newblock \url{https://github.com/rwightman/pytorch-image-models}, 2019.

\bibitem[Zadrozny \& Elkan(2001)Zadrozny and Elkan]{zadrozny2001obtaining}
Bianca Zadrozny and Charles Elkan.
\newblock Obtaining calibrated probability estimates from decision trees and naive bayesian classifiers.
\newblock In \emph{Icml}, volume~1, pp.\  609--616. Citeseer, 2001.

\bibitem[Zadrozny \& Elkan(2002)Zadrozny and Elkan]{zadrozny2002transforming}
Bianca Zadrozny and Charles Elkan.
\newblock Transforming classifier scores into accurate multiclass probability estimates.
\newblock In \emph{Proceedings of the eighth ACM SIGKDD international conference on Knowledge discovery and data mining}, pp.\  694--699. ACM, 2002.

\bibitem[Zhang et~al.(2020)Zhang, Kailkhura, and Han]{zhang2020mix}
Jize Zhang, Bhavya Kailkhura, and T~Yong-Jin Han.
\newblock Mix-n-match: Ensemble and compositional methods for uncertainty calibration in deep learning.
\newblock In \emph{International conference on machine learning}, pp.\  11117--11128. PMLR, 2020.

\bibitem[Zhao et~al.(2021)Zhao, Kim, Sahoo, Ma, and Ermon]{zhao2021calibrating}
Shengjia Zhao, Michael~P. Kim, Roshni Sahoo, Tengyu Ma, and Stefano Ermon.
\newblock Calibrating predictions to decisions: A novel approach to multi-class calibration.
\newblock In \emph{Advances in Neural Information Processing Systems}, 2021.
\newblock URL \url{https://openreview.net/forum?id=iFF-zKCgzS}.

\end{thebibliography}
\bibliographystyle{iclr2025_conference}

\newpage
\appendix
\tableofcontents
\newpage

\section{Background Results}\label{sec:backgroundproofs}
\bregmandecomp*
\begin{proof}
    We observe that by \eqref{bregmandef}, we have:
    \begin{align*}
        \E[d_{\phi}(\E[Z \mid X], X)] &= \E[\phi(\E[Z \mid X])] -  \E[\phi(X)] - \E[\nabla \phi(X)^{\top} (\E[Z \mid X] - X)] \\
        &= \E[\phi(\E[Z \mid X])] -  \E[\phi(X)] - \E[\nabla \phi(X)^{\top} (Z - X)]. \numberthis \label{covmatch}
    \end{align*}
    Where the last line follows from the fact that $\E[Y\E[Z \mid X]] = \E[YZ]$ for any $X$-measurable random variable $Y$. Equation \eqref{decompdef} then immediately follows from applying definition \eqref{bregmandef}.
\end{proof}

\resolution*
\begin{proof}
    By Jensen's inequality and the tower property of conditional expectation, we have:
    \begin{align*}
        \E[\phi(\E[\pi_{Y \mid X} \mid g_2(X)])] &= \E[\E[\phi(\E[\pi_{Y \mid X} \mid g_2(X)]) \mid g_1(X)]] \\
        &\ge \E[\phi(\E[\E[\pi_{Y \mid X} \mid g_2(X)] \mid g_1(X)])] \\
        &= \E[\phi(\E[\pi_{Y \mid X} \mid g_1(X)])].
    \end{align*}
\end{proof}

\lossub*
\begin{proof}
    Firstly, we observe that for any calibrated predictor we have $\E[d_{\phi}(\E[\pi_{Y \mid X} \mid g(X)], g(X))] = 0$. Next, we note that for every $g: \R^d \to \Delta^{k - 1}$, $g(X)$ is measurable with respect to the $\sigma$-algebra generated by the constant predictor $\pi_Y$ (since this is the trivial $\sigma$-algebra). The inequality \eqref{lossbound} then follows by applying Proposition \ref{resolution} to the decomposition from Lemma \ref{bregmandecomp}.
\end{proof}

\lpece*
\begin{proof}
    We first note that the case of $q = 1$ is not a Bregman divergence since Bregman divergences are differentiable in their first argument. For $q > 1$, Lemma 2 in \citet{Boissonnat_2010} shows that a twice differentiable function $\phi$ induces a symmetric Bregman divergence $d_{\phi}$ iff the Hessian of $\phi$ is constant. This implies that for any $L^q$ norm (raised to the $q$ power) to be a Bregman divergence, differentiating it twice in the first argument must yield a constant. It is easy to see that this is only the case for $q = 2$, and the result follows.
\end{proof}

\section{Proofs for Section \ref{sec:confcal}}\label{sec:confcalproofs}
\counterex*
\begin{proof}
    Observing that $\Ind_{Y = c(X)} = 0$ and $h(X) = 1/k + \epsilon$ gives the first part of \eqref{msegap}. On the other hand, the leading term of $\E[\norm{\pi_{Y \mid X} - g(X)}^2]$ is $(1 - 1/k + \epsilon/(k - 1))^2$, which is $\Omega(1)$.
\end{proof}

\confbreglb*
\begin{proof}
    Firstly, recalling that $e_{c(X)}$ denotes the basis vector in $\R^k$ with non-zero index $c(X)$,
    \begin{align*}
        \E[\Ind_{Y = c(X)} \mid g(X)] &= \E[\E[\Ind_{Y = c(X)} \mid X] \mid g(X)] \\
        &= \E[e_{c(X)}^{\top}\pi_{Y \mid X} \mid g(X)] \\
        &= e_{c(X)}^{\top}\E[\pi_{Y \mid X} \mid g(X)], \numberthis \label{condsimp}
    \end{align*}
    where the last line follows due to the measurability of $e_{c(X)}$ with respect to $g(X)$. By the assumptions on $d_{\phi}$ and $d_{\varphi}$ and the fact that $h(X) = e_{c(X)}^{\top} g(X)$, we have:
    \begin{align*}
        d_{\phi}(\E[\pi_{Y \mid X} \mid g(X)], g(X)) &\ge d_{\varphi}(\E[\pi_{Y \mid X} \mid g(X)]_{c(X)}, g(X)_{c(X)}) \\
        &= d_{\varphi}(e_{c(X)}^{\top} \E[\pi_{Y \mid X} \mid g(X)], e_{c(X)}^{\top} g(X)) \\
        &= d_{\varphi}(\E[\Ind_{Y = c(X)} \mid g(X)], h(X)). \numberthis \label{almostconflb}
    \end{align*}
    The desired \eqref{conflb} then follows by noting that
    \begin{align*}
        \E[d_{\varphi}(\E[\Ind_{Y = c(X)} \mid g(X)], h(X)) \mid h(X)] &= \E[\varphi(\E[\Ind_{Y = c(X)} \mid g(X)]) \mid h(X)] - \varphi(h(X)) \\
        &\quad - \E[\varphi'(h(X)) (\E[\Ind_{Y = c(X)} \mid g(X)] - h(X)) \mid h(X)] \\
        &\ge \varphi(\E[\Ind_{Y = c(X)} \mid h(X)]) - \varphi(h(X)) \\
        &\quad - \varphi'(h(X)) (\E[\Ind_{Y = c(X)} \mid h(X)] - h(X)) \numberthis \label{finallb} 
    \end{align*}
    due to Jensen's inequality and $h(X)$-measurability of $\varphi'(h(X))$.
\end{proof}

\klbound*
\begin{proof}
    We need only show that $d_{\phi}(x, y) \ge d_{\varphi}(x_i, y_i)$ for the choices of KL divergence and logistic loss. Let us as usual identify $x$ and $y$ with vectors in $\R^k$ corresponding to distributions in $\Delta^{k-1}$. Then, fixing an $i \in [k]$, the map $f(x) = \mathrm{Bernoulli}(x_i)$ is a Markov kernel, and by the data-processing inequality we have $d_{\phi}(x, y) \ge d_{\phi}(f(x), f(y))$. Since $d_{\phi}(f(x), f(y)) = d_{\varphi}(x_i, y_i)$, the result follows.
\end{proof}

\section{Proofs for Section \ref{sec:calsharp}}\label{sec:calsharpproofs}
\rhononneg*
\begin{proof}
    From Lemma \ref{bregmandecomp}, we have:
    \begin{align*}
        \E\left[d_{\phi}(\pi_{Y \mid X}, g(X)) \mid h(X) = p\right] \ge \E\left[d_{\phi}(\E[\pi_{Y \mid X} \mid g(X)], g(X)) \mid h(X) = p\right]. \numberthis \label{conditionlb}
    \end{align*}
    Applying Lemma \ref{confbreglb} then yields the result.
\end{proof}

To state and prove a formal version of Theorem \ref{consistencyprop}, we need a result of \citet{devroye1978}. We first state the necessary kernel and distributional assumptions for this result. We state everything in the generality of $\R^d$ as that is how they are presented in \citet{devroye1978}, but for our particular result we only need the 1-D analogues.

\begin{assumption}[Kernel Assumptions]\label{kernelassumpt}
    We assume the kernel $K: \R^d \to \R$ satisfies the following:
    \begin{enumerate}
        \item $K(x) \in [0, K^*]$ for some $K^* < \infty$. 
        \item $K(x) = L(\norm{x})$ for some nonincreasing function $L$.
        \item $\lim_{u \to \infty} uL(u) = 0$.
        \item $L(u^*) > 0$ for some $u^* > 0$.
    \end{enumerate}
\end{assumption}

\begin{assumption}[Distributional Assumptions]\label{distassumpt}
    We assume the joint distribution $\pi$ of $(X, Y)$ on $\R^d \times \R$ satisfies:
    \begin{enumerate}
        \item $X$ has compact support and a density $\pi_X$ that is bounded away from 0.
        \item $\abs{Y - \E[Y \mid X]}$ is almost surely bounded.
        \item A version of $\E[Y \mid X]$ is bounded and continuous on $\supp(\pi_X)$.
    \end{enumerate}
\end{assumption}

In some cases above, we have stated slightly stronger forms of the assumptions in \citet{devroye1978} for brevity, and also because our application is relatively simple and works under these stronger assumptions. 
\begin{theorem}[Theorem 2 in \citet{devroye1978}, Paraphrased]\label{theoremdevroye}
    Let $K$ be a kernel satisfying Assumption \ref{kernelassumpt} with a sequence of bandwidths $\sigma_n$, and suppose $\pi$ satisfies Assumption \ref{distassumpt}. Then if $\sigma_n \to 0$ and $n\sigma_n^{2d}/\log n \to \infty$, we have uniform convergence of the kernel regression estimator $\hat{m}_{\sigma_n}$:
    \begin{align*}
        \esssup_{x \in \supp(\pi_X)} \abs{\hat{m}_{\sigma_n}(x) - \E[Y \mid X = x]} \to 0 \text{ a.s.} \numberthis \label{convergence}
    \end{align*}
\end{theorem}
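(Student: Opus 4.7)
The plan is to establish uniform consistency of the Nadaraya--Watson estimator via the classical numerator/denominator decomposition of $\hat{m}_{\sigma_n}(x) - m(x)$, where $m(x) := \E[Y \mid X = x]$, into a stochastic deviation term, a bias term, and a random normalizer, then to control each uniformly over almost all of $\supp(\pi_X)$ by concentration combined with a covering argument that exploits the radial, non-increasing structure of the kernel. Concretely, I would write $\hat{m}_{\sigma_n}(x) - m(x) = (R_n(x) + B_n(x))/\hat{f}_{\sigma_n}(x)$ with $R_n(x) = n^{-1}\sum_i K_{\sigma_n}(x - X_i)(Y_i - m(X_i))$, $B_n(x) = n^{-1}\sum_i K_{\sigma_n}(x - X_i)(m(X_i) - m(x))$, and $\hat{f}_{\sigma_n}(x) = n^{-1}\sum_i K_{\sigma_n}(x - X_i)$, reducing the theorem to three claims: (i) $\inf_x \hat{f}_{\sigma_n}(x)$ is eventually bounded below by a positive constant a.s., (ii) $\sup_x |R_n(x)| \to 0$ a.s., and (iii) $\sup_x |B_n(x)| \to 0$ a.s.

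For (i), the density lower bound $\pi_X \ge c_0 > 0$ together with a change of variable gives $\E[\hat{f}_{\sigma_n}(x)] \ge c_0 \int_{B(0,u^*)} K(u)\,du > 0$ uniformly for $x$ in the $u^*\sigma_n$-interior of $\supp(\pi_X)$, where the ball radius $u^*$ is supplied by kernel assumption~4. Claim~(i) then reduces to a uniform concentration bound of the same form as (ii), applied with $Y_i \equiv 1$, combined with a bias term controlled exactly as in (iii). Since the excluded boundary layer has vanishing measure, this suffices to lower-bound $\hat{f}_{\sigma_n}$ on a full-measure subset, which is all the essential supremum requires.

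For (ii), at a fixed $x$, $R_n(x)$ is an average of $n$ i.i.d., mean-zero, bounded random variables, with magnitude at most $K^* M/\sigma_n^d$ (where $M$ is the a.s.\ bound on $|Y - m(X)|$ from Assumption~\ref{distassumpt}, part~2) and variance of order $\sigma_n^{-d}$. Bernstein's inequality yields $\Prob(|R_n(x)| > t) \le 2\exp(-c\, n \sigma_n^d t^2)$ for small $t$. To pass to a uniform bound, cover $\supp(\pi_X)$ with $N_n = O(\sigma_n^{-d(1+\alpha)})$ balls of radius $r_n = \sigma_n^{1+\alpha}$ for a small $\alpha > 0$, apply Bernstein at each center and union-bound, with $\log N_n = O(\log(1/\sigma_n))$ absorbed into the rate $n\sigma_n^{2d}/\log n \to \infty$. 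Within each ball I control oscillation using the monotonicity of $L$ (kernel assumption~2): for $\|x - x'\| \le r_n$, sandwich bounds $L((\|x' - X_i\| + r_n)/\sigma_n) \le K(\cdot) \le L((\|x' - X_i\| - r_n)_+/\sigma_n)$ give $|R_n(x) - R_n(x')|$ bounded by an $o(1)$ multiple of quantities already concentrated. Borel--Cantelli then closes the argument.

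For (iii), split $B_n(x)$ according to $\|X_i - x\| \le \delta_n$ versus $\|X_i - x\| > \delta_n$. Uniform continuity of $m$ on the compact $\supp(\pi_X)$ (Assumption~\ref{distassumpt}, part~3) gives $|m(X_i) - m(x)| \le \omega_m(\delta_n)$ on the local part, yielding a uniform contribution $O(\omega_m(\delta_n))$ after normalizing by $\hat{f}_{\sigma_n}$. On the tail, kernel assumption~3 ($uL(u)\to 0$) forces $K_{\sigma_n}(x - X_i) \le \sigma_n^{-d} L(\delta_n/\sigma_n)$, which is negligible as long as $\delta_n/\sigma_n \to \infty$ slowly enough; a choice such as $\delta_n = \sqrt{\sigma_n}$ suffices. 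The main obstacle will be the oscillation control in step~(ii): the covering granularity $r_n$ must be fine enough that monotonicity of $L$ yields $o(1)$ oscillation within each ball, yet coarse enough that $\log N_n$ is absorbed by $n\sigma_n^{2d}/\log n$. Balancing this with the Bernstein rate and the boundedness $K^*/\sigma_n^d$ of individual summands is precisely where the rate $n\sigma_n^{2d}/\log n \to \infty$ (rather than the weaker $n\sigma_n^d/\log n$) is used, and is the core technical device of Devroye's argument.
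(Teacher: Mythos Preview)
The paper does not provide a proof of this statement. It is explicitly labeled as ``Theorem 2 in \citet{devroye1978}, Paraphrased'' and is simply quoted as a known external result, then invoked as a black box in the proof of Theorem~\ref{consistthm}. There is therefore no paper-proof to compare against; your proposal is an attempt to reprove Devroye's original theorem from scratch, which the authors did not set out to do.

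That said, your sketch follows the classical route for uniform strong consistency of Nadaraya--Watson estimators: the numerator/denominator decomposition into a stochastic term, a bias term, and the density estimate; Bernstein-type concentration at a net of centers combined with a covering/union-bound argument; oscillation control via the radial monotonicity of $L$; and the bias split into a local part handled by uniform continuity of $m$ and a tail part killed by the decay condition $uL(u)\to 0$. This is indeed the architecture of Devroye's argument, and your identification of the rate $n\sigma_n^{2d}/\log n \to \infty$ as the place where the covering cardinality and the Bernstein variance bound must balance is correct. The sketch is plausible at the level of a plan, though some steps (particularly the oscillation control within covering balls and the boundary handling for the density lower bound) would need considerably more care in a full write-up. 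For the purposes of this paper, none of that is required: the authors' intent is to cite the result, not to reprove it.
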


We now state and prove the formal version of Theorem \ref{consistencyprop}.
\begin{theorem}\label{consistthm}
    Let $\hat{m}_{\sigma}$ be as in \eqref{kernelreg}. Given a sequence of bandwidths $\sigma_n$, we define the plugin estimator $\hat{d}_{\varphi, n}$ as follows:
    \begin{align*}
        \hat{d}_{\varphi, n} = \frac{1}{n} \sum_{i = 1}^n \varphi\big(\hat{m}_{\sigma_n}(h(x_i))\big) - \varphi\big(h(x_i)\big) - \varphi'\big(h(x_i)\big)\big(\Ind_{y_i = c(x_i)} - h(x_i)\big). \numberthis \label{plugin}
    \end{align*}
    Suppose $K$ satisfies Assumption \ref{kernelassumpt} and $\sigma_n$ satisfies $\sigma_n \to 0$ and $n\sigma_n^{2}/\log n \to \infty$. If we additionally have that $E[\Ind_{Y = c(X)} \mid h(X) = h(x)]$ is continuous in $h(x)$ and that $h(X)$ has a density bounded away from zero, it follows that:
    \begin{align*}
        \hat{d}_{\varphi, n} \to \E[d_{\varphi}\left(\E[\Ind_{Y = c(X)} \mid h(X)], h(X)\right)] \numberthis \label{finalconverge}
    \end{align*}
    in probability.
\end{theorem}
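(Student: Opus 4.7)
The plan is to decompose both $\hat{d}_{\varphi,n}$ and the target $\E[d_{\varphi}(\E[\Ind_{Y=c(X)} \mid h(X)], h(X))]$ into three matching pieces and handle them individually. Expanding the Bregman divergence via $d_{\varphi}(u,v) = \varphi(u) - \varphi(v) - \varphi'(v)(u-v)$ and using the tower property (since $\varphi'(h(X))$ is $h(X)$-measurable), the target becomes
$$\E[\varphi(\E[\Ind_{Y=c(X)} \mid h(X)])] - \E[\varphi(h(X))] - \E[\varphi'(h(X))(\Ind_{Y=c(X)} - h(X))],$$
and the estimator is exactly the sample-average analogue obtained by replacing the inner conditional expectation with $\hat{m}_{\sigma_n}(h(x_i))$. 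It thus suffices to show that each of the three empirical terms converges in probability to its expectation.

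The second and third terms are ordinary i.i.d. sample averages of $\varphi(h(X))$ and $\varphi'(h(X))(\Ind_{Y=c(X)} - h(X))$ respectively, so the weak law of large numbers delivers convergence in probability as soon as each summand is integrable---this follows from $h(X)$ being supported on the compact set $[0,1]$, $\varphi$ being $C^1$ on the relevant interior, and boundedness of $|\Ind_{Y=c(X)} - h(X)|$. For the first term I would invoke Theorem~\ref{theoremdevroye}: the kernel hypotheses are exactly Assumption~\ref{kernelassumpt}, while Assumption~\ref{distassumpt} is met because $h(X)$ lives on compact $[0,1]$ with density bounded away from zero by hypothesis, $|\Ind_{Y=c(X)} - \E[\Ind_{Y=c(X)} \mid h(X)]| \le 1$ almost surely, and the regression function $p \mapsto \E[\Ind_{Y=c(X)} \mid h(X)=p]$ is continuous and bounded on $[0,1]$ by hypothesis. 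This yields $\esssup_p |\hat{m}_{\sigma_n}(p) - \E[\Ind_{Y=c(X)} \mid h(X)=p]| \to 0$ almost surely, and uniform continuity of $\varphi$ on $[0,1]$ upgrades it to uniform a.s. convergence of $\varphi \circ \hat{m}_{\sigma_n}$. Splitting
$$\frac{1}{n}\sum_i \varphi(\hat{m}_{\sigma_n}(h(x_i))) = \frac{1}{n}\sum_i \bigl[\varphi(\hat{m}_{\sigma_n}(h(x_i))) - \varphi(\E[\Ind_{Y=c(X)} \mid h(X) = h(x_i)])\bigr] + \frac{1}{n}\sum_i \varphi(\E[\Ind_{Y=c(X)} \mid h(X) = h(x_i)]),$$
the bracketed difference is uniformly $o(1)$ almost surely by the preceding display, and the remaining sum is a standard i.i.d. LLN term converging in probability to $\E[\varphi(\E[\Ind_{Y=c(X)} \mid h(X)])]$.

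The main obstacle I anticipate is the usual worry that $\hat{m}_{\sigma_n}$ is constructed from the same samples $x_i$ at which it is subsequently evaluated, so $\varphi(\hat{m}_{\sigma_n}(h(x_i)))$ are neither independent nor identically distributed across $i$. The uniform a.s. convergence provided by Devroye's theorem is precisely what defuses this issue, since the error bound holds simultaneously for every $p$ in the support and in particular at each $h(x_i)$. A secondary technicality is that $\varphi$ or $\varphi'$ may diverge at the endpoints of $[0,1]$ (as in the logistic case); to handle this I would argue that for $n$ large enough the uniform closeness of $\hat{m}_{\sigma_n}$ to the continuous regression function confines its values to an interior compact subset of $(0,1)$ on which $\varphi$ is uniformly continuous, or alternatively restrict first to a case where $\varphi$ is bounded on $[0,1]$ (e.g.\ Brier) and extend by a truncation argument. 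Combining the three convergence statements via Slutsky's theorem then yields \eqref{finalconverge}.
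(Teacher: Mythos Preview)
Your proposal is correct and follows essentially the same route as the paper: expand the target via the Bregman identity and the tower property into three terms, handle the second and third by the law of large numbers, and handle the first by invoking Devroye's uniform convergence (Theorem~\ref{theoremdevroye}) together with uniform continuity of $\varphi$ on the compact range of $\hat{m}_{\sigma_n}$. Your treatment is in fact slightly more explicit than the paper's---you spell out the split of the first term into a uniformly $o(1)$ piece plus an i.i.d.\ LLN piece, and you flag the endpoint issue for $\varphi$, whereas the paper simply asserts that $\hat{m}_{\sigma_n}(h(X))$ is eventually confined to a compact set and appeals to uniform continuity there.
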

\begin{proof}
    By definition and properties of conditional expectation (covariance matching, the same trick we used in the proof of Lemma \ref{bregmandecomp}), we have that:
    \begin{align*}
        \E\left[d_{\varphi}\left(\E[\Ind_{Y = c(X)} \mid h(X)], h(X)\right)\right] &= \E[\varphi\big(\E[\Ind_{Y = c(X)} \mid h(X)]\big)] - \E[\varphi\big(h(X)\big)] \\
        &\quad - \E[\varphi'(h(X)) (\Ind_{Y = c(X)} - h(X))]. \numberthis \label{bregexpanded}
    \end{align*}
    It is immediate from the law of large numbers that the second and third terms in \eqref{plugin} converge to their counterparts in \eqref{bregexpanded}. It remains to show the convergence of the first term of \eqref{plugin}.

    We observe that since $\Ind_{Y = c(X)} \le 1$, along with the fact that we have assumed $\E[\Ind_{Y = c(X)} \mid h(X) = h(x)]$ is continuous in $h(x)$ and that $h(X)$ has a density bounded away from zero, we satisfy all of the conditions of Assumption \ref{distassumpt}. Thus we have uniform convergence of $\hat{m}_{\sigma_n}$ from Theorem \ref{theoremdevroye}.
    
    It follows from uniform convergence and the boundedness of $\E[\Ind_{Y = c(X)} \mid h(X)]$ that $\hat{m}_{\sigma_n}(h(X))$ is constrained to a compact set for $n$ sufficiently large. Since $\varphi$ is uniformly continuous on a compact set, we then have that:
    \begin{align*}
        \E\left[\frac{1}{n} \sum_{i = 1}^n \varphi\big(\hat{m}_{\sigma_n}(h(x_i))\big)\right] = \E[\varphi\big(\E[\Ind_{Y = c(X)} \mid h(X)]\big)] + \delta_n \numberthis \label{deltan}
    \end{align*}
    for some $\delta_n \to 0$ as $n \to \infty$. The result \ref{finalconverge} then immediately follows.
\end{proof}

\section{Calibration-Sharpness Diagram Details}\label{sec:diagdetails}
Here we go over the fine-grained details of constructing calibration-sharpness diagrams, beginning with estimation of the conditional expectations involved.

\textbf{Estimation details.} We estimate the term $\E\left[d_{\phi}(\pi_{Y \mid X}, g(X)) \mid h(X)\right]$ in Equation \eqref{sharpgap} by replacing $(x_i, y_i)$ data point pairs with $(x_i, d_{\phi}(e_{y_i}, g(x_i)))$ (i.e. the divergence of $g(x_i)$ from the one-hot encoded label $y_i$) and then using the kernel regression estimator \eqref{kernelreg}. The same is done with $\E[\Ind_{Y = c(X)} \mid h(X)]$, except the label is replaced with pointwise accuracy, i.e. $\Ind_{y_i = c(x_i)}$. We also estimate the density $p_{h(X)}$ using a kernel density estimate (this is just the denominator of the kernel regression \eqref{kernelreg}) with the same kernel, and refer to this estimate as $\hat{p}_{h(X)}$.

With these estimates, we can estimate the pointwise sharpness gap $\rho(h(x))$. We visualize this gap as a band whose length is $\hat{p}_{h(X)}(h(x)) \rho(h(x))$ centered around the calibration curve $\E[\Ind_{Y = c(x)} \mid h(x)]$. Note that we scale the band by $\hat{p}_{h(X)}(h(x))$, as this is important for making it visually obvious where most of the sharpness error is concentrated and also allows the aggregated total sharpness gap area to better correspond to the actual total sharpness gap. 

\textbf{Visualization details.} Our kernel regression estimates for the diagrams in Figure \ref{sharpcalplots} were computed using a Gaussian kernel with $\sigma = 0.05$ (different choices of kernel and bandwidth are discussed in Appendix \ref{sec:kernelparams}). Due to the size of ImageNet-1K Val, we also replace the full kernel regression estimate of \eqref{kernelreg} with a mean over 10 estimates in which we randomly subsample the data to 5000 data points, so as to fit the entire computation into memory. The standard deviations of these estimates is shown alongside the mean calibration error scaled by 100 ($d_{\varphi, \mathrm{CAL}}$ in all of the plots); in all cases they are negligible, and we checked that subsampling loses virtually nothing. Lastly, for stylistic reasons, we opt to threshold the curve defining the bottom of the sharpness band at 0, but do not threshold the top curve at 1. In Figure \ref{sharpcalplots} and the plots of Appendix \ref{sec:kernelparams}, we do not align the Y-axes of the plots due to the large differences in sharpness gaps between some methods. However, for direct comparisons between two approaches (or a handful of similar approaches) we recommend aligning the Y-axes as we have done in Figure \ref{relplots}.

\textbf{A note on estimating conditional expectations.}  We have used the Nadaraya-Watson estimator for the conditional expectation terms in all of the above as it is the standard choice for nonparametric estimation and is known to be minimax optimal under certain technical conditions on the data distribution and the choice of kernel \citep{Tsybakov2008IntroductionTN}. However, if there is some prior knowledge of the parametric form of $\E[Y \mid g(X)]$, then one can of course achieve better estimation results using such a form, but we expect such forms would not be known in most applications. For further details regarding the asymptotic properties (bias, consistency, convergence rates) of Nadaraya-Watson style estimators we recommend \citet{Tsybakov2008IntroductionTN} and \citet{Gyrfi2002ADT}.

\section{Detailed Comparison to Prior Work on Grouping Loss}\label{sec:groupingloss}
We have focused on using the sharpness-calibration decomposition of Bregman divergences to better understand model performance, but there exist other decompositions in the proper scoring rule literature that can provide different perspectives on performance. In particular, \citet{kull2015novel} proposed a decomposition of divergences derived from proper scoring rules that splits the total divergence into calibration error (like we have in Lemma \ref{bregmandecomp}) as well as \textit{grouping loss} and \textit{irreducible loss}.

To make these notions precise, let us for simplicity consider a binary classification problem with $Y \in \{0, 1\}, X \in \R^d$ and some classifier $g: \R^d \to [0, 1]$. In this context, we recall that the calibration condition is simply $\E[Y \mid g(X)] = g(X)$. Now let us further define $C = \E[Y \mid g(X)]$ and $Q = \E[Y \mid X]$. \citet{kull2015novel} showed that, for any divergence $d_{\phi}$ derived from a proper scoring rule (to keep background discussion brief, one can just think of this as one of the Bregman divergences we consider in the main paper), we have:
\begin{align*}
    \E[d_{\phi}(Y, g(X))] = \underbrace{\E[d_{\phi}(Y, Q)]}_{\text{Irreducible Loss}} + \underbrace{\E[d_{\phi}(Q, C)]}_{\text{Grouping Loss}} + \underbrace{\E[d_{\phi}(C, g(X))]}_{\text{Calibration Error}}. \numberthis \label{groupingloss}
\end{align*}

Here the irreducible loss captures the uncertainty inherent to the problem as it does not depend on the classifier $g$, while the grouping loss captures the deviation of the conditional expectation using the predicted confidences from the true conditional expectation derived from the underlying data. Intuitively, the grouping loss measures how much information from the data is ``grouped'' together by our classifier -- for example, if $g$ is a bijection, then this term is zero since $g(X)$ generates the same $\sigma$-algebra as $X$. Our interpretation of sharpness as ``resolution'' or ``fine-grainedness'' of the model can be attributed to this grouping loss term, as also discussed in \citet{pohle2020murphy}.

The key idea in the work of \citet{perezlebel2023calibrationestimatinggroupingloss} is that this grouping loss term should be reported alongside calibration error and other metrics, and they provide similar examples to what we provide in the main paper for how a model can be calibrated and have perfect accuracy but not recover the ground-truth distribution. The main results of \citet{perezlebel2023calibrationestimatinggroupingloss} consequently regard the estimation of grouping loss as well as joint visualizations of the grouping loss and calibration error.

In the multi-class setting, the results of \citet{perezlebel2023calibrationestimatinggroupingloss} are applied directly to the confidence calibration problem. However, our results in Section \ref{sec:maintheory} tie together the full multi-class problem with the confidence calibration problem; this allows us to not reduce the entire problem to confidence calibration (where the labels are binary and the probabilities are 1-D). This is important because ultimately we care about our generalization metrics in the context of the original multi-class problem, and the purpose of Lemma \ref{confbreglb} was to show that we can simply replace the full, high-dimensional calibration error with its 1-D confidence calibration counterpart while still maintaining a reasonable interpretation. 

Additionally, by working with the sharpness gap as opposed to grouping loss and irreducible loss, we need only concern ourselves with the estimating the conditional expectation $\E[Y \mid g(X)]$ as opposed to also considering $\E[Y \mid X]$. The inclusion of the irreducible loss in the sharpness gap is not an issue; as this is shared by all models, it does not have an impact on how we rank them relative to one another. To summarize, our approach differs in that we tie the calibration-sharpness decomposition in the full calibration setting to confidence calibration as opposed to working with grouping loss in the confidence calibration setting, which leads to easier estimation and maintains important properties of the original multi-class problem.

\section{Additional Experiments With Alternative Models}\label{sec:addmrr}

\subsection{Further ImageNet Experiments}\label{sec:furtherimage}
We focused on recalibrating a pretrained ViT model on ImageNet throughout the main paper for consistency across sections; here we consider different model backbones applied to ImageNet and recreate the results of Table \ref{tab:mrr} and Figure \ref{sharpcalplots} for each of the different models. As with the ViT model, we use pretrained models from the \texttt{timm} library that have strong performance on ImageNet. In particular, we consider ResNet-50 \citep{resnet}, EfficientNet \citep{tan2019efficientnet}, and ConvNext \citep{liu2022convnet} backbones. Final tabular results are shown in Tables \ref{tab:mrr2} to \ref{tab:mrr4}, and corresponding plots are shown in Figures \ref{resnetplots} to \ref{convnextplots}.

\begin{table}[h!]
    \centering
    \begin{tabular}{|c|c|c|c|c|c|c|}
         \hline
         Model & Test Accuracy & ECE & ACE & SmoothECE & NLL & MSE/Brier Score \\
         \hline
         ResNet-50 (Baseline) & \textbf{80.33} & 8.68 & 10.48 & 8.26 & 93.98 & 29.69 \\
         \hline
         ResNet-50 + TS & \textbf{80.33} & 4.97 & 6.00 & 4.85 & \textbf{88.63} & \textbf{28.69} \\
         \hline
         ResNet-50 + HB & 78.70 & 6.44 & 15.11 & 4.77 & $\infty$ & 32.94 \\
         \hline
         ResNet-50 + IR & 79.12 & 7.25 & 8.81 & 7.20 & $\infty$ & 30.55 \\
         \hline
         ResNet-50 + MRR & \textbf{80.33} & \textbf{0.28} & \textbf{0.28} & \textbf{0.28} & 185.48 & 35.48 \\
         \hline
    \end{tabular}
    \caption{Table \ref{tab:mrr} in the main paper recreated using a pretrained ResNet-50 backbone (timm/resnet50.a1\_in1k).}
    \label{tab:mrr2}
\end{table} 

\begin{figure*}[htp]
\centering
    \subfigure[ResNet-50 (Baseline)]{\includegraphics[scale=0.25]{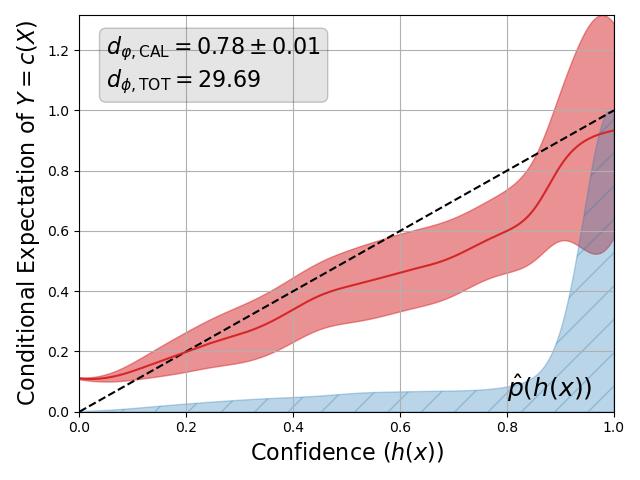}} \quad
    \subfigure[ResNet-50 + TS]{\includegraphics[scale=0.25]{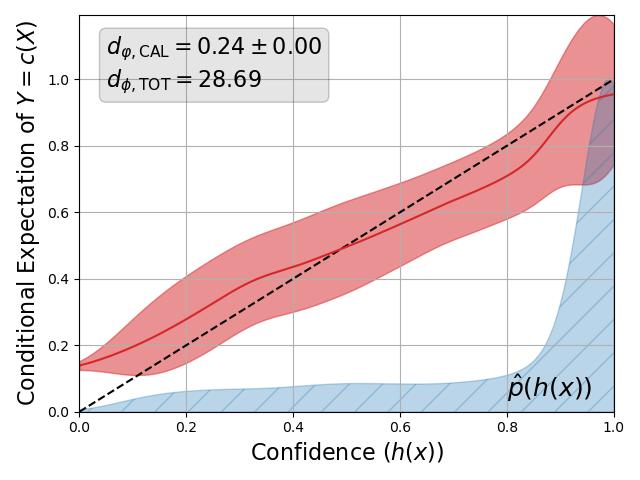}} \quad
    \subfigure[ResNet-50 + HB]{\includegraphics[scale=0.25]{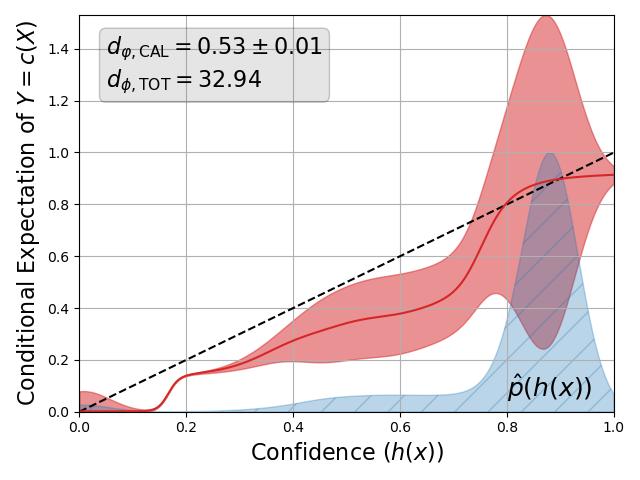}} \quad
    \subfigure[ResNet-50 + IR]{\includegraphics[scale=0.25]{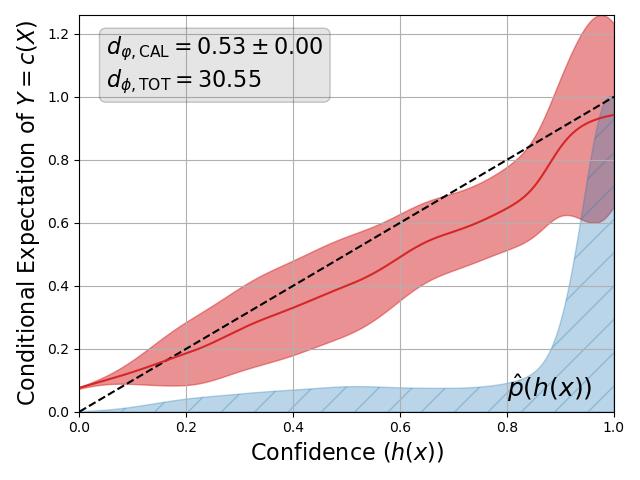}} \quad
    \subfigure[ResNet-50 + MRR]{\includegraphics[scale=0.25]{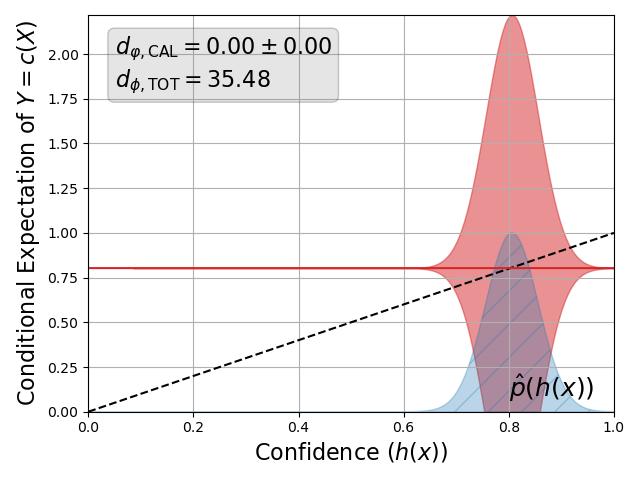}}
\caption{Calibration-sharpness diagrams using MSE/Brier score for the ResNet-50 experiments of Table \ref{tab:mrr2}.}
\label{resnetplots}
\end{figure*}

\begin{table}[h!]
    \centering
    \begin{tabular}{|c|c|c|c|c|c|c|}
         \hline
         Model & Test Accuracy & ECE & ACE & SmoothECE & NLL & MSE/Brier Score \\
         \hline
         EfficientNet (Baseline) & \textbf{81.47} & 10.66 & 9.97 & 10.67 & 85.50 & 28.24 \\
         \hline
         EfficientNet + TS & \textbf{81.47} & 3.44 & 5.85 & 3.41 & \textbf{75.17} & \textbf{26.94} \\
         \hline
         EfficientNet + HB & 77.63 & 5.44 & 12.92 & 4.58 & $\infty$ & 33.01 \\
         \hline
         EfficientNet + IR & 80.25 & 4.92 & 7.80 & 4.88 & $\infty$ & 28.75 \\
         \hline
         EfficientNet + MRR & \textbf{81.47} & \textbf{0.07} & \textbf{0.07} & \textbf{0.07} & 175.92 & 33.62 \\
         \hline
    \end{tabular}
    \caption{Table \ref{tab:mrr} in the main paper recreated using a pretrained EfficientNet backbone (timm/efficientnet\_b3.ra2\_in1k).}
    \label{tab:mrr3}
\end{table}

\begin{figure*}[htp]
\centering
    \subfigure[EfficientNet (Baseline)]{\includegraphics[scale=0.25]{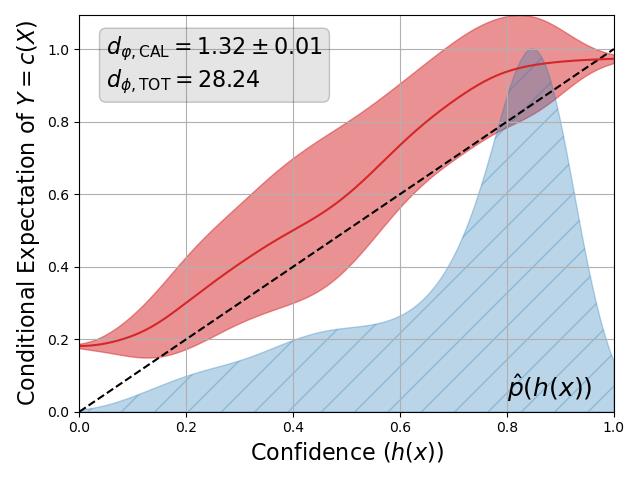}} \quad
    \subfigure[EfficientNet + TS]{\includegraphics[scale=0.25]{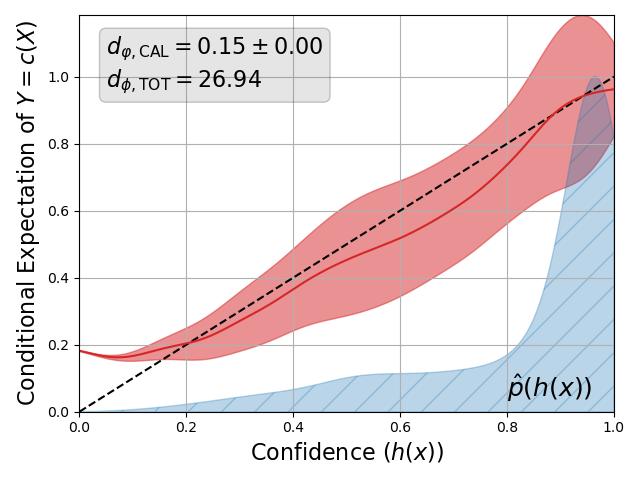}} \quad
    \subfigure[EfficientNet + HB]{\includegraphics[scale=0.25]{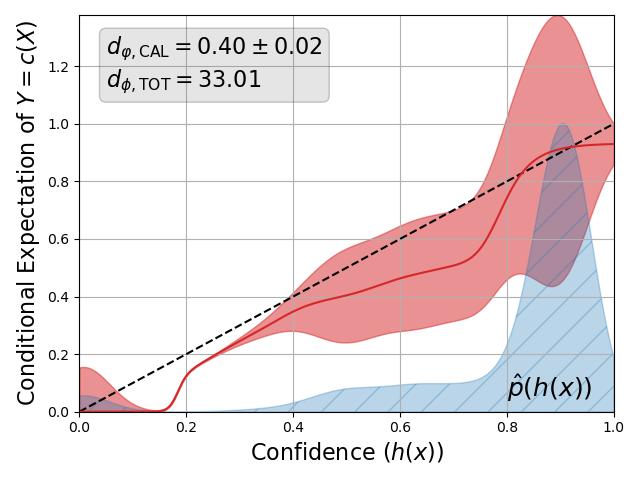}} \quad
    \subfigure[EfficientNet + IR]{\includegraphics[scale=0.25]{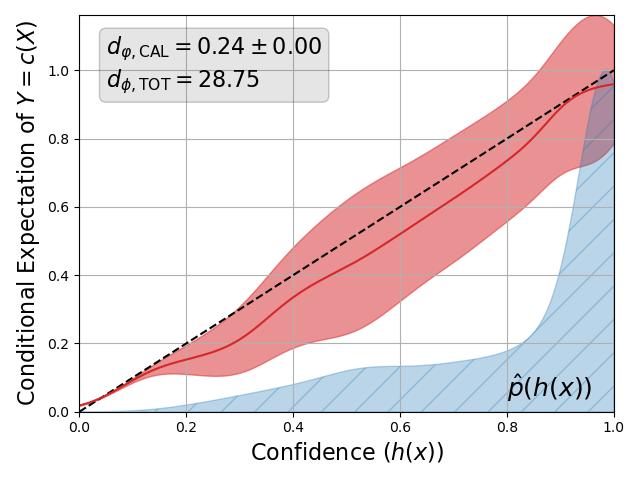}} \quad
    \subfigure[EfficientNet + MRR]{\includegraphics[scale=0.25]{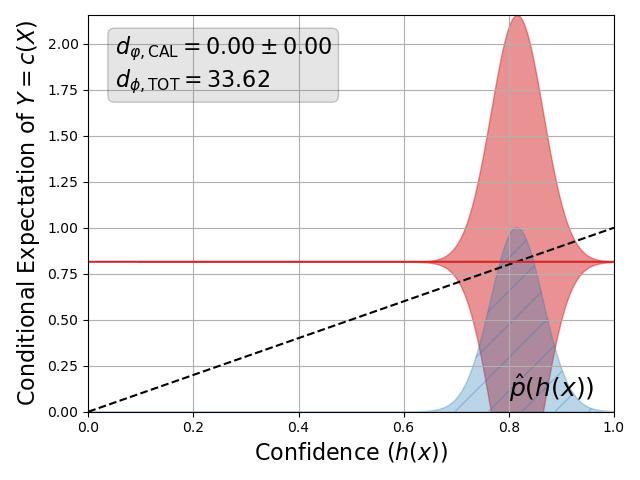}}
\caption{Calibration-sharpness diagrams using MSE/Brier score for the EfficientNet experiments of Table \ref{tab:mrr3}.}
\label{effnetplots}
\end{figure*}

\begin{table}[h!]
    \centering
    \begin{tabular}{|c|c|c|c|c|c|c|}
         \hline
         Model & Test Accuracy & ECE & ACE & SmoothECE & NLL & MSE/Brier Score \\
         \hline
         ConvNeXt (Baseline) & \textbf{84.13} & 6.93 & 6.80 & 6.94 & 68.34 & 23.78 \\
         \hline
         ConvNeXt + TS & \textbf{84.13} & 3.00 & 4.70 & 2.99 & \textbf{62.01} & \textbf{23.38} \\
         \hline
         ConvNeXt + HB & 80.96 & 3.64 & 12.60 & 4.09 & $\infty$ & 28.68 \\
         \hline
         ConvNeXt + IR & 82.72 & 4.93 & 8.29 & 4.84 & $\infty$ & 25.57 \\
         \hline
         ConvNeXt + MRR & \textbf{84.13} & \textbf{0.13} & \textbf{0.13} & \textbf{0.13} & 153.34 & 29.22 \\
         \hline
    \end{tabular}
    \caption{Table \ref{tab:mrr} in the main paper recreated using a pretrained ConvNeXt backbone (timm/convnext\_tiny.in12k\_ft\_in1k).}
    \label{tab:mrr4}
\end{table}

\begin{figure*}[htp]
\centering
    \subfigure[ConvNeXt (Baseline)]{\includegraphics[scale=0.25]{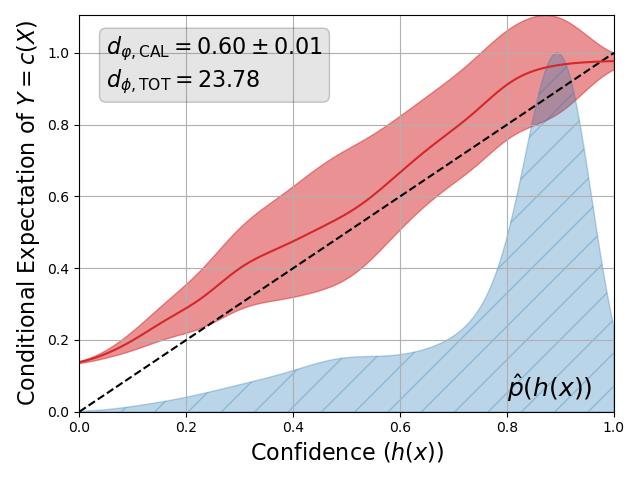}} \quad
    \subfigure[ConvNeXt + TS]{\includegraphics[scale=0.25]{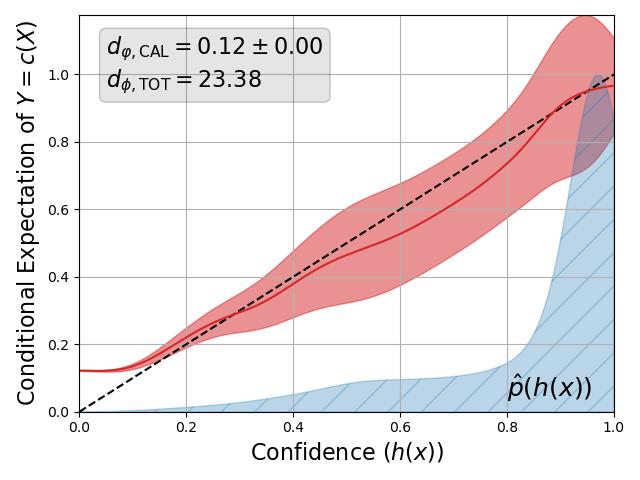}} \quad
    \subfigure[ConvNeXt + HB]{\includegraphics[scale=0.25]{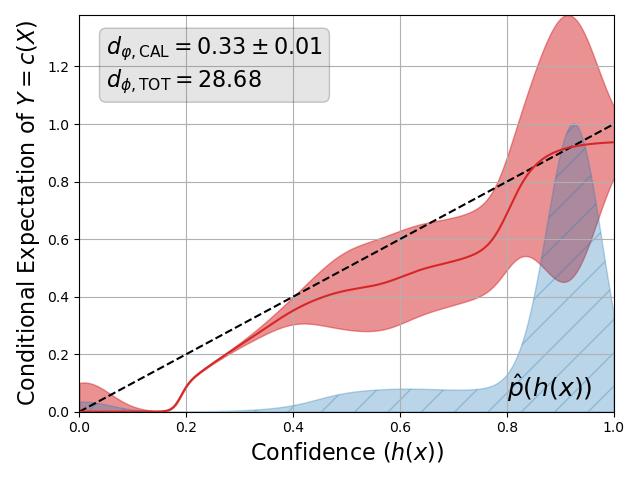}} \quad
    \subfigure[ConvNeXt + IR]{\includegraphics[scale=0.25]{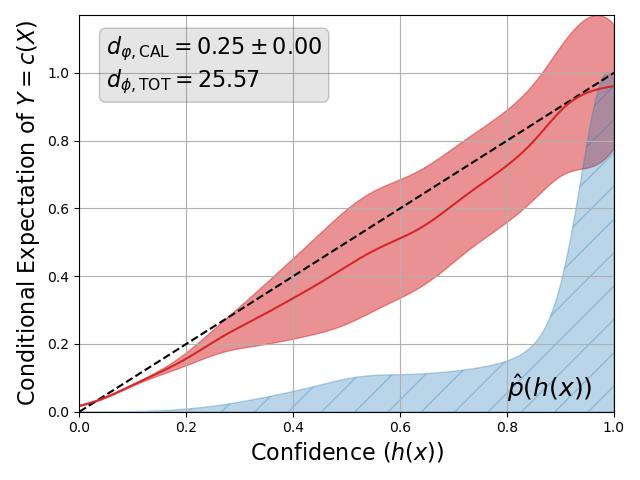}} \quad
    \subfigure[ConvNeXt + MRR]{\includegraphics[scale=0.25]{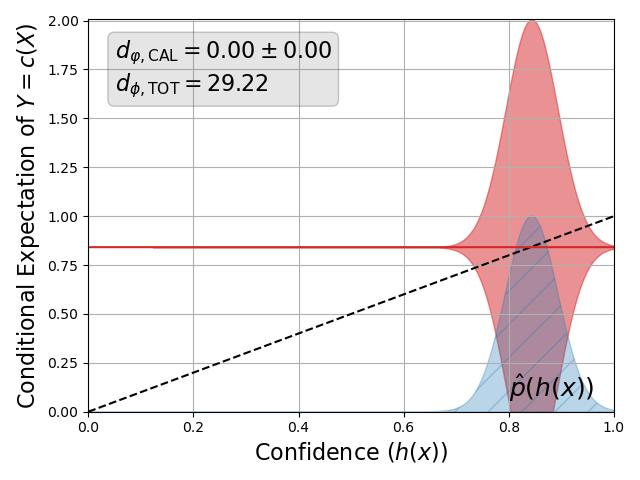}}
\caption{Calibration-sharpness diagrams using MSE/Brier score for the ConvNeXt experiments of Table \ref{tab:mrr4}.}
\label{convnextplots}
\end{figure*}

\subsection{CIFAR Experiments}\label{sec:cifar}
We used ImageNet in the main paper due to the plethora of high quality, pretrained models available via open source libraries. To show that our observations extend to other datasets, however, we also include experiments on CIFAR-10 and CIFAR-100. In particular, we use the pretrained ResNet-32 models of Yaofo Chen (\url{https://github.com/chenyaofo/pytorch-cifar-models}) and calibrate them using 20\% of the CIFAR-10 and CIFAR-100 test sets to be consistent with our ImageNet experiments. Numerical results are shown in Tables \ref{tab:cifar10} and \ref{tab:cifar100}, and plots are shown in Figures \ref{cifar10plots} and \ref{cifar100plots}. The comparison between the CIFAR-10 and CIFAR-100 results highlights the advantage of the sharpness gap bands in the calibration-sharpness diagrams: we see that while calibration does not suffer too much between CIFAR-10 and CIFAR-100, sharpness is much worse on CIFAR-100 and consequently generalization metrics (MSE/Brier Score) are also much worse.

\begin{table}[h!]
    \centering
    \begin{tabular}{|c|c|c|c|c|c|c|}
         \hline
         Model & Test Accuracy & ECE & ACE & SmoothECE & NLL & MSE/Brier Score \\
         \hline
         ResNet-32 (Baseline) & \textbf{93.46} & 4.15 & 16.01 & 3.94 & 29.21 & 10.87 \\
         \hline
         ResNet-32 + TS & \textbf{93.46} & 0.72 & 5.79 & 0.88 & \textbf{21.86} & \textbf{10.00} \\
         \hline
         ResNet-32 + HB & 93.12 & 1.45 & 15.17 & 1.46 & $\infty$ & 11.75 \\
         \hline
         ResNet-32 + IR & 93.29 & 1.33 & 5.11 & 1.34 & $\infty$ & 10.18 \\
         \hline
         ResNet-32 + MRR & \textbf{93.46} & \textbf{0.34} & \textbf{0.34} & \textbf{0.34} & 38.52 & 12.60 \\
         \hline
    \end{tabular}
    \caption{Pretrained ResNet-32 model + recalibration approaches evaluated on CIFAR-10.}
    \label{tab:cifar10}
\end{table} 

\begin{figure*}[htp]
\centering
    \subfigure[ResNet-32 (Baseline)]{\includegraphics[scale=0.25]{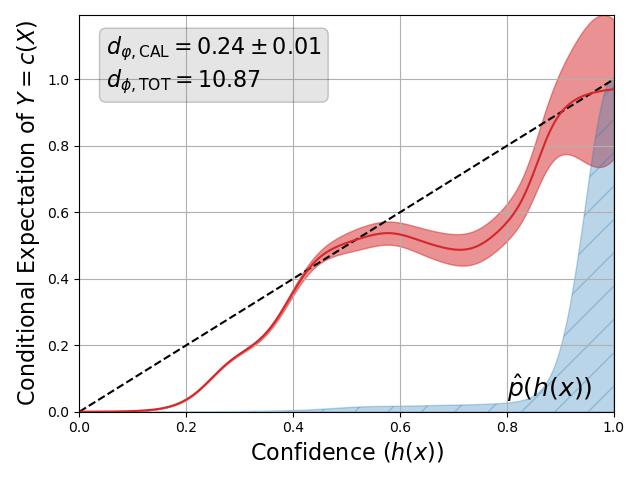}} \quad
    \subfigure[ResNet-32 + TS]{\includegraphics[scale=0.25]{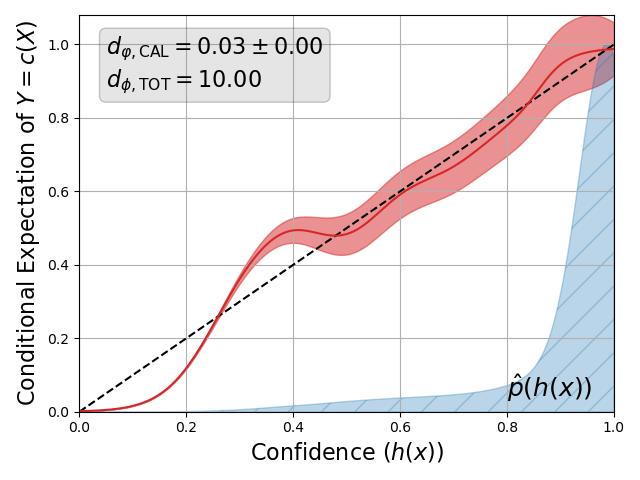}} \quad
    \subfigure[ResNet-32 + HB]{\includegraphics[scale=0.25]{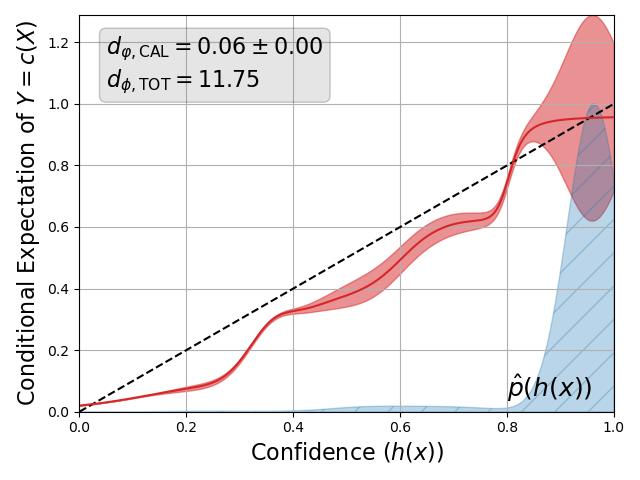}} \quad
    \subfigure[ResNet-32 + IR]{\includegraphics[scale=0.25]{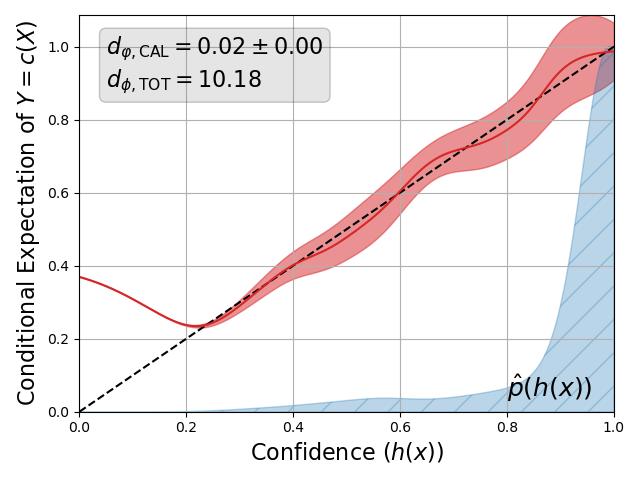}} \quad
    \subfigure[ResNet-32 + MRR]{\includegraphics[scale=0.25]{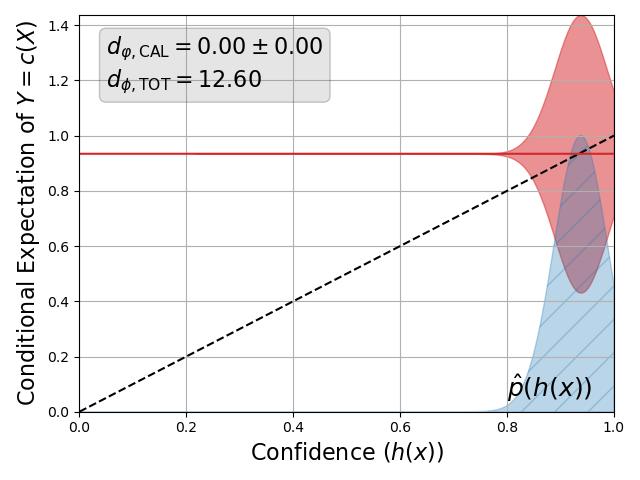}}
\caption{Calibration-sharpness diagrams using MSE/Brier score for the ResNet-32 experiments of Table \ref{tab:cifar10} on CIFAR-10.}
\label{cifar10plots}
\end{figure*}

\begin{table}[h!]
    \centering
    \begin{tabular}{|c|c|c|c|c|c|c|}
         \hline
         Model & Test Accuracy & ECE & ACE & SmoothECE & NLL & MSE/Brier Score \\
         \hline
         ResNet-32 (Baseline) & \textbf{70.20} & 13.25 & 15.71 & 13.12 & 132.31 & 44.08 \\
         \hline
         ResNet-32 + TS & \textbf{70.20} & 1.71 & 2.42 & 1.53 & \textbf{111.37} & \textbf{40.98} \\
         \hline
         ResNet-32 + HB & 65.76 & 9.56 & 18.10 & 6.97 & $\infty$ & 49.84 \\
         \hline
         ResNet-32 + IR & 68.70 & 6.81 & 7.87 & 6.80 & $\infty$ & 43.02 \\
         \hline
         ResNet-32 + MRR & \textbf{70.20} & \textbf{0.20} & \textbf{0.20} & \textbf{0.20} & 197.85 & 50.63 \\
         \hline
    \end{tabular}
    \caption{Pretrained ResNet-32 model + recalibration approaches evaluated on CIFAR-100.}
    \label{tab:cifar100}
\end{table} 

\begin{figure*}[htp]
\centering
    \subfigure[ResNet-32 (Baseline)]{\includegraphics[scale=0.25]{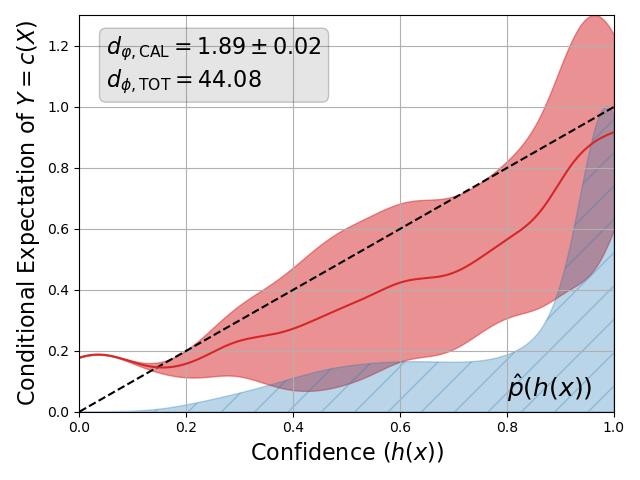}} \quad
    \subfigure[ResNet-32 + TS]{\includegraphics[scale=0.25]{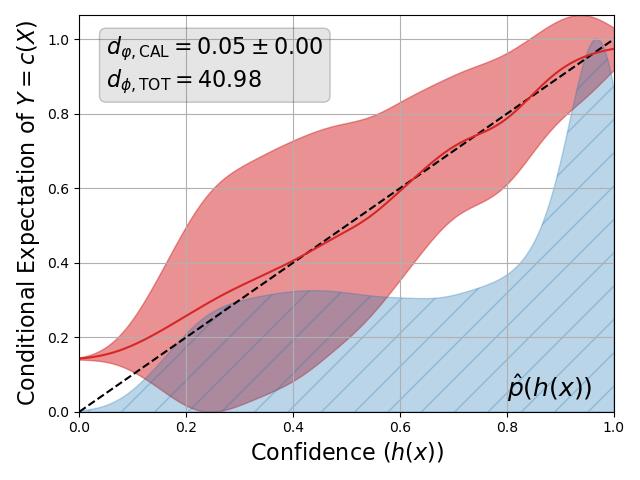}} \quad
    \subfigure[ResNet-32 + HB]{\includegraphics[scale=0.25]{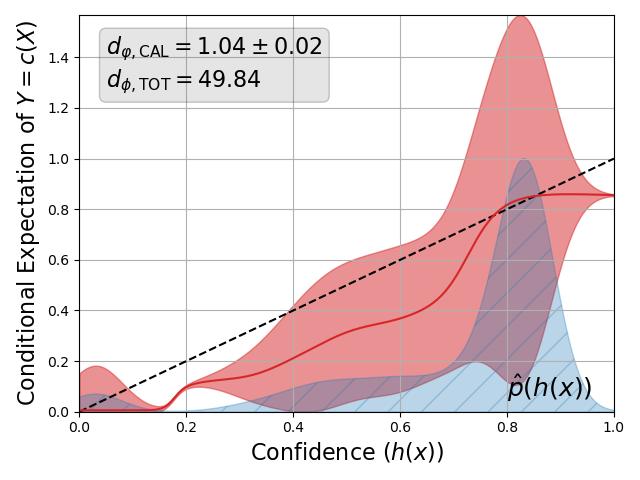}} \quad
    \subfigure[ResNet-32 + IR]{\includegraphics[scale=0.25]{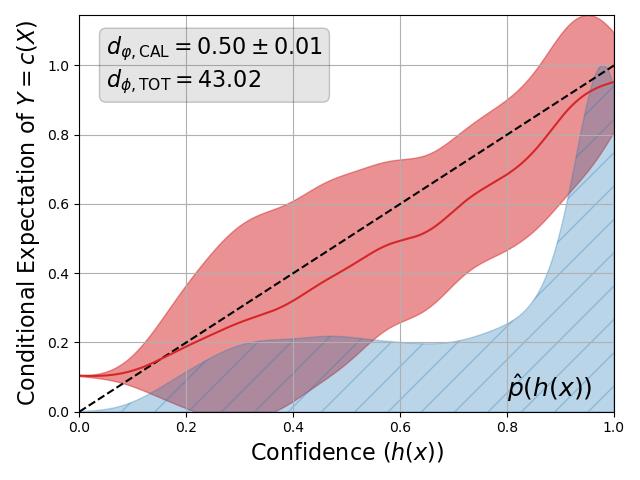}} \quad
    \subfigure[ResNet-32 + MRR]{\includegraphics[scale=0.25]{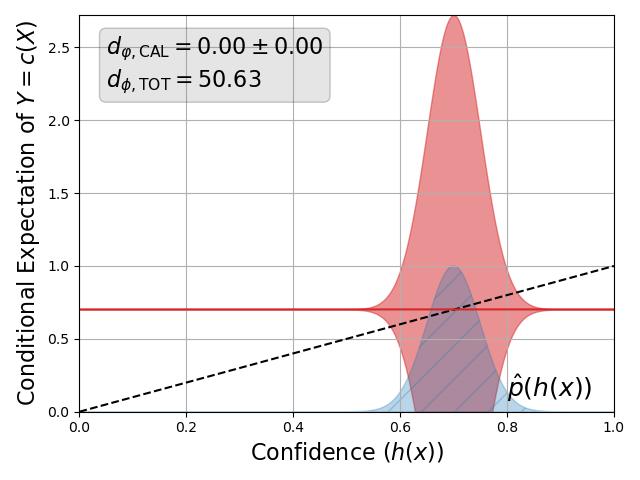}}
\caption{Calibration-sharpness diagrams using MSE/Brier score for the ResNet-32 experiments of Table \ref{tab:cifar100} on CIFAR-100.}
\label{cifar100plots}
\end{figure*}

\section{Comparison to SmoothECE}\label{sec:smoothece}
As we mentioned in Section \ref{sec:experiments}, the SmoothECE of \citet{blasiok2023smooth} is an alternative to ECE that is calculated and visualized using a similar kernel regression approach to ours. The differences between our two approaches are that \citet{blasiok2023smooth} use a particular choice of kernel and kernel bandwidth motivated by the framework of \citet{blasiok2023unifying}, and they do not include any kind of notion of sharpness in their visualization. Additionally, they show the density estimate of the confidences using the thickness of the calibration curve as opposed to an overlain density plot. 

We show a SmoothECE analogue to Figure \ref{sharpcalplots} in Figure \ref{smootheceplots}. Due to memory constraints, we plot the visualization after subsampling the test data to 5000 data points. While the SmoothECE plots still pick up issues with the MRR recalibration strategy, they make it much trickier to compare histogram binning (HB) and isotonic regression (IR) as we did in Section \ref{sec:experiments}. As we mentioned, for these cases the visualization of sharpness is crucial for understanding the tradeoffs between the methods.

\begin{figure*}[htp]
\centering
    \subfigure[ViT (Baseline)]{\includegraphics[scale=0.25]{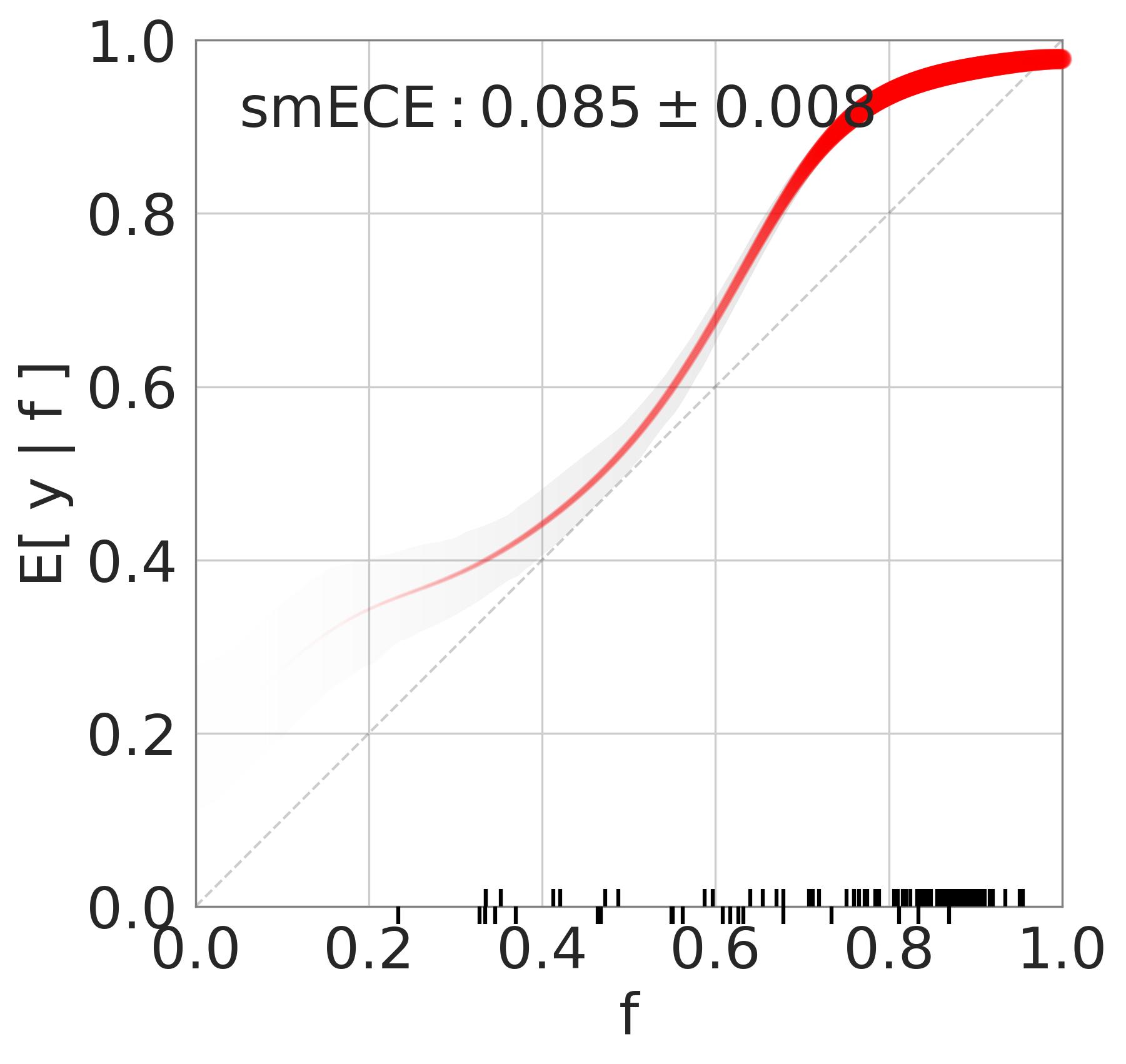}} \quad
    \subfigure[ViT + TS]{\includegraphics[scale=0.25]{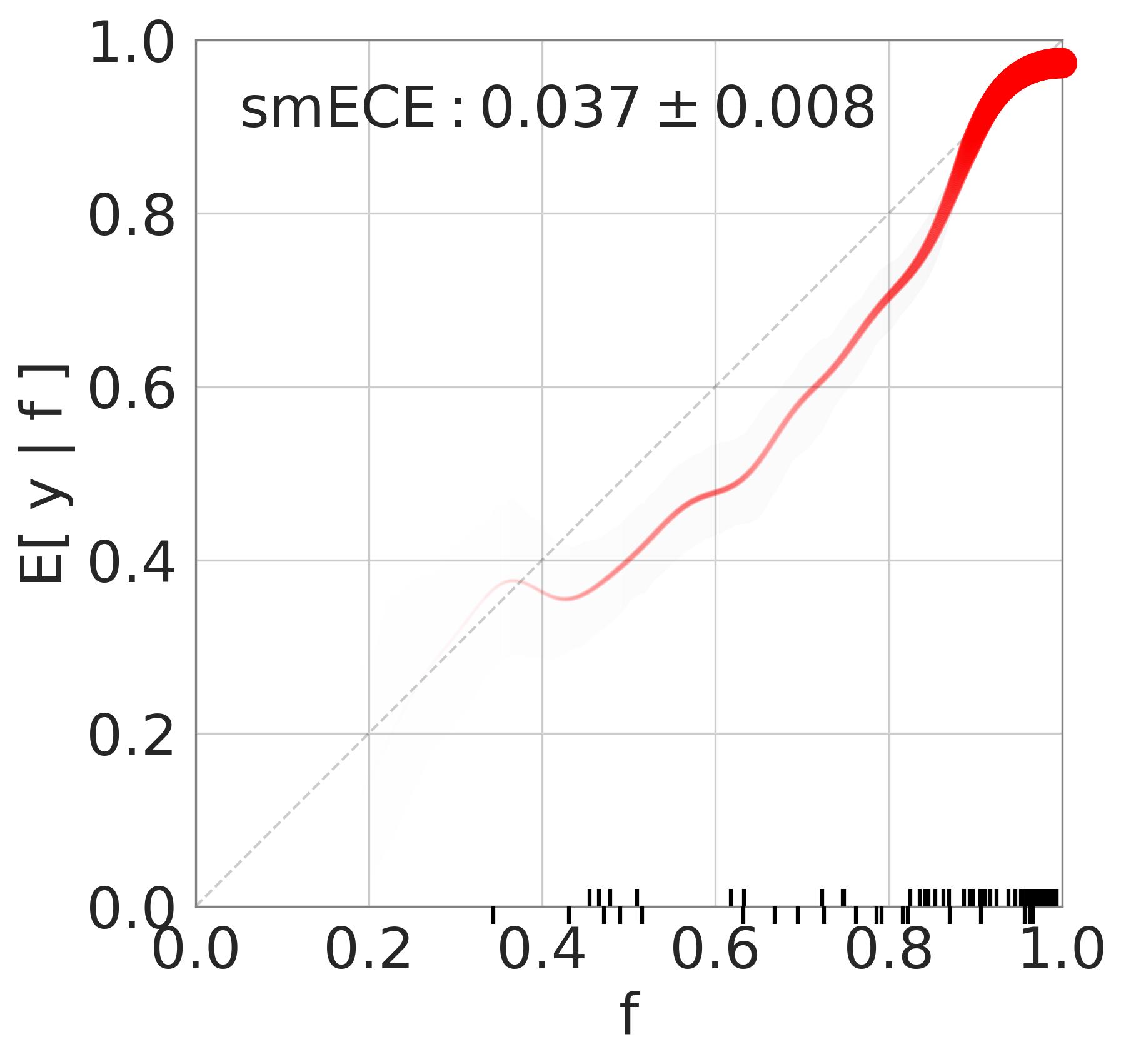}} \quad
    \subfigure[ViT + HB]{\includegraphics[scale=0.25]{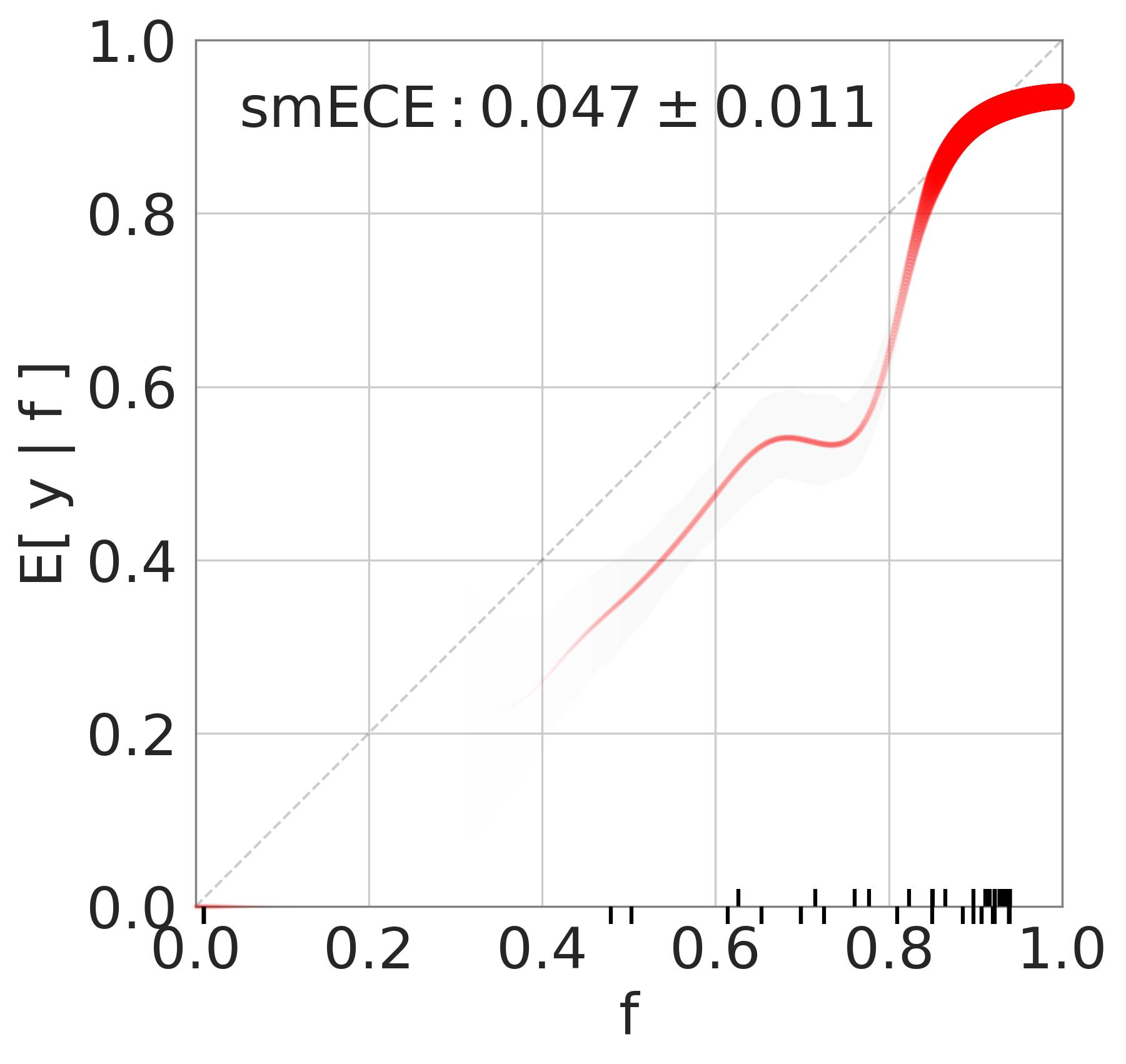}} \quad
    \subfigure[ViT + IR]{\includegraphics[scale=0.25]{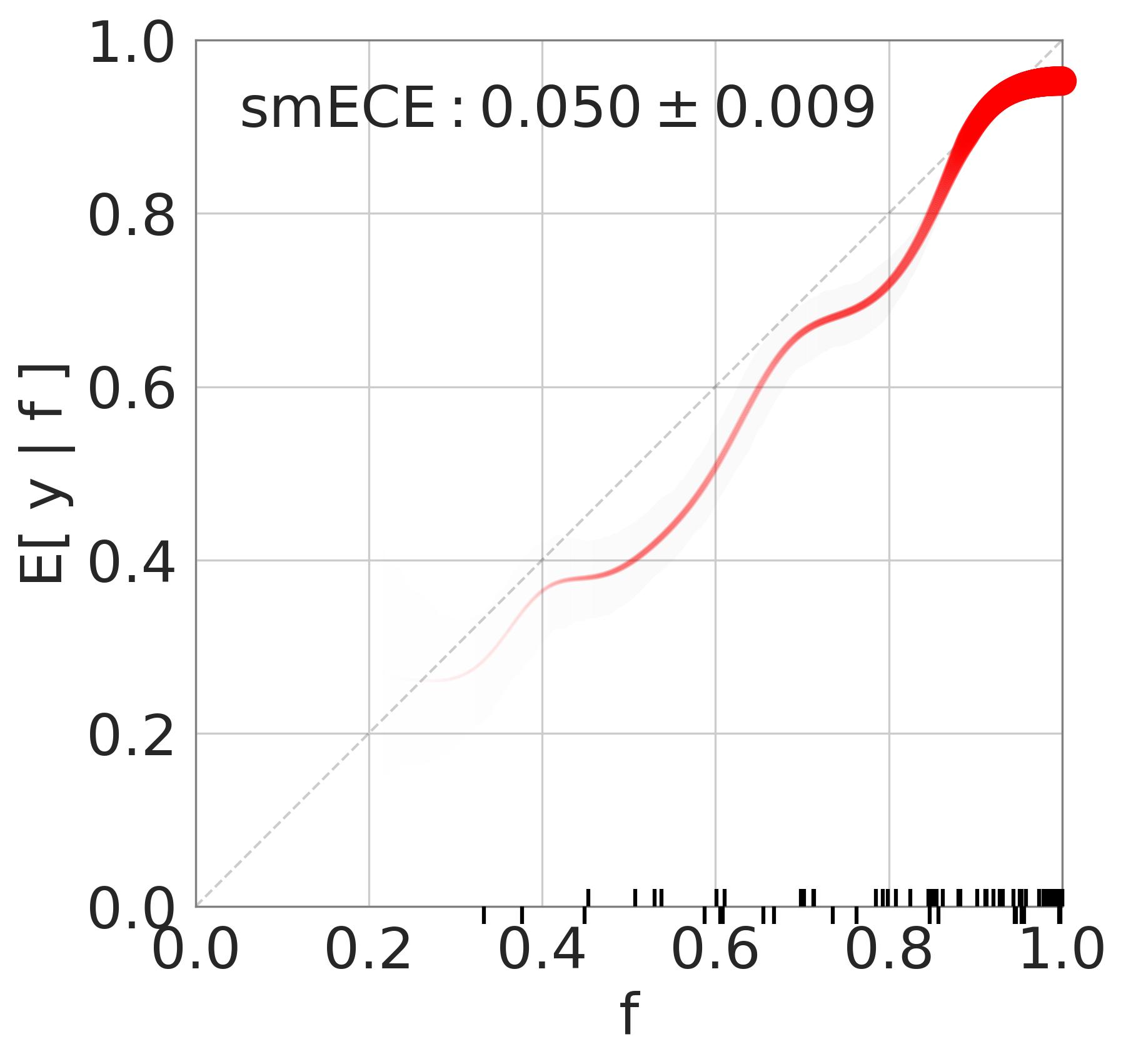}} \quad
    \subfigure[ViT + MRR]{\includegraphics[scale=0.25]{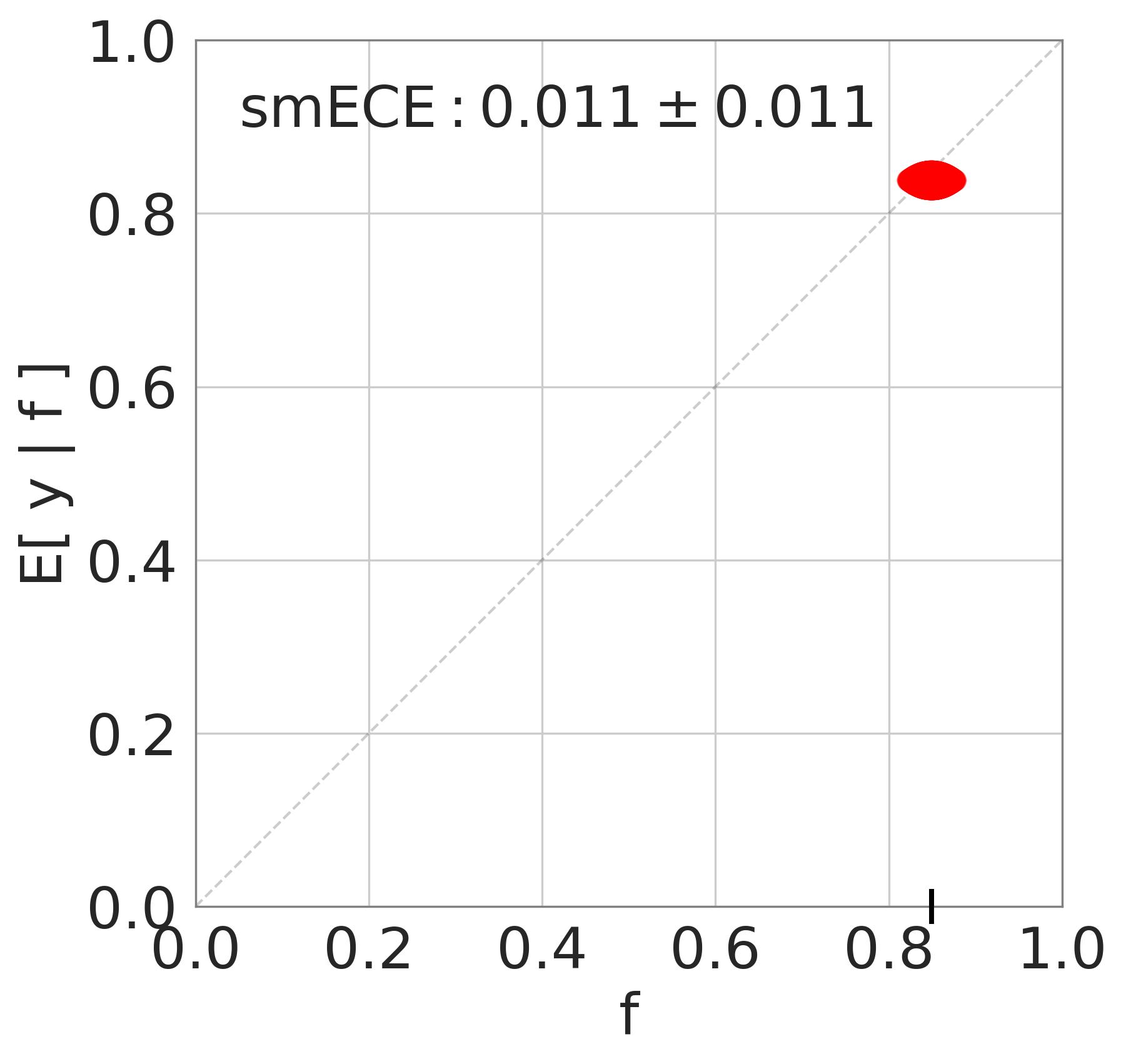}}
\caption{SmoothECE visualization analogue to Figure \ref{sharpcalplots}.}
\label{smootheceplots}
\end{figure*}

\section{Effect of Kernel Hyperparameters}\label{sec:kernelparams}
Here we examine the importance of the hyperparameters (choice of kernel and kernel bandwidth) in the calibration-sharpness diagrams of Section \ref{sec:experiments}. The main takeaway from this section is that, for a range of different (reasonable) kernel choices and bandwidths results remain consistent.

\textbf{A note on choosing kernel hyperparameters in practice.} Our choice of using a bandwidth of $0.05$ in the main paper was motivated by the recent Logit-Smoothed ECE of \citet{chidambaram2024flawed}, where taking the bandwidth to be the inverse of the number of bins used in binning estimators worked as a good analogue (and 15 to 20 bins is standard practice for datasets of the size we consider). However, in ``real life'' one can of course tune both the choice of kernel and the kernel bandwidth using held-out data. In particular, we may have a set of models and various estimated (or known) properties of these models (e.g. model A has poor performance whenever its confidences are in the range $[0.7, 0.8]$), and we can tune our kernel/bandwidth to best distinguish these properties across models. This is essentially also how we qualitatively evaluate the different figures in the following two subsections, since our goal is to produce visualizations that display the results of Table \ref{tab:mrr} in as precise a manner as possible. 

\subsection{The Impact of Kernel Bandwidth}
We first consider the same Gaussian kernel used in Section \ref{sec:experiments} but vary the bandwidth between $0.01$ and $0.25$. We find that values that are less than $0.01$ lead to significant under-smoothing, and values that are larger than $0.1$ lead to over-smoothing. Values in the range $[0.01, 0.05]$ appear to work best. We visualize the choices of $0.01, 0.03, 0.1$, and $0.25$ in Figures \ref{sharpcalplots01} to \ref{sharpcalplots25} respectively.

We note that when we over-smooth, the ranking of calibration error can change across methods (as shown in Figures \ref{sharpcalplots1} and \ref{sharpcalplots25}). This is similar to known issues with reporting binned ECE in which only a small number of bins are used. 
\begin{figure*}[htp]
\centering
    \subfigure[ViT (Baseline)]{\includegraphics[scale=0.25]{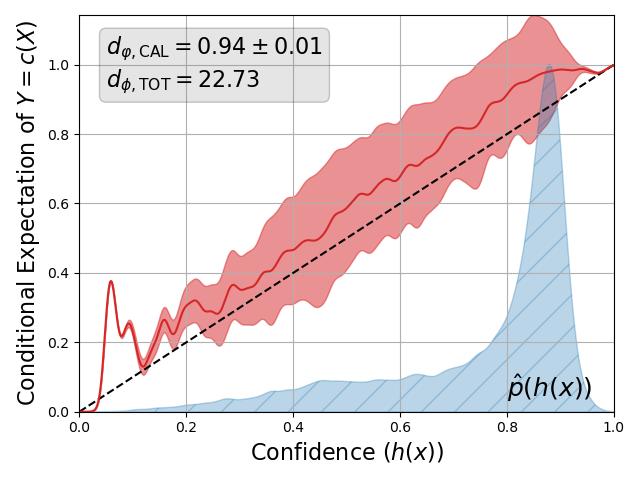}} \quad
    \subfigure[ViT + TS]{\includegraphics[scale=0.25]{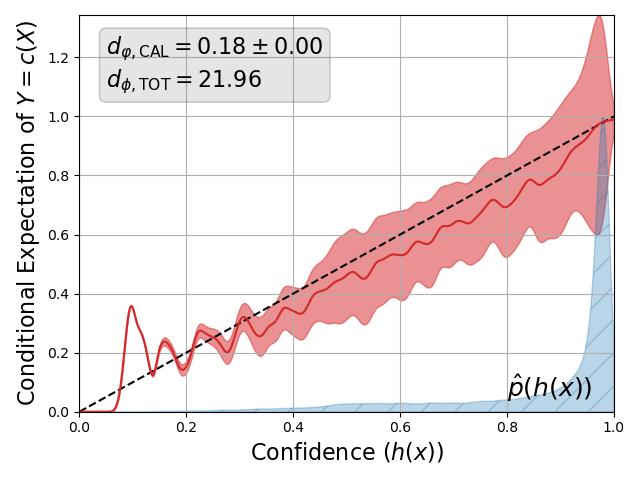}} \quad
    \subfigure[ViT + HB]{\includegraphics[scale=0.25]{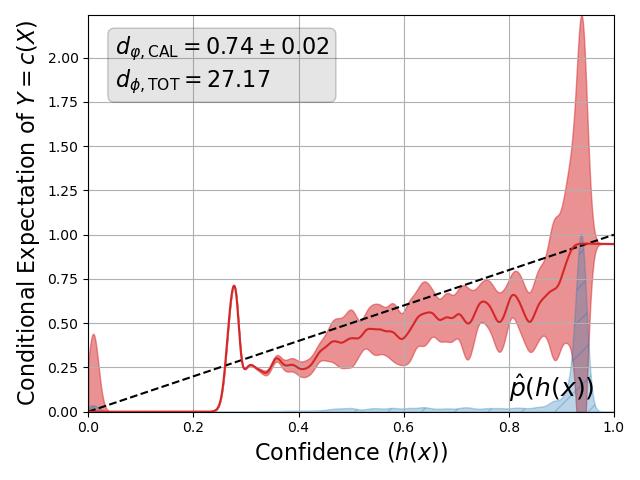}} \quad
    \subfigure[ViT + IR]{\includegraphics[scale=0.25]{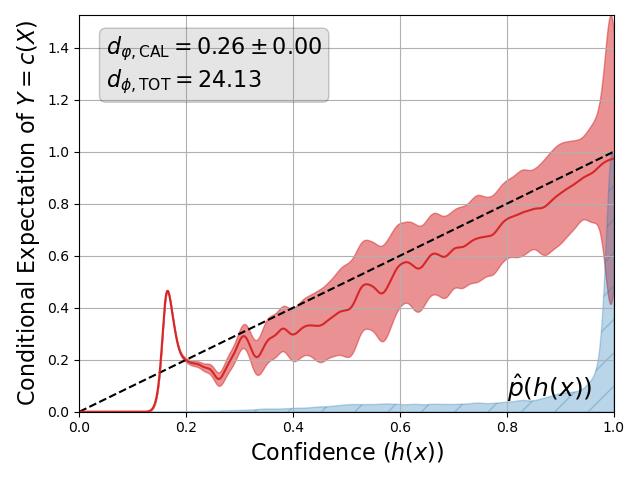}} \quad
    \subfigure[ViT + MRR]{\includegraphics[scale=0.25]{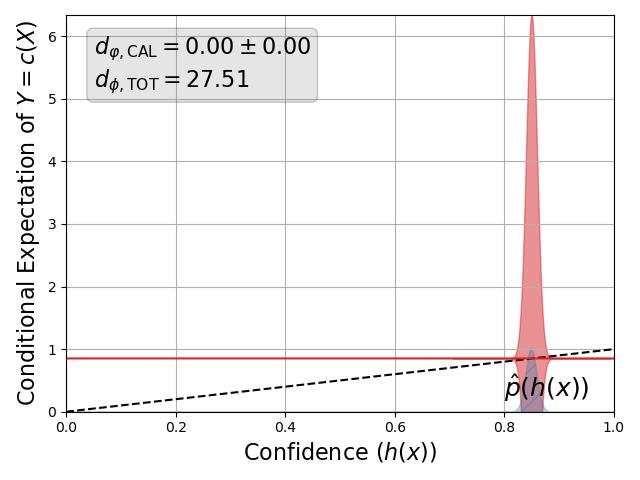}}
\caption{The experiments of Figure \ref{sharpcalplots} but using a bandwidth parameter of 0.01.}
\label{sharpcalplots01}
\end{figure*}

\begin{figure*}[htp]
\centering
    \subfigure[ViT (Baseline)]{\includegraphics[scale=0.25]{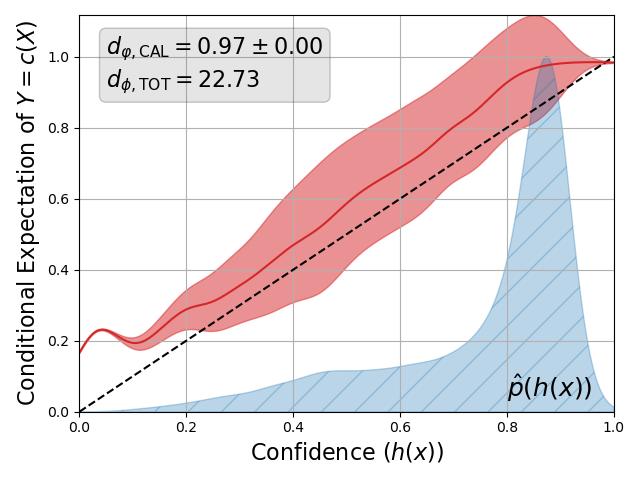}} \quad
    \subfigure[ViT + TS]{\includegraphics[scale=0.25]{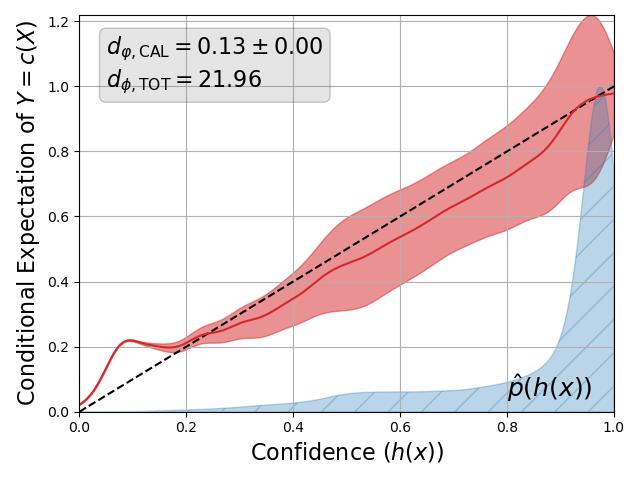}} \quad
    \subfigure[ViT + HB]{\includegraphics[scale=0.25]{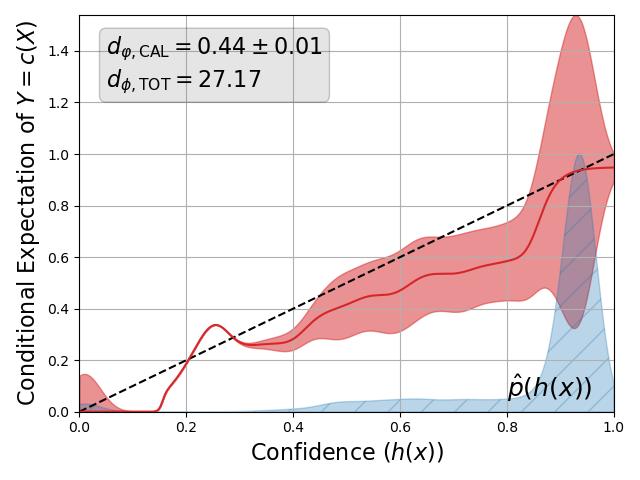}} \quad
    \subfigure[ViT + IR]{\includegraphics[scale=0.25]{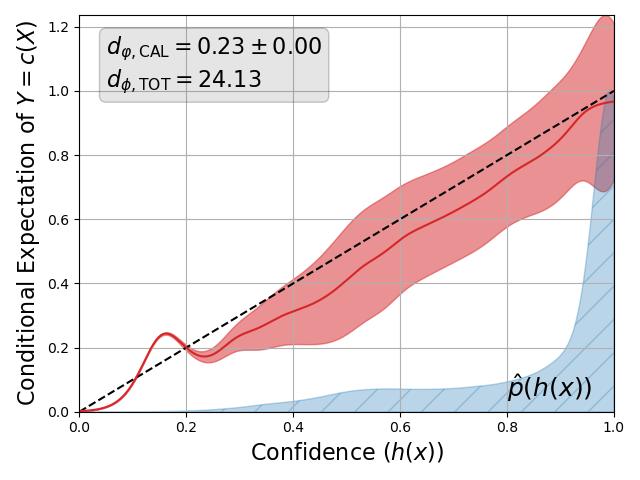}} \quad
    \subfigure[ViT + MRR]{\includegraphics[scale=0.25]{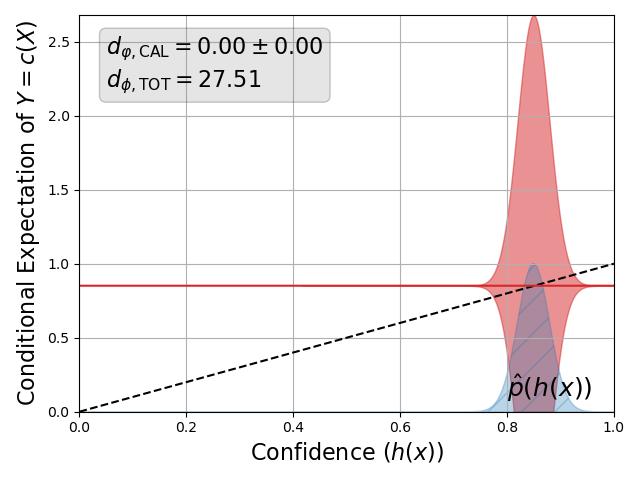}}
\caption{The experiments of Figure \ref{sharpcalplots} but using a bandwidth parameter of 0.03.}
\label{sharpcalplots03}
\end{figure*}

\begin{figure*}[htp]
\centering
    \subfigure[ViT (Baseline)]{\includegraphics[scale=0.25]{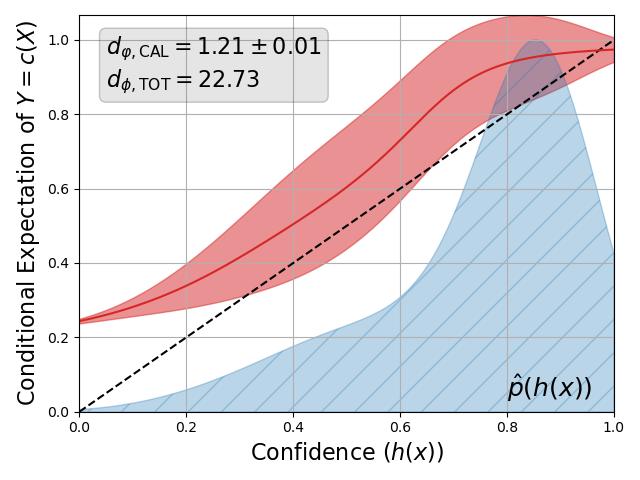}} \quad
    \subfigure[ViT + TS]{\includegraphics[scale=0.25]{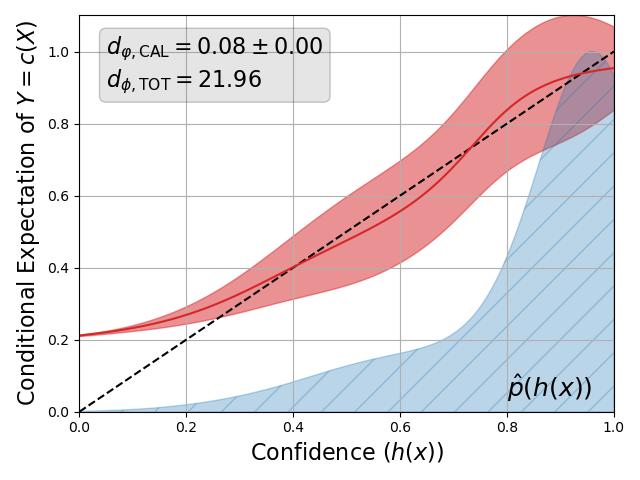}} \quad
    \subfigure[ViT + HB]{\includegraphics[scale=0.25]{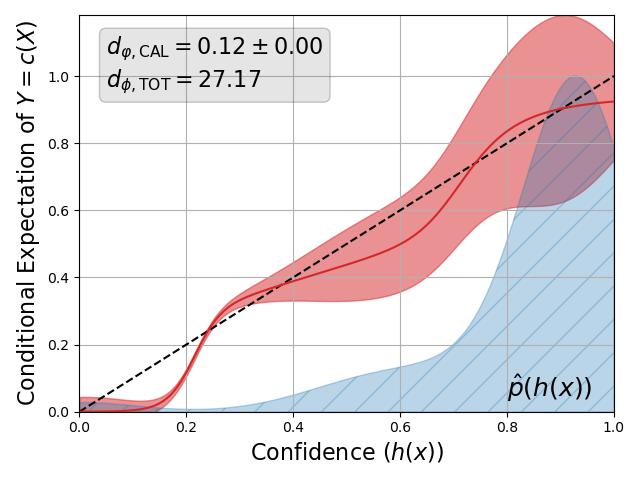}} \quad
    \subfigure[ViT + IR]{\includegraphics[scale=0.25]{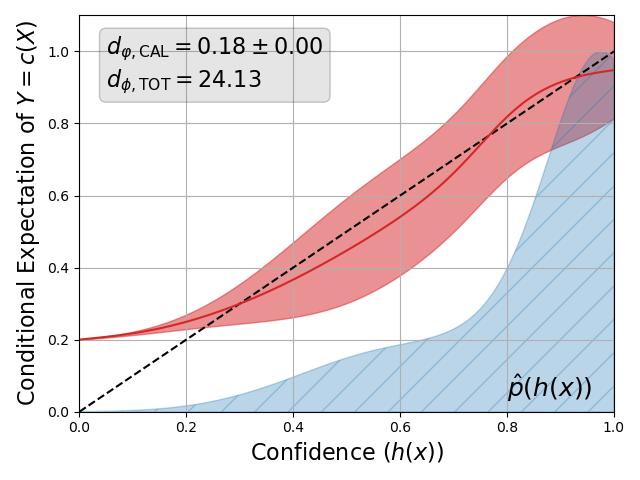}} \quad
    \subfigure[ViT + MRR]{\includegraphics[scale=0.25]{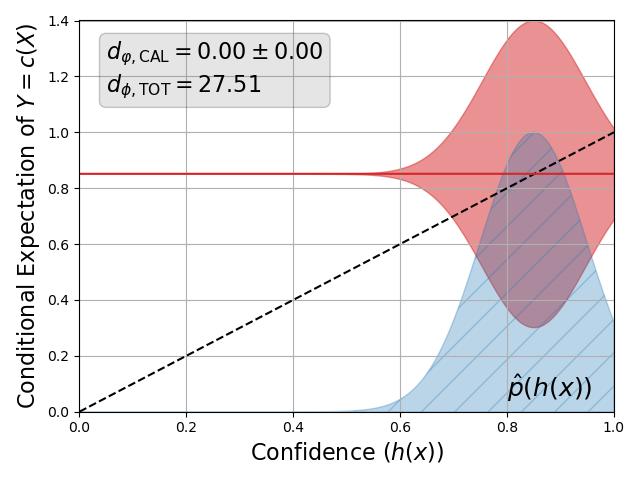}}
\caption{The experiments of Figure \ref{sharpcalplots} but using a bandwidth parameter of 0.1.}
\label{sharpcalplots1}
\end{figure*}

\begin{figure*}[htp]
\centering
    \subfigure[ViT (Baseline)]{\includegraphics[scale=0.25]{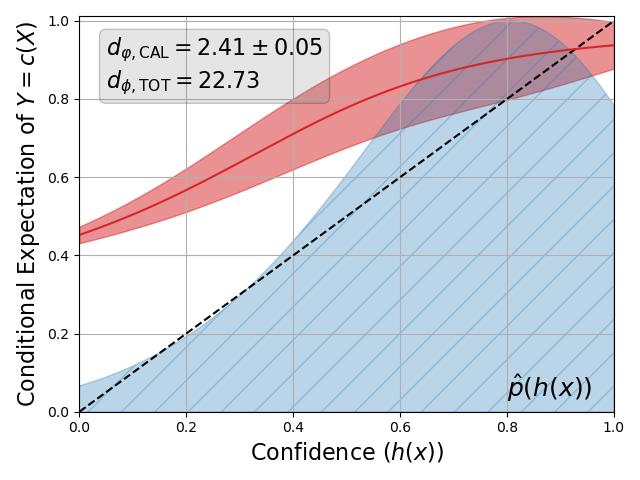}} \quad
    \subfigure[ViT + TS]{\includegraphics[scale=0.25]{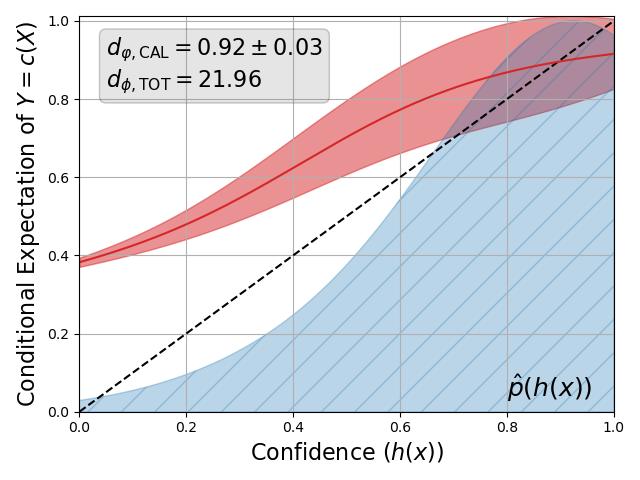}} \quad
    \subfigure[ViT + HB]{\includegraphics[scale=0.25]{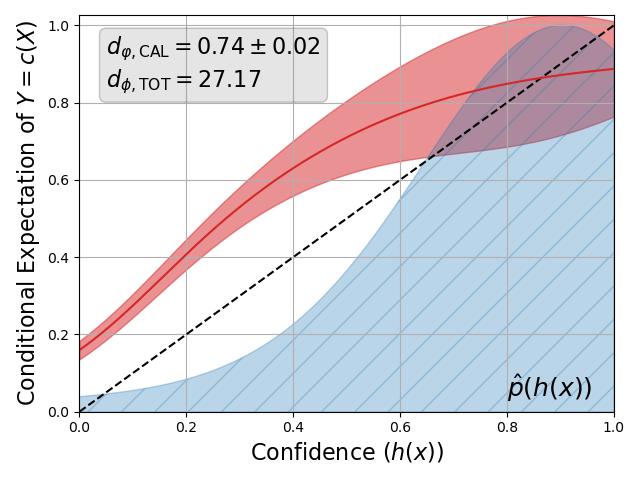}} \quad
    \subfigure[ViT + IR]{\includegraphics[scale=0.25]{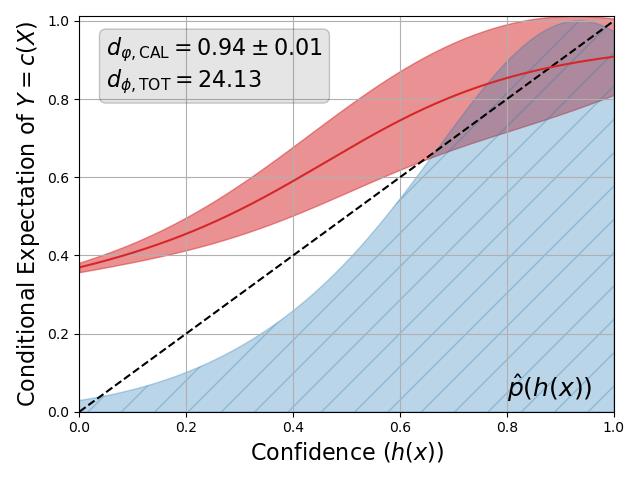}} \quad
    \subfigure[ViT + MRR]{\includegraphics[scale=0.25]{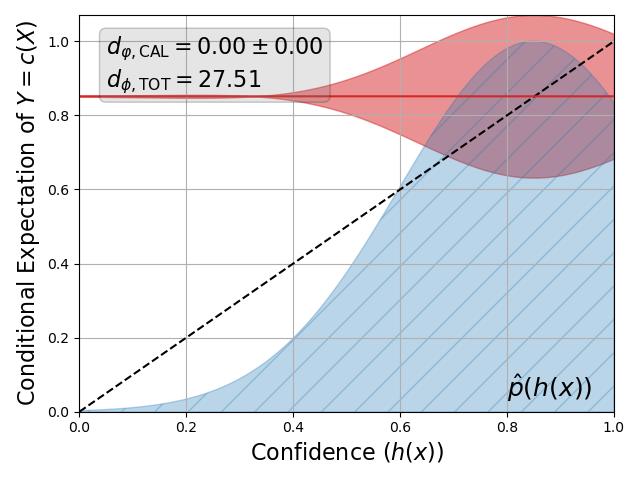}}
\caption{The experiments of Figure \ref{sharpcalplots} but using a bandwidth parameter of 0.25.}
\label{sharpcalplots25}
\end{figure*}

\subsection{The Impact of Kernel Choice}
We also consider the impact of changing the kernel used for the visualizations of Figure \ref{sharpcalplots}. In particular, we consider the popular choice of the Epanechnikov kernel, which is $K(u) = (3(1 - u^2)/4) \Ind_{\abs{u} \le 1}$ in the 1D setting. Results using this kernel with a bandwidth of 0.03 and 0.05 are shown in Figures \ref{sharpcalplotsepan03} and \ref{sharpcalplotsepan05}.

As can be seen from the results, changing the kernel does not change the ranking of the recalibration methods, although it does impact the actual reported calibration error. We find that the range of kernel bandwidths suitable in the Gaussian case also applies here.
\begin{figure*}[htp]
\centering
    \subfigure[ViT (Baseline)]{\includegraphics[scale=0.25]{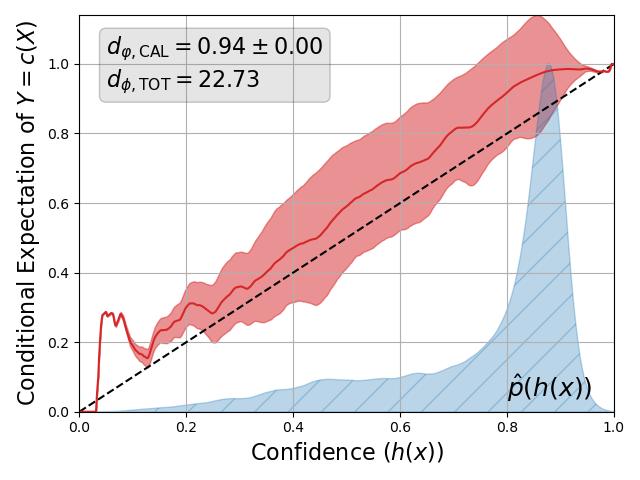}} \quad
    \subfigure[ViT + TS]{\includegraphics[scale=0.25]{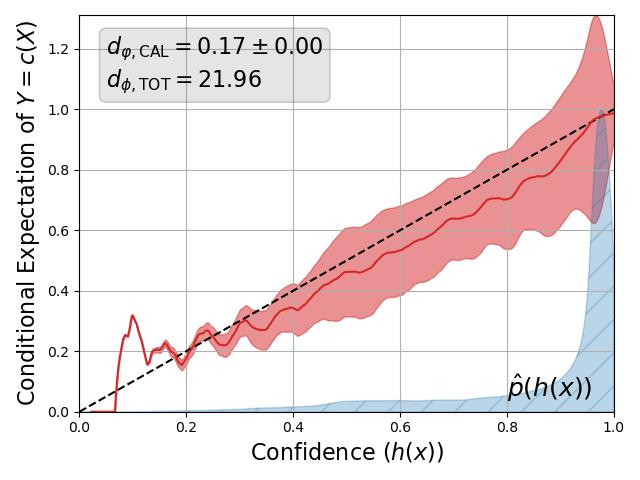}} \quad
    \subfigure[ViT + HB]{\includegraphics[scale=0.25]{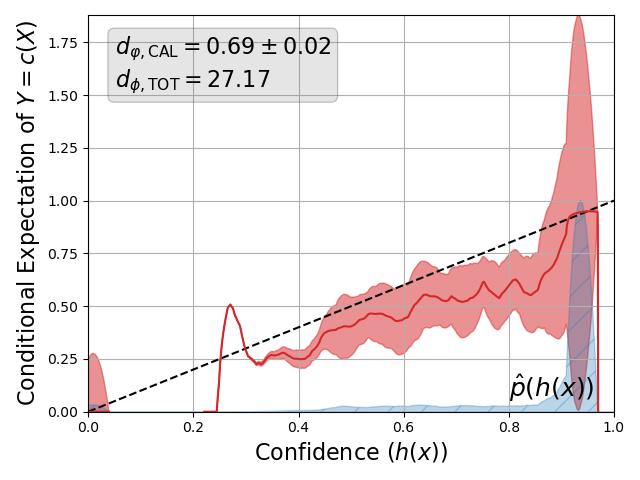}} \quad
    \subfigure[ViT + IR]{\includegraphics[scale=0.25]{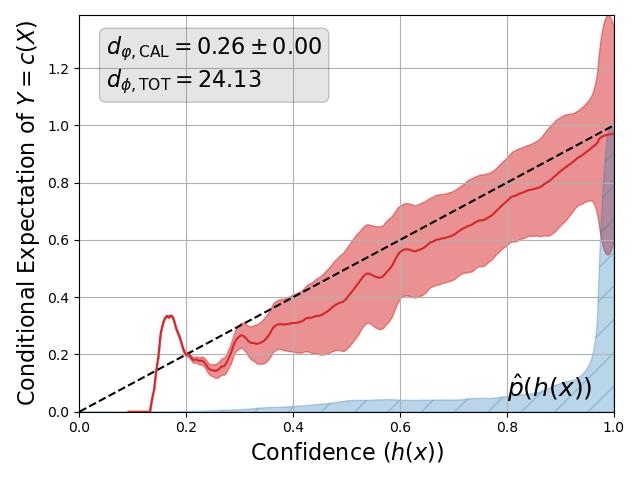}} \quad
    \subfigure[ViT + MRR]{\includegraphics[scale=0.25]{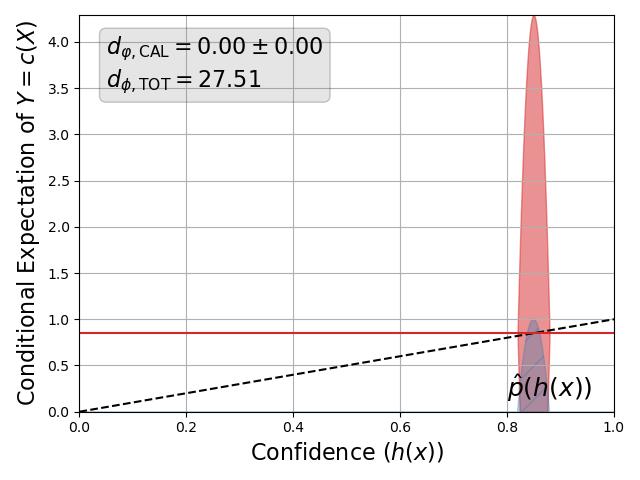}}
\caption{Epanechnikov kernel using a bandwidth parameter of 0.03.}
\label{sharpcalplotsepan03}
\end{figure*}

\begin{figure*}[htp]
\centering
    \subfigure[ViT (Baseline)]{\includegraphics[scale=0.25]{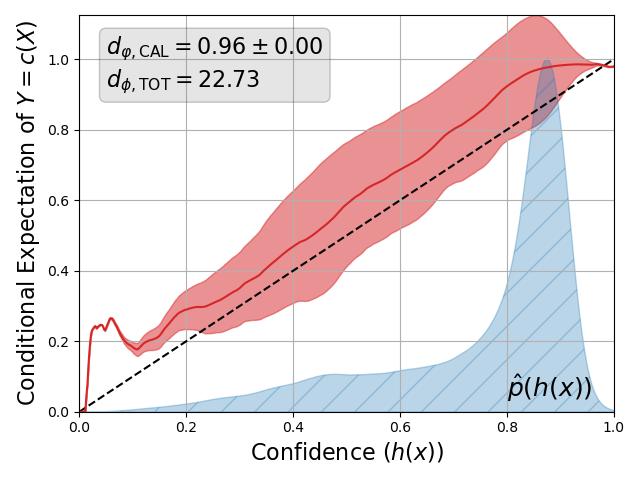}} \quad
    \subfigure[ViT + TS]{\includegraphics[scale=0.25]{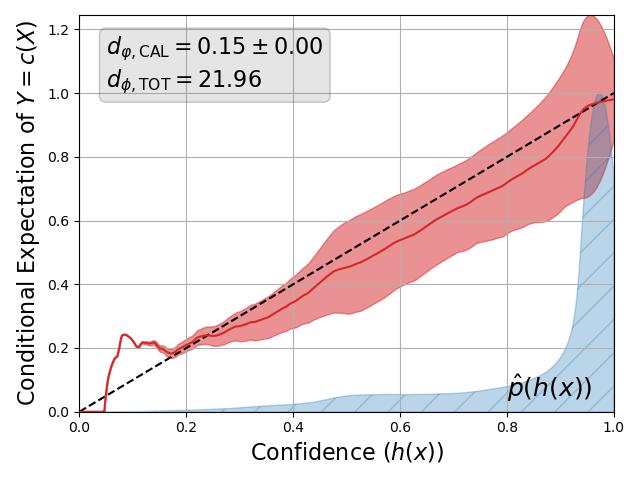}} \quad
    \subfigure[ViT + HB]{\includegraphics[scale=0.25]{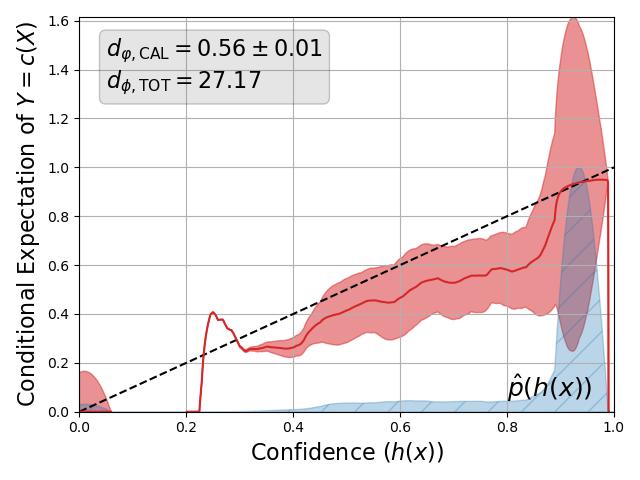}} \quad
    \subfigure[ViT + IR]{\includegraphics[scale=0.25]{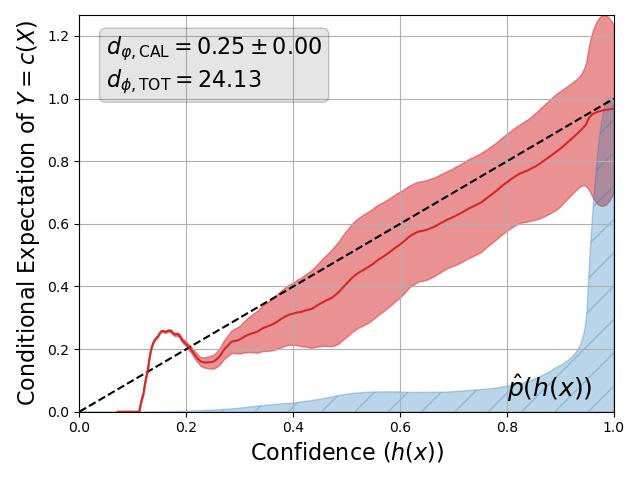}} \quad
    \subfigure[ViT + MRR]{\includegraphics[scale=0.25]{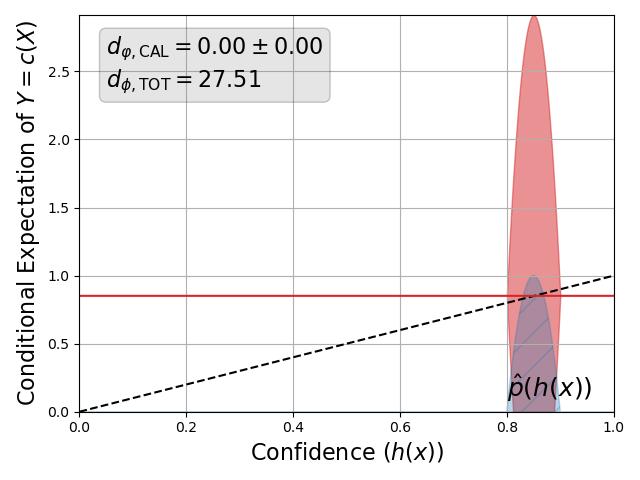}}
\caption{Epanechnikov kernel using a bandwidth parameter of 0.05.}
\label{sharpcalplotsepan05}
\end{figure*}

\end{document}